\documentclass{article}


\usepackage[preprint]{neurips_2025}



\usepackage{pgfplots}
\usetikzlibrary{external}
\usepackage{bigints}
\usepackage[utf8]{inputenc} 
\usepackage[T1]{fontenc}    
\usepackage{hyperref}       
\usepackage{url}            
\usepackage{booktabs}       
\usepackage{amsfonts}       
\usepackage{algorithm} 
\usepackage{algorithmic}
\usepackage{nicefrac}  
\usepackage{microtype}      
\usepackage{xcolor}         
\usepackage{amsmath}
\usepackage{amssymb}
\usepackage{mathtools}
\usepackage{amsthm}
\usepackage{subfig}
\usepackage{wrapfig}
\usepackage[square,numbers]{natbib}
\usepackage{mymacros}
\usepackage{tikz}
\usetikzlibrary{arrows.meta} 
\usepackage[capitalize,noabbrev]{cleveref}

\usepackage{thmtools} 
\usepackage{thm-restate}
\usepackage{enumitem}

\pgfplotsset{compat=newest}
\usepgfplotslibrary{groupplots}
\usepackage{comment}

\usepackage{enumitem}

\title{Thompson Sampling-like Algorithms\\ for Stochastic Rising Bandits}

%

\author{%
  Marco Fiandri 
  \qquad 
  Alberto Maria Metelli
  \qquad
  Francesco Trovò
  \\
  DEIB, Politecnico di Milano \\
  Piazza Leonardo Da Vinci, 32, Milan, Italy \\ \texttt{marco.fiandri@mail.polimi.it} \\ \texttt{\{albertomaria.metelli,francesco1.trovo\}@polimi.it}
}

\allowdisplaybreaks[4]

\begin{document}
\setlength{\abovedisplayskip}{4pt}
\setlength{\belowdisplayskip}{4pt}
\setlength{\textfloatsep}{12pt}

\maketitle

\begin{abstract}
 \emph{Stochastic rising rested bandit} (SRRB) is a setting where the arms' expected rewards increase as they are pulled. It models scenarios in which the performances of the different options grow as an effect of an underlying learning process (e.g., online model selection). Even if the bandit literature provides specifically crafted algorithms based on upper-confidence bounds for such a setting, no study about \emph{Thompson sampling} (TS)-like algorithms has been performed so far. The strong regularity of the expected rewards in the SRRB setting suggests that specific instances may be tackled effectively using adapted and sliding-window TS approaches. This work provides novel regret analyses for such algorithms in SRRBs, highlighting the challenges and providing new technical tools of independent interest. Our results allow us to identify under which assumptions TS-like algorithms succeed in achieving sublinear regret and which properties of the environment govern the complexity of the regret minimization problem when approached with TS.
Furthermore, we provide a regret lower bound based on a complexity index we introduce. Finally, we conduct numerical simulations comparing TS-like algorithms with state-of-the-art approaches for SRRBs in synthetic and real-world settings.
\end{abstract}

\section{Introduction}

Traditional \emph{multi-armed bandits} (MABs) models~\citet{bubeck2012regret,lattimore2020bandit} consider \emph{static} environments, assuming arms with expected rewards that do not change during the learning process. However, many real-world applications present a more challenging scenario where the rewards associated with each arm \emph{dynamically} evolve depending on time and/or on the number of times an arm has been played.
In particular, two scenarios have been analyzed in the literature: \emph{restless} and \emph{rested} bandits.
While the restless MAB setting assumes that the arms' expected reward changes as an effect of {nature}, in the \emph{rested} MAB setting~\cite{Tekin2012rested}, the arms' evolution is triggered by their pull.
A large variety of \emph{restless} settings has been analyzed in the past under the name of  \emph{non-stationary} bandits, the most studied being the abruptly changing~\cite{garivier2011upper} and smoothly changing~\citep{trovo2020sliding}.
In contrast, the \emph{rested} setting has raised the attention of the bandit community more recently~\citep{heidari2016tight,seznec2019rotting,metelli2022stochastic}.

This paper focuses on the \emph{stochastic rising rested bandits}~\citep[SRRBs,][]{metelli2022stochastic} which reflect situations where the arms' expected rewards \emph{increase}, modeling growing trends (e.g., learning processes).
Many real-world challenges can be modeled as  SRRBs, examples of such settings are represented by the so-called \emph{combined algorithm selection and hyperparameter optimization}~\citep[CASH,][]{thornton2013auto,li2020efficient}, whose goal is to identify the best learning algorithm and the best hyperparameter configuration for a given machine learning task, representing one of the most fundamental problems in \emph{automatic machine learning}~\citep[AutoML,][]{waring2020automated}. As a further regularity condition, \citet{heidari2016tight} and \citet{metelli2022stochastic} consider \emph{concave} evolution of the expected rewards that are suitable to model satiation effects in recommendations~\citep{clerici2023linear,xu2023online}. Problems with similar characteristics arise when selecting an algorithm from a predefined set to optimize a given stochastic function, i.e., \emph{online model selection}~\citep{metelli2022stochastic}.

The seminal work by~\citet{metelli2022stochastic} approached the regret minimization problem in SRRBs by designing a sliding-window algorithm based on upper-confidence bounds able to provide a worst-case regret of the order of $\widetilde{O}(T^{\frac{2}{3}} + \Upsilon(T))$,  where $T$ is the learning horizon and $\Upsilon(T)$ a problem-dependent quantity which characterizes the growth rate of the arm expected rewards.\footnote{With the $\widetilde{O}(\cdot)$ notation we disregard logarithmic factors w.r.t.~the learning horizon $T$.}
However, to the best of our knowledge, there has been no analysis to investigate whether \emph{Thompson sampling}-like algorithms~\citep[TS,][]{kaufmann2012thompson,agrawal2017near} can provide good regret guarantees in the SRRB scenario. The interest in studying TS-like algorithms lies in the fact that they are the most widely used bandit algorithms in real-world applications, both easy to implement and with good empirical performance \cite{scott2010ts,chapelle2011empirical}. This paper aims to address the following three questions: ($i)$ What modifications should be introduced to the TS algorithms to tackle the SRRB setting? ($ii$) Which properties of the environment govern their performance? ($iii$) What assumptions are necessary to ensure the no-regret property?

\textbf{Original Contributions.}~~The contributions of the paper are summarized as follows.
\begin{itemize}[leftmargin=*,noitemsep,topsep=-2pt]
    \item In Section~\ref{sec:problem}, we redefine the notion of suboptimality gap $\overline{\Delta}(n,n')$ (Equation~\ref{eq:deltas}) for the SRRB scenario. Based on it, we introduce a novel complexity index $\sigma_{\bm{\mu}}(T)$ (Equation~\ref{eq:sigmaDef}) representing the number of pulls of the optimal arm needed to allow the TS-like algorithm to detect its optimality.
    \item In Section~\ref{sec:algs}, we present and describe our TS-like algorithms, characterized by a sliding-window with size $\tau$, preceded by an optional forced exploration phase in which each arm is pulled $\Gamma$ times. They are instanced with Beta priors, \texttt{ET-Beta-SWTS} (Explore Then-Beta-Sliding Window Thompson Sampling, Algorithm~\ref{alg:betats}), and Gaussian priors, \texttt{$\gamma$-ET-SWGTS} (Explore Then-Gaussian Sliding Window Thompson Sampling, Algorithm~\ref{alg:gts}).
    \item In Section~\ref{sec:preliminari}, to face the additional complexities given by a dynamic scenario, we introduce novel tools to conduct the \emph{frequentist} regret analysis of TS-like algorithms that can be of independent interest. Specifically, we generalize the decomposition of the expected number of pulls provided by Theorem 36.2 of \citet{lattimore2020bandit} to \emph{general dynamic environments and sliding-window TS-like algorithms} (Lemma~\ref{lem:RD}) and we the problem of \emph{the underestimation of the optimal arm in the presence of Bernoulli rewards} with completely novel techniques based on log-convexity and stochastic dominance (Lemma \ref{lemma:techlemma}). 
   \item In Section~\ref{sec:AnalysisBetaTS}, we provide {frequentist} regret bounds for our TS-like algorithms with no sliding window (i.e., $\tau=T$) for both Beta and Gaussian priors (Theorems~\ref{thm:ts} and \ref{thm:gts}) depending on our novel index of complexity $\sigma_{\bm{\mu}}(T)$ and on the total variation divergence between specifically defined distributions characterizing the instances. Our bounds are tight for the case of stationary bandits, and, differently from what has been done in \citet{agrawal2017near}, they apply to distributions with expected rewards that are not constrained between $[0,1]$. 
   We analyze the sliding-window approaches (i.e., $\tau<T$) providing analogous frequentist regret bounds in Appendix~\ref{sec:SWapproach}.
   {Additionally, we conceive a lower bound on the regret dependent on the complexity index $\sigma_{\bm{\mu}}(T)$ we have introduced, and we compare it with the upper bounds showing the tightness on this term.}
    \item In Section~\ref{sec:Experiments}, we validate the theoretical findings with numerical simulations to compare the performances of the proposed algorithms w.r.t.~the ones designed for the SRRB setting in terms of cumulative regret on both synthetic scenarios and with real-world data.
\end{itemize}
The proofs of the results are reported in Appendix~\ref{apx:proofs}.

\section{Related Literature}
We revise the related literature with a focus on TS algorithms, restless, rising, and rotting bandits.

\textbf{TS Algorithms.}~~\emph{Thompson sampling} has a long history, originally designed as a heuristic for sequential decision-making~\citep{thompson1933ts}. Only in the past decade has it been analyzed~\cite{kaufmann2012thompson,agrawal2017near}, providing finite-time logarithmic \emph{frequentist} regret in the stochastic stationary bandit setting. Although the original frequentist analyses are rather involved, general approaches for conducting such analyses are now available with a more concrete understanding of the intimate functioning of TS methods~\cite{lattimore2020bandit,baudry2023general}.


\textbf{Restless Bandits.}~~
Algorithms designed for stationary bandits (including {UCB} and {TS}) have often been extended to \emph{non-stationary} (or \emph{restless}) bandits. The main idea is to forget past observations. Two approaches are available: passive and active. The former iteratively discards the information coming from the far past (either with a \emph{sliding window} or a \emph{discounted} estimator). Examples of such a family of algorithms are \texttt{DUCB}~\citep{garivier2011upper}, \texttt{Discounted TS}~\citep{raj2017taming,qi2023discounted}, \texttt{SW-UCB}~\citep{garivier2011upper}, and \texttt{SW-TS}~\citep{trovo2020sliding}. Instead, the latter class uses \emph{change-detection} techniques~\cite{basseville1993detection} to decide when it is the case to discard old samples. This occurs when a sufficiently evident change affects the arms' expected rewards. Among the active approaches we mention \texttt{CUSUM-UCB}~\citep{liu2018change}, \texttt{REXP3}~\citep{besbes2014stochastic}, \texttt{GLR-klUCB}~\citep{besson2019generalized}, and \texttt{BR-MAB}~\citep{re2021exploiting}. 

\textbf{Rising Bandits.}~~
The regret minimization problem of SRRB in the deterministic case has been proposed by~\citet{heidari2016tight}. They designed algorithms with provably optimal policy regret bounds, and, when the expected rewards are increasing and concave (i.e., rising), they provide an algorithm with sublinear regret.
The stochastic version of SRRB has been studied by~\citep{metelli2022stochastic} where the authors provide regret bounds of order $\widetilde{O}(T^{\frac{2}{3}}+\Upsilon(T))$, where $\Upsilon(T)$ is a problem-dependent quantity which characterizes the growth rate of the arm expected rewards. This bound is attained by specifically designed optimistic algorithms that make use of a sliding window approach, namely \texttt{R-ed-UCB} and \texttt{R-less-UCB}. More recently, approaches presented in \citet{metelli2022stochastic} have been extended to the novel graph-triggered setting~\cite{GenaltiM0RCM24}, interpolating between the rested and restless cases. Furthermore, \citet{amichay2025rising} provide specific algorithms for the case in which the growing trend is linear. The recent preprint by \citet{fiandri2024rising} has provided the first lower bounds depending on the complexity term $\Upsilon(T)$. The SRRB setting has also been studied in a BAI framework by~\citet{mussi2023best}, where the authors propose the \texttt{R-UCBE} and \texttt{R-SR} algorithm, a UCB-inspired and successive elimination approaches, respectively, that provide guarantees for the fixed budget case. Similar results have been obtained in \citet{CellaPG21} and \citet{TakemoriUG24} under a more restrictive parametric model for the expected reward functions. A comparison between the guarantees of the algorithms presented in this paper and the results from \citet{metelli2022stochastic} is provided in Appendix~\ref{sec:Comparison}.

\textbf{Rotting Bandits.}~~Another setting related to SRRB is the rotting bandits~\citep{seznec2019rotting,levine2017rotting}, where the expected rewards decrease either over time or depending on the arm pulls. The authors propose specifically crafted algorithms to address rested and restless rotting bandits, obtaining sublinear regret guarantees. However, as highlighted in~\citet{metelli2022stochastic}, these algorithms cannot be applied to the SRRB setting.

\section{Problem Formulation}\label{sec:problem}

We consider an SRRB instance with stochastic rewards. Let $K \in \Nat$ be the number of arms. Every arm $i \in \dsb{K}$\footnote{For two integers $a, b \in \mathbb{N}, a < b$, we denote with $\dsb{a}$ the set $\{1, \ldots, a\}$ and $\dsb{a,b}$ the set $\{a, \ldots, b\}$.} is associated with an expected reward $\mu_i : \Nat \rightarrow \mathbb{R}$, where $\mu_i(n)$ defines the expected reward of arm $i$ when pulled for the $n$-th time with $n \in \Nat$. {We denote an SRRB instance as $\bm{\mu} = (\mu_i)_{i \in \dsb{K}}$.} In a rising bandit, the expected reward function $\mu_i(n)$ is non-decreasing.\footnote{Differently from previous works~\cite{metelli2022stochastic,mussi2023best,CellaPG21}, we will not enforce \emph{concavity} since our algorithms will enjoy regret guarantees depending on a novel complexity index (see Equation~\ref{eq:sigmaDef}), allowing for sublinear instance-dependent regret guarantees even for non-concave expected reward functions.}

\begin{ass}[Non-Decreasing]\label{ass:nonDecreasing}
For every arm $i \in \dsb{K}$ and number of pulls $n \in \Nat$, let $\gamma_i(n) \coloneqq \mu_i(n+1) - \mu_i(n)$ be the increment function, it holds that: $\gamma_i(n) \ge 0$.
\end{ass}



The learning process occurs over $T \in \Nat$ rounds, where $T$ is the learning horizon. At every round $t \in \dsb{T}$, the agent pulls an arm $I_t \in \dsb{K}$ and observes a random reward $X_{I_t,t} \sim \nu_{I_t}(N_{I_t,t})$, where for every arm $i \in \dsb{K}$, we have that $\nu_i(N_{i,t})$ is a probability distribution
depending on the current number of pulls up to round $t$, i.e., $N_{i,t} \coloneqq \sum_{l=1}^t \mathds{1}\{I_l = i\}$ whose expected value is given by $\mu_i(N_{i,t})$. For every arm $i \in \dsb{K}$ and round $t \in\dsb{T}$, we define the \emph{average expected reward} as: $
    \overline{\mu}_{i}(t) \coloneqq \frac{1}{t} \sum_{l = 1}^t \mu_i(l)
$.
The optimal policy constantly plays the arm with the maximum average expected reward at the end of the learning horizon $T$. We denote with $i^*(T) \coloneqq \argmax_{i \in \dsb{K}} \overline{\mu}_{i}(T)$ the optimal arm (highlighting the dependence on $T$), that we assume w.l.o.g. unique~\cite{heidari2016tight}.

\textbf{Suboptimality Gaps.}~~For analysis purposes, we introduce, for every suboptimal arm $i \neq i^*(T)$ and every number of pulls $n, n' \in \dsb{T}$, the following \emph{novel} notions of suboptimality gaps defined in terms of the expected reward $\Delta_i(n,n')$ and of average expected reward $\overline{\Delta}_i(n,n')$, formally:
\begin{align}
 \Delta_i(n,n') \coloneqq \max\{0,\mu_{i^*(T)}(n) - \mu_i(n')\}, \qquad \overline{\Delta}_i(n,n') \coloneqq \max\{0,\overline{\mu}_{i^*(T)}(n) - \overline{\mu}_i(n')\}.\label{eq:deltas}
\end{align}
Notice that, for specific rounds, the gaps $\overline{\Delta}_i(n,n')$ may be negative, however, at the end of the learning horizon $T$, we are guaranteed that $\overline{\mu}_{i^*(T)}(T) > \overline{\mu}_i(n')$ for all $n' \in \dsb{T}$.
This property allows defining, for every suboptimal arm $i \neq i^*(T)$, the minimum number of pulls $\sigma_i(T)$ of the optimal arm needed so that the average expected reward suboptimality gap of arm $i$ is positive and its maximum $\sigma_{\bm{\mu}}(T)$ over the suboptimal arms, formally:
\begin{align}\label{eq:sigmaDef} 
    \sigma_i(T) \coloneqq \min \{ l \in \dsb{T} : \overline{\mu}_{i^*(T)}(l) > \overline{\mu}_i(T)\}, \qquad \sigma_{\bm{\mu}}(T) \coloneqq \max_{i \neq  i^*(T)} \sigma_i(T). 
\end{align}
After $\sigma_{\bm{\mu}}(T)$ pulls of $i^*(T)$, we can identify $i^*(T)$ as the optimal arm, treating the problem as a stationary bandit since no suboptimal arm can exceed the performance of $i^*(T)$ beyond this point.

\textbf{Regret.}~~Given a SRRB instance $\bm{\mu}$, the goal of a regret minimization algorithm $\mathfrak{A}$ in an SRRB is to minimize the \emph{expected cumulative regret}~\cite{heidari2016tight,metelli2022stochastic} defined as follows:
\begin{align}
    R_{\bm{\mu}}(\mathfrak{A},T) \coloneqq T \ \overline{\mu}_{ i^*(T)}(T) -  \E \bigg[ \sum_{t = 1}^T \mu_{I_t}(N_{I_t,t}) \bigg],
\end{align}
where the expectation is w.r.t.~the randomness of the rewards and the possible randomness of $\mathfrak{A}$.

\textbf{Reward Distribution.}~~
We perform the regret analysis under the following reward distributions.
\begin{ass}[Bernoulli Rewards]\label{ass:bernoulli}
    For every $t \in \dsb{T}$, $n \in \dsb{T}$ and $i \in \dsb{K}$, the reward $X_{i,t} \sim \nu_i(n)$ is Bernoulli distributed.
\end{ass}
\begin{ass}[Non-negative Subgaussian rewards\protect\footnote{For the sake of presentation, we assume positive support. W.l.o.g.~one might also consider realizations bounded from below by a given constant (see Section~\ref{sec:AnalysisBetaTS} for details).}]\label{ass:nonNeg}
	For every $t \in \dsb{T}$, $n \in \dsb{T}$, and $i \in \dsb{K}$, the reward $X_{i,t} \sim \nu_i(n)$ is: ($i$) non-negative almost surely; ($ii$) $\lambda^2$-subgaussian.\footnote{A zero-mean random variable $X$ is $\lambda^2$-subgaussian if for every $s \in \mathbb{R}$ it holds that $\E[e^{sX}] \le e^{s^2\lambda^2/2}$.}
\end{ass}
\section{Algorithms}\label{sec:algs}

In this section, we introduce \texttt{ET-Beta-SWTS} ({Explore Then-Beta-Sliding Window Thompson Sampling}, Algorithm~\ref{alg:betats}) and \texttt{$\gamma$-ET-SWGTS} ({$\gamma$-Explore Then-Sliding Window Gaussian Thompson Sampling}, Algorithm~\ref{alg:gts}), TS-like algorithms designed to deal with Bernoulli (Assumption~\ref{ass:bernoulli}) and subgaussian non-negative (Assumption~\ref{ass:nonNeg}) rewards, respectively. For the sake of generality, the algorithms are presented in their \emph{sliding window} versions, with a window of size $\tau$, since the non-windowed versions can be retrieved by setting $\tau=T$.

In the first $K\Gamma$ rounds, both algorithms perform a \emph{forced exploration}. They play each arm $\Gamma$ times, collecting $S_{i,K\Gamma+1,\tau}$ and $N_{i,K\Gamma+1,\tau}$, where $N_{i,t,\tau} \coloneqq \sum_{
s = \max{\{t-\tau,1}\}}^{t-1} \mathds{1}{\{I_s = i\}}$, is the number of times arm $i \in \dsb{K}$ has been selected in the last $\min{ \{t, \tau \}}$ rounds with $t \in \dsb{T}$, and $S_{i,t,\tau} \coloneqq \sum_{ s = \max\{{t-\tau,1}\}}^{t-1} X_{i,s} \mathds{1}{\{I_s = i\}}$ is the cumulative reward collected by arm $i$ in the last $\min{\{t, \tau}\}$ rounds. 
Then, in the subsequent rounds, both algorithms run a TS-like routine. In particular, for \texttt{ET-Beta-SWTS}, in each round $t\in \dsb{K\Gamma+1,T}$, the posterior distribution of the expected reward of arm $i$ at round $t$ is a \emph{Beta distribution} defined as $\eta_{i,t} \coloneqq \text{Beta}(S_{i,t,\tau} +
1, N_{i,t,\tau} - S_{i,t,\tau} + 1)$, where $ \text{Beta}(\alpha,\beta)$ denotes a Beta distribution with parameters $\alpha>0$ and $\beta>0$. Once the distributions $\eta_{i,t}$ have been computed, for each arm $i \in \dsb{K}$, the algorithm draws a sample $\theta_{i,t,\tau}$ from $\eta_{i,t}$, a.k.a.~Thompson sample, and plays the arm whose sample has the highest sample value. Then, the values for $S_{i,t+1,\tau}$ and $N_{i,t+1,\tau}$ are updated and, therefore also the posterior distribution $\eta_{i,t+1}$. 

\texttt{$\gamma$-ET-SWGTS} shares the principles of \texttt{ET-Beta-SWTS} with some differences. In particular, after the forced exploration, \texttt{$\gamma$-ET-SWGTS} at every round $t\in\dsb{K\Gamma+1,T}$, the posterior distribution is a \emph{Gaussian distribution} computed as $\eta_{i,t} \coloneqq \mathcal{N} \left( \frac{S_{i,t,\tau}}{N_{i,t,\tau}},\frac{1}{\gamma N_{i,t,\tau}} \right)$, where $\mathcal{N}(m, s^2)$ is a Gaussian distribution with mean $m$ and variance $s^2$. Notice how the parameter $\gamma >0$, whose value will be specified later, scales the variance of the posterior. If there exists an arm $i \in \dsb{K}$ from which we do not have samples, i.e., $N_{i, t, \tau} = 0$, the algorithm pulls it, so that the posterior distribution is always well defined. Otherwise, analogously to \texttt{ET-Beta-SWTS}, the algorithm draws a random sample $\theta_{i,t,\tau}$ from $\eta_{i,t}$ and plays the arm whose Thompson sample is the largest. Finally, the algorithm updates $S_{i,t+1,\tau}$, $N_{i,t+1,\tau}$, and  $\eta_{i,t+1}$.
It is worth noting that by removing the forced exploration, i.e., setting $\Gamma=0$, from \texttt{ET-Beta-SWTS}, we recover \texttt{Beta-SWTS} introduced by \citet{trovo2020sliding}; while by setting $\Gamma=1$ in \texttt{$\gamma$-ET-SWGTS}, we retrieve a new algorithm, that is a natural extension of \texttt{Beta-SWTS} in settings with subgaussian rewards, that we call \texttt{$\gamma$-SWGTS}.\footnote{In Appendix \ref{apx:cc}, we show that the presence of a sliding window does not increase the per-round computational complexity which remains $O(K)$ as in standard TS algorithms with no window.}

\begin{figure}[t]
\begin{minipage}{0.4\textwidth}
\begin{algorithm}[H]
\caption{\texttt{ET-Beta-SWTS}.} \label{alg:betats}
\small
{
\begin{algorithmic}[1]
    \STATE \textbf{Input:} Number of arms $K$, time horizon $T$, time window $\tau$, forced exploration $\Gamma$
    \STATE Pull each arm $\Gamma$ times
    \STATE Set $\eta_{i,K\Gamma+1} \gets \text{Beta}(S_{i,K\Gamma+1,\tau}+1,N_{i,K\Gamma+1,\tau}-S_{i,K\Gamma+1,\tau}+1)$ for each $i \in \dsb{K}$
    \FOR{$t \in \dsb{K\Gamma+1,T}$}
        \STATE {\thinmuskip=1mu
  \medmuskip=1mu \thickmuskip=1mu Sample $\theta_{i,t,\tau} \sim \eta_{i,t}$ for each $i \in \dsb{K}$ }\label{line:sample2}
        \STATE Select $I_t \in \arg \max_{i \in \dsb{K}} \theta_{i,t,\tau}$ \label{line:selectionts2}
        \STATE Pull arm $I_t$ and collect reward $X_{I_t,t}$
        \STATE Update $S_{i,t+1,\tau}$ and $N_{i,t+1,\tau}$ for each $i \in \dsb{K}$
        \STATE Update $\eta_{i,t+1} \gets \text{Beta}(1+S_{i,t+1,\tau},1+(N_{i,t+1,\tau}-S_{i,t+1,\tau}))$ for each $i \in \dsb{K}$ \label{line:updatets2}
    \ENDFOR
\end{algorithmic}
}
\end{algorithm}
\end{minipage}%
\hfill
\begin{minipage}{0.59\textwidth}
\begin{algorithm}[H]
\caption{\texttt{$\gamma$-ET-SWGTS}.} \label{alg:gts}
\small
{\thinmuskip=1mu
  \medmuskip=1mu \thickmuskip=1mu
\begin{algorithmic}[1]
 \STATE \textbf{Input:} Number of arms $K$, time horizon $T$, parameter $\gamma$, time window $\tau$, forced exploration $\Gamma$
    \STATE Pull each arm $\Gamma$ times
    \STATE Set $\eta_{i,K\Gamma+1} \gets \mathcal{N}\big(\frac{S_{i,K\Gamma+1,\tau}}{N_{i,K\Gamma+1,\tau}},\frac{1}{\gamma N_{i,K\Gamma+1,\tau}}\big)$ for each $i \in \dsb{K}$ \label{line:prior1}
    \FOR{$t \in \dsb{K\Gamma+1,T}$}
         \IF{ $\exists i \in \dsb{K}$ such that $N_{i,t,\tau}=0$}
            \STATE Select $I_t$ as any arm $i$ such that $N_{i,t,\tau}=0$ \label{line:forI0}
        \ELSE
            \STATE Sample $\theta_{i,t, \tau} \sim \eta_{i,t}$ for each $i \in \dsb{K}$ \label{line:sample11}
            \STATE Select $I_t \in \arg \max_{i \in \dsb{K}} \theta_{i,t,\tau}$ \label{line:selectionts11}
        \ENDIF
        \STATE Pull arm $I_t$ and collect reward $X_{I_t,t}$
            \STATE Update  $S_{i,t+1,\tau}$ and $N_{i,t+1,\tau}$  for each $i \in \dsb{K}$ 
            \STATE Update $\eta_{i,t+1} \gets \mathcal{N}\label{line:updatets11}\big( \frac{S_{i,t+1,\tau}}{N_{i,t+1,\tau}},\frac{1}{\gamma N_{i,t+1,\tau}} \big) $ for each $i \in \dsb{K} $
    \ENDFOR
    \vspace{.07cm}
\end{algorithmic}
}
\end{algorithm}
\end{minipage}
\end{figure}

\section{Challenges and New Technical Tools} \label{sec:preliminari}
The complexity of the scenario where the expected rewards vary throughout the learning process presents additional technical challenges and necessitates the development of new tools. In this section, we introduce them highlighting their generality beyond the scope of this paper. To appreciate the content of this section, the interested reader is advised to have a general understanding of the \emph{frequentist} TS analysis of \citet{agrawal2017near}.
The reader not interested in these technical aspects can freely skip this section and proceed to Section~\ref{sec:AnalysisBetaTS}, which presents the regret bounds.

Let us start by introducing the probability that a Thompson sample is larger than a given threshold.
\begin{restatable}[]{definition}{probs}\label{def:prob}
Let $i,i' \in \dsb{K}$ be two arms, $t \in \dsb{T}$ be a round, $\tau \in \dsb{T}$ be the window, and $y_{i',t}\in (0,1)$ be a threshold, we define: $
    p_{i,t,\tau}^{i'} \coloneqq \mathbb{P}\left(\theta_{i,t,\tau}>y_{i',t}\right|\mathcal{F}_{t-1})$,
    where $\mathcal{F}_t$ is the filtration induced by the sequence of arms played and observed rewards up to round $t$.
\end{restatable}
The TS analysis of \citet{agrawal2017near} is based on bounding the expected number of pulls $\E_{\bm{\mu}}[N_{i,T}]$ of every arm $i \in \dsb{K}$ through a decomposition that has been formalized in Theorem 36.2 of \citet{lattimore2020bandit} for stationary bandits. We now generalize it for a generic dynamic setting (including restless and rested) and for sliding window approaches. To this end, let $i^*(t) \in \dsb{K}$ be the optimal arm at time $t$,\footnote{For the SRRB setting, $i^*(t) = i^*(T)$ for every $t \in \dsb{T}$.} for the \texttt{ET-Beta-SWTS} algorithm, the following holds.
\begin{restatable}[Expected Number of Pulls Bound for \texttt{ET-Beta-SWTS}]{lemma}{rd} \label{lem:RD}
Let $T \in \mathbb{N}$ be the learning horizon, $\tau \in \dsb{T}$ the window size, $\Gamma \in \dsb{0,T}$ the forced exploration parameter, for the \texttt{ET-Beta-SWTS} algorithm it holds for every $i \neq i^*(t)$ and free parameter $\omega \in \dsb{0,T}$ that:
    \begin{align*}
        \mathbb{E}_{\bm{\mu}}  [N_{i,T}] & \leq 1 +\Gamma+\frac{\omega T}{\tau} +\mathbb{E}_{\bm{\mu}} \bigg[ \sum_{t=K\Gamma+1}^{T} \mathds{1} \bigg\{ p_{i,t,\tau}^i > \frac{1}{T}, \ I_t=i, \ N_{i,t,\tau} \ge \omega \bigg\} \bigg] \\ &\qquad +\mathbb{E}_{\bm{\mu}} \bigg[ \sum_{t=K\Gamma+1}^{T} \bigg( \frac{1}{p_{i^*(t),t,\tau}^i}-1 \bigg) \mathds{1} \left\{ I_t=i^*(t) \right\} \bigg].
    \end{align*}
\end{restatable}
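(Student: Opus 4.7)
The plan is to mirror the decomposition strategy of Theorem 36.2 of \citet{lattimore2020bandit}, adapting it to three novel features: the deterministic forced-exploration phase, the sliding window of size $\tau$, and the (possibly) time-varying optimal arm $i^*(t)$. I would begin by separating the deterministic forced pulls, writing
\[
    \mathbb{E}_{\bm{\mu}}[N_{i,T}] = \Gamma + \mathbb{E}_{\bm{\mu}}\bigg[\sum_{t=K\Gamma+1}^{T}\mathds{1}\{I_t=i\}\bigg],
\]
and then splitting every indicator according to whether $N_{i,t,\tau}\ge \omega$ or $N_{i,t,\tau}<\omega$.

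For the low-count case, the plan is to establish a purely combinatorial, pathwise bound
\[
    \sum_{t=1}^{T}\mathds{1}\{I_t=i,\,N_{i,t,\tau}<\omega\}\le \frac{\omega T}{\tau} + 1,
\]
by partitioning $\dsb{T}$ into $\lceil T/\tau\rceil$ consecutive blocks of length $\tau$ and observing that, once arm $i$ has been pulled $\omega$ times inside a block, any subsequent pull of $i$ in the same block automatically sees its $\omega$ predecessors within the preceding $\tau$ rounds, so that $N_{i,t,\tau}\ge \omega$ from that point on. This yields the $1+\omega T/\tau$ piece of the statement.

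For the pulls with $N_{i,t,\tau}\ge \omega$, I would split again on the $\mathcal{F}_{t-1}$-measurable event $\{p_{i,t,\tau}^{i}>1/T\}$, whose indicator provides precisely the third summand of the statement. On the complementary event, the plan is to invoke the swapping trick that underlies the frequentist TS analyses (Lemma 36.4 of \citet{lattimore2020bandit}): since, conditionally on $\mathcal{F}_{t-1}$, the Thompson samples $(\theta_{j,t,\tau})_{j\in\dsb{K}}$ are mutually independent, setting $Q_t\coloneqq\mathbb{P}(\max_{j\ne i^*(t)}\theta_{j,t,\tau}\le y_{i,t}\mid\mathcal{F}_{t-1})$ one obtains, by splitting on whether $\theta_{i^*(t),t,\tau}$ exceeds $y_{i,t}$,
\[
    \mathbb{P}(I_t=i^*(t)\mid\mathcal{F}_{t-1})\ge Q_t\,p_{i^*(t),t,\tau}^{i},
\]
while decomposing $\{I_t=i\}$ on $\{\theta_{i,t,\tau}>y_{i,t}\}$ versus its complement gives
\[
    \mathbb{P}(I_t=i,\,p_{i,t,\tau}^{i}\le 1/T\mid\mathcal{F}_{t-1})\le \tfrac{1}{T} + Q_t\bigl(1-p_{i^*(t),t,\tau}^{i}\bigr).
\]
Eliminating $Q_t$ between the two inequalities and summing over $t$ produces the final term in the statement; the residual $\sum_{t}\tfrac{1}{T}\le 1$ absorbs into the leading constant.

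The main obstacle is making the swapping step rigorous in the dynamic setting: the classical derivation implicitly treats both the optimal arm and the thresholds as fixed. I would argue that the only property actually used is conditional independence of the samples given $\mathcal{F}_{t-1}$, and that both $i^*(t)$ and $y_{i,t}$ are $\mathcal{F}_{t-1}$-measurable, so they act as constants inside the conditional expectation; the sliding window only changes how the posteriors $\eta_{i,t}$ are formed, not the sampling mechanism, so the identities above are unaffected. A lesser technical point is the combinatorial window-count lemma, which must hold pathwise for any adaptive sequence of pulls and is handled by the block-partition argument above.
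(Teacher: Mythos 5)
Your proposal is correct and takes essentially the same route as the paper's proof: peel off the $\Gamma$ forced pulls, bound the pulls with $N_{i,t,\tau}<\omega$ by the block/pigeonhole argument (the paper invokes Lemma D.1 of Combes and Prouti\`ere, which gives $\omega\lceil T/\tau\rceil$ rather than your claimed $\omega T/\tau+1$ — a constant-level slack the paper itself also glosses), keep the indicator of $\{p_{i,t,\tau}^i>1/T,\ I_t=i,\ N_{i,t,\tau}\ge\omega\}$, and control the remaining pulls by the conditional-independence swapping argument, which is legitimate here exactly for the reason you state: $i^*(t)$ and $y_{i,t}$ are $\mathcal{F}_{t-1}$-measurable and the window only changes how the posteriors are built. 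The only differences are cosmetic — the paper splits first on $\{\theta_{i,t,\tau}\le y_{i,t}\}$ and phrases the swap via $i_t'=\arg\max_{j\ne i^*(t)}\theta_{j,t,\tau}$ instead of your $Q_t$, and its residual $+1$ comes from the $\sum_t 1/T$ term rather than the window count.
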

In Appendix \ref{apx:swgts}, we present an analogous result for \texttt{$\gamma$-ET-SWGTS} (Lemma~\ref{lem:RDg}). First, we observe that by setting $\omega=\Gamma=0$ and $\tau=T$, we retrieve Theorem 36.2 of \citet{lattimore2020bandit}. 
As highlighted by \cite{agrawal2012analysis}, \citet{agrawal2017near,kaufmann2012thompson,jin2023thompson}, the main difficulty in the frequentist analysis of TS-like algorithms lies in the evaluation of the error caused by the underestimation of the optimal arm, measured by the term $\mathbb{E}_{\bm{\mu}}[1/{p_{i^*(t),t,\tau}^i}]$.
More recently, \citet{baudry2023general} proved that controlling this term is essential to design optimal TS-like algorithms. Lemma \ref{lem:RD} confirms that this is true even in dynamic settings (including restless and rested). An additional term $\Gamma$ arises from the forced exploration and $\omega > 0$ is a free parameter whose value can be chosen to tighten the bound when a sliding window size $\tau$ is used.

In what follows, we omit the dependence on $\tau$, as our result does not depend on the fact that $N_{i^*(t),t}$ and $S_{i^*(t),t}$ might be constrained within a window.
The central challenge when bounding the underestimation term $\mathbb{E}_{\bm{\mu}}[1/{p_{i^*(t),t}^i}]$ lies in characterizing the \emph{distribution of the parameters of the posterior distribution} of the optimal arm $i^*(t)$, which, in turn, depend on the cumulative reward $S_{i^*(t),t}$. In a stationary bandit,  $S_{i^*(t),t}$ is a sum of $N_{i^*(t),t} = j$ \emph{identically distributed} rewards, but, in a dynamic scenario, rewards are no longer identically distributed. While this does not pose significant issues for Gaussian priors, it prevents us from applying the technique of \citet{agrawal2017near} for Beta priors and Bernoulli rewards. Indeed, such an analysis heavily relies on the fact that $S_{i^*(t),t}$ (i.e., the number of successes) is a \emph{Binomial distribution}, being the sum of identically distributed Bernoulli distributions with parameter $\overline{\mu}_{i^*(t),j}$. Instead, in the dynamic setting, $S_{i^*(t),t}$ is a sum of non-identically distributed Bernoulli distributions (with parameters $\mu_{i^*(t),1},\dots,\mu_{i^*(t),j}$) since the arm expected reward changes throughout the learning process. Instead, it can be proved that $S_{i^*(t),t}$ is distributed as a \emph{Poisson-Binomial}~\citep{wang1993poissobinomial}, whose analytical treatment is far more challenging than the binomial counterpart.
Let us first instance Definition~\eqref{def:prob} for Beta priors: $
    p_{i^*(t),t}^i= \mathbb{P} (\text{Beta} (S_{i^*(t),t}+1, F_{i^*(t),t}+1) > y_{i,t} | \mathcal{F}_{t-1})$, 
where $F_{i^*(t),t} = N_{i^*(t),t} - S_{i^*(t),t}$ (i.e., the number of failures). Our technical innovation consists in Lemma~\ref{lemma:techlemma}, presented below, in which we show that, surprisingly, the worst case in bounding the underestimation term is attained when all rewards are identically distributed, i.e., when $S_{i^*(t),t}$ has a Binomial distribution. 
\begin{restatable}[PB-Bin Stochastic Dominance]{lemma}{mf}\label{lemma:techlemma}
Let $j \in \Nat$, $\mathrm{PB}(\underline{\mu}_{i^*(t)}(j))$ be a Poisson-Binomial distribution with parameters $\underline{\mu}_{i^*(t)}(j) = (\mu_{i^*(t),1},\dots,\mu_{i^*(t),j})$, and $\mathrm{Bin}(j, x)$ be a binomial distribution of $j$ trials and success probability $0 \le x \leq \frac{1}{j} \sum_{l=1}^j \mu_{i^*(t),l} =\overline{\mu}_{i^*(t),j}$. Then, it holds that:
\begin{align*}
	 \E_{S_{i^*(t),t} \sim  \mathrm{PB}(\underline{\mu}_{i^*(t)}(j))}  \bigg[\frac{1}{p_{i^*(t),t}^i} \bigg| N_{i^*(t),t}=j\bigg]  
     &\leq \E_{S_{i^*(t),t}\sim \mathrm{Bin}(j, x)} \bigg[\frac{1}{p_{i^*(t),t}^i} \bigg| N_{i^*(t),t}=j \bigg].
\end{align*}
\end{restatable}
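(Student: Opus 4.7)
The plan is to rewrite $1/p^i_{i^*(t),t}$ as a deterministic function of $S_{i^*(t),t}$, so that the claim becomes a comparison of expectations of that function under PB versus Bin. Conditional on $N_{i^*(t),t} = j$ and the filtration $\mathcal{F}_{t-1}$, the classical Beta--Binomial identity $\mathbb{P}(\mathrm{Beta}(s+1, j-s+1) > y) = \mathbb{P}(\mathrm{Bin}(j+1, y) \leq s)$ lets me define $f(s) := 1/\mathbb{P}(\mathrm{Bin}(j+1, y_{i,t}) \leq s)$ for integer $s \in \{0,\ldots,j\}$, so that $1/p^i_{i^*(t),t} = f(S_{i^*(t),t})$ on the conditioning event.

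Next, I would establish two structural properties of $f$ on the integer grid $\{0,\ldots,j\}$: first, that $f$ is non-increasing, which is immediate from monotonicity of the binomial CDF; and second, that $f$ is log-convex, equivalently that $g(s) := \mathbb{P}(\mathrm{Bin}(j+1, y) \leq s)$ is log-concave in $s$. The latter is a well-known consequence of the log-concavity of the binomial PMF, a property preserved by the cumulative sum operation for discrete log-concave distributions. By AM--GM, log-convexity of $f$ implies its discrete convexity $2f(s+1) \leq f(s) + f(s+2)$.

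With these two properties in hand, the bound follows from a two-step chain. First, I would invoke Hoeffding's classical convex ordering inequality for Poisson--Binomial sums: for any discrete convex function $\phi$ and $\bar{p} = (1/n)\sum_{l=1}^n p_l$, one has $\E_{S \sim \mathrm{PB}(p_1,\ldots,p_n)}[\phi(S)] \leq \E_{S \sim \mathrm{Bin}(n, \bar{p})}[\phi(S)]$. Applied with $\phi = f$, $p_l = \mu_{i^*(t),l}$, and $n = j$, this yields $\E_{\mathrm{PB}(\underline{\mu}_{i^*(t)}(j))}[f(S)] \leq \E_{\mathrm{Bin}(j, \bar{\mu}_{i^*(t),j})}[f(S)]$. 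Second, since $0 \leq x \leq \bar{\mu}_{i^*(t),j}$, the usual stochastic order gives $\mathrm{Bin}(j, x) \preceq_{\mathrm{st}} \mathrm{Bin}(j, \bar{\mu}_{i^*(t),j})$, and combining this with the fact that $f$ is non-increasing produces $\E_{\mathrm{Bin}(j, \bar{\mu}_{i^*(t),j})}[f(S)] \leq \E_{\mathrm{Bin}(j, x)}[f(S)]$. Chaining the two inequalities concludes the proof.

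The main obstacle is establishing the log-concavity of the binomial CDF (equivalently, the log-convexity of $f$): this is the reason the standard binomial-based analysis of \citet{agrawal2017near} does not extend directly, and why the paper advertises log-convexity as one of the novel technical ingredients. Once this property is isolated, the rest of the argument is a clean combination of Hoeffding's convex ordering (the precise sense in which PB is dominated by the binomial with matching mean) with the elementary stochastic dominance between two binomials sharing the same number of trials.
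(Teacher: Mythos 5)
Your proposal is correct and follows essentially the same route as the paper's own proof: the Beta--Binomial identity to express $1/p^i_{i^*(t),t}$ as $1/F^B_{j+1,y_{i,t}}(S)$, log-concavity of the binomial CDF (inherited from the log-concave PMF via cumulative sums) giving discrete convexity of its reciprocal through AM--GM, Hoeffding's Poisson--Binomial versus binomial convex ordering for the first inequality, and stochastic dominance of $\mathrm{Bin}(j,\overline{\mu}_{i^*(t),j})$ over $\mathrm{Bin}(j,x)$ combined with monotonicity of the reciprocal CDF for the second. The only cosmetic difference is that the paper carries out the last step via an explicit layer-cake computation before invoking Boland's binomial ordering result, whereas you invoke the standard equivalence between the usual stochastic order and expectations of monotone functions directly.
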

The result is derived by: ($i$) showing the \emph{discrete log-convexity} of $1/p_{i^*(t),t}^i$ as a function of the number of successes~\cite{HOGGAR1974concavity,Johnson_2006,Hill1999AdvancesIS} to pass from the expectation w.r.t.~the Poisson-Binomial to that w.r.t.~the binomial and obtain the first inequality and ($ii$) relying on \emph{stochastic dominance} \cite{boland2002stochastic,boland2004sstochasticordertesting,marshall2011inequalities} arguments to show the monotonicity of the expectation w.r.t.~the parameter of the binomial distribution and obtain the second inequality. 
A similar result is proven in the context of Gaussian priors, resorting, however, to standard arguments based on concentration inequalities (i.e., Chernoff bounds). 
Lemma~\ref{lemma:techlemma} represents our main technical novelty and can be of independent interest.
Indeed, it applies to: ($i$) \emph{any dynamic environment} in which the optimal arm $i^*(t)$ may change across rounds and ($ii$) \emph{sliding window} approaches. Thus, it provides a novel tool for the {frequentist} analysis of TS-like algorithms in dynamic settings even beyond the scope of the paper.
\section{Regret Analysis}\label{sec:AnalysisBetaTS}
We start the section by providing a general regret analysis of \texttt{ET-Beta-TS}  and \texttt{$\gamma$-ET-GTS} algorithms in the SRRBs setting, i.e., the algorithms presented in Section~\ref{sec:algs} with no sliding window, i.e., $\tau=T$ (Section~\ref{sec:upperBounds}). The regret bounds of the corresponding windowed versions, i.e.,  \texttt{ET-BETA-SWTS}  and \texttt{$\gamma$-ET-SWGTS}, is reported in Appendix~\ref{sec:SWapproach}. We conclude the section by deriving a regret lower bound and discuss the tightness of the upper bounds of the ET-TS algorithms (Section~\ref{sec:lowerBounds}).

\subsection{ Regret Upper Bounds of ET-TS Algorithms for SRRBs}\label{sec:upperBounds}
Before presenting the results, we introduce the following lemma relating the expected cumulative regret to the expected number of pulls of the suboptimal arms.

\begin{restatable}[Wald's Inequality for Rising Bandits]{lemma}{wald}\label{lemma:wald}
	Let $\mathfrak{A}$ be an algorithm and $T \in \Nat$ be a learning horizon, it holds that: $
		R_{\bm{\mu}}(\mathfrak{A},T) \le \sum_{i\neq i^*(T)} \Delta_i(T,1)\E_{\bm{\mu}}[N_{i,T}]$.
\end{restatable}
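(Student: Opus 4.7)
The plan is to decompose the regret by pairing the optimal reward sequence against the algorithm's realized pulls, and then invoke Assumption~\ref{ass:nonDecreasing} (monotonicity) on both sides to produce extreme-value surrogates. The key structural observation is that $\sum_{i\in\dsb{K}} N_{i,T}=T$, which allows a clean rearrangement.

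First, I would rewrite the two pieces of the regret in a common arm-indexed form. By the definition of $\overline{\mu}$, the expected cumulative reward of the benchmark policy is $T\overline{\mu}_{i^*(T)}(T)=\sum_{n=1}^T \mu_{i^*(T)}(n)$. Grouping the algorithm's rewards by arm and by the pull index $n=N_{I_t,t}$, I obtain $\sum_{t=1}^T \mu_{I_t}(N_{I_t,t})=\sum_{i\in\dsb{K}}\sum_{n=1}^{N_{i,T}}\mu_i(n)$, since as $t$ ranges over the rounds in which arm $i$ is played, $N_{i,t}$ takes each of the values $1,2,\dots,N_{i,T}$ exactly once. The $i=i^*(T)$ contribution on the algorithm's side then cancels the first $N_{i^*(T),T}$ terms of the optimal sum, leaving
\begin{align*}
R_{\bm{\mu}}(\mathfrak{A},T) = \E_{\bm{\mu}}\bigg[ \sum_{n=N_{i^*(T),T}+1}^T \mu_{i^*(T)}(n) \; - \sum_{i\neq i^*(T)} \sum_{n=1}^{N_{i,T}} \mu_i(n) \bigg].
\end{align*}

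Next, I would apply Assumption~\ref{ass:nonDecreasing} termwise. On the optimal side, each of the $T-N_{i^*(T),T}=\sum_{i\neq i^*(T)} N_{i,T}$ residual terms is upper bounded by $\mu_{i^*(T)}(T)$; on the suboptimal side, each term of the inner sum is lower bounded by $\mu_i(1)$, giving $\sum_{n=1}^{N_{i,T}}\mu_i(n)\ge N_{i,T}\mu_i(1)$. Combining these two pointwise bounds and taking expectation yields
\begin{align*}
R_{\bm{\mu}}(\mathfrak{A},T) \le \sum_{i\neq i^*(T)} \E_{\bm{\mu}}[N_{i,T}] \bigl(\mu_{i^*(T)}(T)-\mu_i(1)\bigr).
\end{align*}
Finally, since $\E_{\bm{\mu}}[N_{i,T}]\ge 0$, replacing $\mu_{i^*(T)}(T)-\mu_i(1)$ by $\Delta_i(T,1)=\max\{0,\mu_{i^*(T)}(T)-\mu_i(1)\}$ can only enlarge the right-hand side, yielding the stated bound.

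The main subtlety is combinatorial rather than analytic: the decomposition has to remove the $i=i^*(T)$ contribution exactly. Starting instead from the looser estimate $T\overline{\mu}_{i^*(T)}(T)\le T\mu_{i^*(T)}(T)=\sum_i N_{i,T}\mu_{i^*(T)}(T)$ would produce a spurious $\E_{\bm{\mu}}[N_{i^*(T),T}]\Delta_{i^*(T)}(T,1)$ term, which for a near-optimal algorithm dominates the bound and destroys sublinearity. Matching the $N_{i^*(T),T}$ initial terms of the optimal sum with the algorithm's plays of $i^*(T)$ before applying monotonicity is what avoids this pitfall.
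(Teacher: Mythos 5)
Your proposal is correct and follows essentially the same route as the paper's proof: rewrite the benchmark as $\sum_{t=1}^T \mu_{i^*(T)}(t)$, regroup the algorithm's rewards by arm and pull index, cancel the optimal arm's contribution against the first $N_{i^*(T),T}$ benchmark terms, and then bound the residual optimal terms by $\mu_{i^*(T)}(T)$ and the suboptimal terms by $\mu_i(1)$ using monotonicity. The only cosmetic difference is that you spell out the counting identity $T-N_{i^*(T),T}=\sum_{i\neq i^*(T)}N_{i,T}$ and the harmless passage to $\Delta_i(T,1)=\max\{0,\cdot\}$, which the paper leaves implicit.
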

It is worth noting that Lemma~\eqref{lemma:wald} holds with equality for stationary bandits. In the following, as it is typical in the literature of bandits in evolving environments \citep[e.g.,][]{garivier2011upper, trovo2020sliding, besson2019generalized}, we provide bounds on the expected value of plays of the suboptimal arms, under different assumptions for the distribution of the rewards (Assumptions~\ref{ass:bernoulli} and~\ref{ass:nonNeg}). From these results, the regret bound can be easily computed via Lemma~\ref{lemma:wald}. We start with the \texttt{ET-Beta-TS} algorithm. 

\begin{restatable}[\texttt{ET-Beta-TS} Bound]{thr}{ts} \label{thm:ts}
Let $\sigma \in \dsb{\sigma_{\bm{\mu}}(T), T}$. Under Assumptions~\ref{ass:nonDecreasing} and~\ref{ass:bernoulli}, for the \texttt{ET-Beta-TS} algorithm, for every arm $i \in \dsb{K}\setminus\{i^*(T)\}$, it holds for every $\epsilon \in (0,1]$, that:\footnote{For the sake of presentation, with the Big-O notation, we are neglecting terms depending on $\mu_i(T)$ and $\mu_{i^*(T)}(\sigma)$, but not explicitly on $T$. The full expression is reported in Equation~\eqref{eq:fullExpression} in the appendix.}
{\thinmuskip=2mu
  \medmuskip=2mu \thickmuskip=2mu\begin{align*} 
    \mathbb{E}_{\bm{\mu}}[N_{i,T}] \le  O \Bigg( \textcolor{vibrantRed}{\underbrace{\Gamma}_{\text{($i$)}}} + \textcolor{vibrantBlue}{\underbrace{   \frac{ (1+\epsilon)\log (T)}{d(\overline{\mu}_i(T),\overline{\mu}_1(\sigma))} + \frac{1}{\epsilon^2}}_{\text{($ii$)}}}
    + \textcolor{vibrantTeal}{\underbrace{\sum_{j=\Gamma}^{\sigma-1}\frac{\delta_{\mathrm{TV}}(\mathrm{Bin}(j,\overline{\mu}_{i^*(T)}(j)),\mathrm{Bin}(j,\overline{\mu}_{i^*(T)}(\sigma)) }{(1-\overline{\mu}_{i^*(T)}(\sigma))^{j+1}}}_{\text{($iii$)}}}
      \Bigg),
\end{align*}}%
where $d(x,y) = x \log \frac{x}{y} + (1-x) \log \frac{1-x}{1-y}$ is the KL divergence between Bernoulli distributions of parameters $x,y \in [0,1]$, 
$\delta_{\mathrm{TV}}(P, Q) \coloneqq \sup_{A \in \mathcal{F}}{\left| P(A) - Q(A) \right|}$ is the total variation divergence between $P$ and $Q$, and $\mathrm{Bin}(j,x)$ is the binomial distribution with $j$ trials and parameter $x$. 
\end{restatable}

Let us now move to the \texttt{$\gamma$-ET-GTS} algorithm.
\begin{restatable}[\texttt{$\gamma$-ET-GTS} Bound]{thr}{gts} \label{thm:gts}
Let $\sigma \in \dsb{\sigma_{\bm{\mu}}(T), T}$. Under Assumptions~\ref{ass:nonDecreasing} and~\ref{ass:nonNeg}, setting $\gamma \leq \min \left\{ \frac{1}{4\lambda^2}, 1\right\}$, 
for the \texttt{$\gamma$-ET-GTS} algorithm, for every arm $i \in \dsb{K}\setminus\{i^*(T)\}$, it holds that:
\begin{align*}
    \mathbb{E}_{\bm{\mu}}[N_{i,T}]\leq O \Bigg( \textcolor{vibrantRed}{\underbrace{\Gamma}_{\text{($i$)}}} &+ \textcolor{vibrantBlue}{\underbrace{\frac{\log (T\overline{\Delta}_i(\sigma,T)^2+e^6)}{\gamma\overline{\Delta}_i(\sigma,T)^2}}_{\text{($ii$)}}}+\textcolor{vibrantTeal}{\underbrace{\sum_{j=\Gamma}^{{\sigma}-1}\frac{\delta_{\mathrm{TV}}(\mathbb{P}_j,\mathbb{Q}_j(\overline{\mu}_{i^*(T)}(\sigma))) }{\mathrm{erfc}(\sqrt{{{\gamma j}}/{2}}(\overline{\mu}_{i^*(T)}(\sigma)))}}_{\text{($iii$)}}}\Bigg),
\end{align*}
where $\mathrm{erfc}(\cdot)$ is the complementary error function, $\mathbb{P}_j$ is the distribution of the sample mean of the first $j$ samples collected from arm $i^*(T)$, while $\mathbb{Q}_j(y)$ is the distribution of the sample mean of $j$ samples collected from \emph{any} $\lambda^2$-subgaussian distribution with mean $y$.
\end{restatable}
Both bounds on the expected number of pulls are characterized by three terms: ($i$) corresponds to the number of pulls $\Gamma$ performed during the \emph{\textcolor{vibrantRed}{forced exploration }}; ($ii$) is the \emph{\textcolor{vibrantBlue}{expected number of pulls in a stationary bandit}} with expected rewards $\overline{\mu}_i(T)$ and $\overline{\mu}_{i^*(T)}(\sigma)$; ($iii$) is a dissimilarity index represented by the cumulative \emph{\textcolor{vibrantTeal}{total variation (TV) distance}} measuring the dissimilarity between the real distribution of the optimal arm's rewards and that of a stationary bandit with expected reward $\overline{\mu}_{i^*(T)}(\sigma)$ which originates from a \emph{change of measure} argument.\footnote{The state-of-the-art bounds for the total variation distance are discussed in Lemmas \ref{lemma:delta} and \ref{lem:gil}.} In detail, for \texttt{ET-Beta-TS}, term ($iii$) is related to the TV between the distribution of the reward at the $j$-th pull $\mathrm{Bin}(j,\overline{\mu}_{i^*(T)}(j))$ and that evaluated the reference number of pulls $\mathrm{Bin}(j,\overline{\mu}_{i^*(T)}(\sigma))$. Similarly, for \texttt{$\gamma$-ET-GTS}, ($iii$) is related to the TV divergence between the distribution $\mathbb{P}_j$ of the sample mean of the first $j$ rewards sampled from the optimal arm $i^*(T)$ and the distribution $\mathbb{Q}_j$ of the sample mean of $j$ samples obtained from \emph{an arbitrarily chosen} subgaussian distribution with mean $\overline{\mu}_{i^*(T)}(\sigma)$ and same subgaussian parameter $\lambda^2$. These terms vanish in a stationary bandit. Indeed, in a stationary bandit, by setting $\Gamma=0$, we retrieve the state-of-the-art bounds for TS with Bernoulli and Gaussian priors (see Theorem 1.1 and 1.3 from~\citet{agrawal2017near}).
The free parameter $\sigma \in \dsb{\sigma_{\bm{\mu}}(T),T}$ can be chosen to tighten the bounds, exercising the tradeoff between term ($ii$), that decreases with $\sigma$ since $\overline{\mu}_{i^*(T)}(\sigma)$ moves away from $\overline{\mu}_i(T)$ and term ($iii$), that increases with $\sigma$ being the summation made of $\sigma - \Gamma$ terms. Whenever $\sigma$ can be selected as a constant independent from $T$ (i.e., when $\sigma_{\bm{\mu}}(T)$ is independent of $T$), term ($iii$) is a constant of order $O(\sigma)$, leading to a regret bound that matches that of stationary bandits. In these cases, we can freely remove the forced exploration by setting $\Gamma = 0$. However, whenever $\sigma$ depends on $T$, the bounds suggest that {enforcing} the exploration through $\Gamma$ can be beneficial since term ($i$) can be smaller than term ($iii$) when the TV divergence is close to $1$ (due to the denominators in the summations of term ($iii$)). Finally, we note that in \texttt{$\gamma$-ET-GTS}, the non-negativity of the reward, enforced by Assumption~\ref{ass:nonNeg}, is needed to bound the denominator of the addenda in terms ($iii$).\footnote{If we consider rewards $X \ge -b$ for some $b \ge 0$ a.s., we would replace the denominator in term ($iii$) of Theorem~\ref{thm:gts} with $\mathrm{erfc}(\sqrt{{{\gamma j}}/{2}}(\overline{\mu}_{i^*(T)}(\sigma)+b))$ (see Equation~\ref{eq:nonNegChange} in the appendix).} We highlight that Theorems~\ref{thm:ts} and~\ref{thm:gts} hold the for every SRRB under the assumption that the expected reward functions are non-decreasing (Assumption~\ref{ass:nonDecreasing}) only, with no need of enforcing the \emph{concavity} assumption. {We remark that our bounds are \emph{instance-dependent} (w.r.t. $\overline{\Delta}_i$ and $\sigma_{\bm{\mu}}$) and, 
as shown in previous works~\cite{metelli2022stochastic,fiandri2024rising}, the worst-case regret over the class of all rising bandits (i.e., the \emph{minimax} regret) degenerates to linear if no further structure is enforced, making the problem unlearnable in the minimax sense.}

\subsection{Explicit Regret Upper and Lower Bounds}\label{sec:lowerBounds}
In this section, we first particularize Theorems~\ref{thm:ts} and~\ref{thm:gts} to obtain more explicit bounds, then, we derive e regret lower bound, and we discuss their tightness.
We introduce a subset of SRRBs parametrized by a bound $\overline{\sigma} \ge 0$ to the complexity index $\sigma_{\bm{\mu}}(T)$: $
    \mathcal{M}_{\overline{\sigma}} \coloneqq \{ \bm{\mu} \text{ SRRB } \,:\,  \sigma_{\bm{\mu}}(T) \le \overline{\sigma} \}$. We denote with $\mathcal{M}_{\overline{\sigma}}^{\mathrm{det}}$ the subset of the \emph{deterministic} SRRBs in $\mathcal{M}_{\overline{\sigma}}$.
Intuitively, $\overline{\sigma}$ controls the complexity of the SRRB instances, as it corresponds to the number of pulls needed to allow the algorithm to distinguish the optimal arm $i^*(T)$ from the suboptimal ones in the worst instance of $ \mathcal{M}_{\overline{\sigma}}$. In particular, if $\overline{\sigma}' \ge \overline{\sigma}$, we have that $\mathcal{M}_{\overline{\sigma}} \subseteq \mathcal{M}_{\overline{\sigma}'}$, and $\mathcal{M}_{T}$ is the set of all SRRBs. This allows
deriving a more explicit regret bound for our ET-TS algorithms.

\begin{restatable}[Explicit \texttt{Beta-TS} and \texttt{$\gamma$-GTS} Bound]{coroll}{cor}\label{thm:noregret}
     Let $\overline{\sigma} \ge 0$. Under the same assumptions of Theorems~\ref{thm:ts} and~\ref{thm:gts}, setting $\Gamma = \alpha\overline{\sigma}$, with $\alpha \ge 1$,  for both \texttt{Beta-TS} and \texttt{$\gamma$-GTS}, for every $\bm{\mu} \in\mathcal{M}_{\overline{\sigma}}$ and  for every arm $i \in \dsb{K} \setminus \{i^*(T)\}$, it holds that:
\begin{equation*} 
    \mathbb{E}_{\bm{\mu}}[N_{i,T}] \le  O \bigg( {\overline{\sigma}} + { \frac{ \log (T)}{\overline{\Delta}_i(\overline{\sigma},T)^2} }
      \bigg).\footnote{Details about the choice of the exploration parameter are provided in Appendix~\ref{apx: explore}.}
\end{equation*}
\end{restatable}

Thus, we identify two components: ($i$) the complexity index $\overline{\sigma}$ and ($ii$) the usual expected number of pulls $O ( \frac{\log(T)}{\overline{\Delta}_i^2} )$ unavoidable even in any stationary stochastic bandit~\cite{lattimore2020bandit}. The following regret lower bound shows that the dependence on $\overline{\sigma}$ is also unavoidable even for the deterministic SRRBs.

\begin{restatable}[Lower Bound]{thr}{lb} \label{thm:lb}
    Let $T \in \mathbb{N} $ and $\overline{\sigma} \in \dsb{2,\frac{T-1}{2}}$. For every algorithm $\mathfrak{A}$, it holds that:
    \begin{equation}
        \sup_{\bm{\mu} \in \mathcal{M}^{\mathrm{det}}_{\overline{\sigma}}} R_{\bm{\mu}}(\mathfrak{A},T)\ge \frac{K}{64}(\overline{\sigma}-2).
    \end{equation}
\end{restatable}


First of all, we note that the lower bound is derived considering deterministic instances. Indeed, the complexity index $\sigma_{\bm{\mu}}(T)$ is not affected by the possible reward stochasticity.
Thus, 
\emph{any algorithm} that wishes to be no-regret (with no additional information), cannot avoid playing every arm a number of times proportional to $\overline{\sigma}$, as it is always possible to design an instance that needs at least $\overline{\sigma}$ pulls to differentiate $i^*(T)$ from the suboptimal arms. Finally, by recalling that the logarithmic regret component in Corollary~\ref{thm:noregret} comes from the reward stochasticity (and it is tight for standard bandits), Theorem~\ref{thm:lb} shows that our bound of Corollary~\ref{thm:noregret} is tight in the dependence on the complexity index bound $\overline{\sigma}$ for the class of SRRBs $\mathcal{M}_{\overline{\sigma}}$.

\section{Numerical Simulations}\label{sec:Experiments}
In this section, we validate our algorithms on both synthetic and real-world environments. 

\subsection{Synthetic Experiments}
We consider the same 15-arms environment of~\cite{metelli2022stochastic} (Figure~\ref{fig:15armexp100k}) and compare against \texttt{R-ed-UCB}~\cite{metelli2022stochastic}.\footnote{We also compare with the other baselines considered in~\citet{metelli2022stochastic} in Appendix~\ref{apx:comparison222}.} The parameters, complying with the recommendation of each algorithm, and the parameters defining the environment are provided in Appendix~\ref{app:ex}. We evaluate empirical cumulative regret $\hat{R}(\mathfrak{A}, t)$ averaged over $50$ independent runs (with the corresponding standard deviations), over a time horizon of $T = 100,000$ rounds. We run the sliding-window algorithms 
against the worst-case misspecification for the forced exploration parameter, i.e., $\Gamma=0$ for $\texttt{Beta-SWTS}$ and $\Gamma=1$ for $\texttt{$\gamma$-SWGTS}$.\footnote{Our analysis suggests that a sliding-window can mitigate a too aggressive (small) choice of $\Gamma$ (Appendix~\ref{sec:SWapproach}).}
For the non-windowed algorithms (\texttt{ET-Beta-TS} and \texttt{$\gamma$-ET-GTS}), we consider $\Gamma=2000$. We also include their standard versions (designed for the stationary setting) without the forced exploration  (\texttt{Beta-TS} and \texttt{$\gamma$-GTS}), to assess whether the proposed modifications behave as suggested by our analysis. 

\textbf{Results.}~~In Figure \ref{fig:15armexp100k}, we observe that \texttt{Beta-TS}, $\gamma$-\texttt{SWGTS}, and \texttt{Beta-SWTS} suffer smaller regret than \texttt{R-ed-UCB} over the entire time horizon, whereas \texttt{ET-Beta-TS} and \texttt{$\gamma$-ET-GTS}, coherently with the theoretical analysis, suffer linear regret in the forced exploration phase and then catch up to \texttt{R-ed-UCB} for $t \gtrsim 50,000$. Indeed, there is statistical evidence of the superiority of our TS-like from that point on. Moreover, all the algorithms, except \texttt{Beta-TS}, \texttt{$\gamma$-GTS} and \texttt{$\gamma$-SWGTS}, display a flattening regret curve for $t \gtrsim  30,000$. This is explained since \texttt{Beta-TS} and \texttt{$\gamma$-GTS} are designed for stationary bandits, while, as we shall see, the performance of \texttt{$\gamma$-SWGTS} depends on the window size.

\begin{figure}[t!]
\centering
\begin{minipage}{.24\textwidth}
  \centering
  \vspace{-.3cm}
  \includegraphics[width=.98\textwidth]{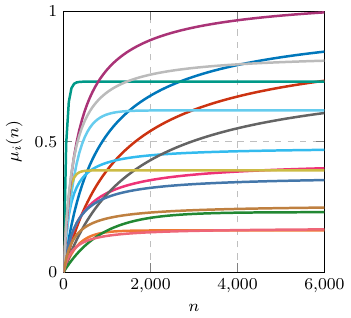}
  \captionof{figure}{15-arm setting: expected reward functions of the arms.}
  \label{fig:15arms}
\end{minipage}%
\hfill
\begin{minipage}{.24\textwidth}
  \centering
  \includegraphics[width=1.05\linewidth]{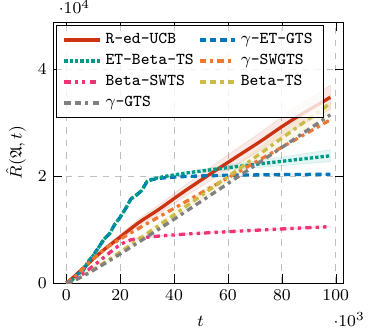}
  \captionof{figure}{Average cumulative regret in the 15-arm setting ($50$ runs $\pm$ std).}
  \label{fig:15armexp100k}
\end{minipage}%
\hfill
\begin{minipage}{.48\textwidth}
    \subfloat[][Sensitivity on $\tau = T^\alpha$.]{\includegraphics[width=.49\textwidth]{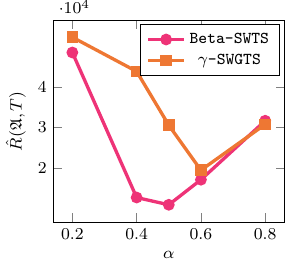}} \label{fig:sensitivity1}
    \subfloat[][Sensitivity on $\Gamma$.]{\includegraphics[width=.49\textwidth]{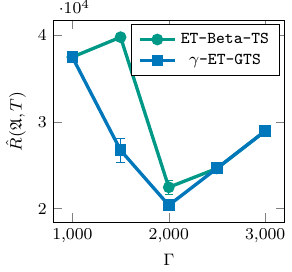}} \label{fig:sensitivity2}
\caption{Average cumulative regret of (a) \texttt{Beta-SWTS} and   \texttt{$\gamma$-SWGTS} for different window sizes $\tau=T^{\alpha}$ (b)  \texttt{ET-Beta-SWTS} and   \texttt{$\gamma$-ET-SWGTS} for different forced exploration $\Gamma$.}
\label{fig:sens15100k}
\end{minipage}
\end{figure}

\textbf{Sensitivity Analysis.}~~
We run a sensitivity analysis on the sliding window $\tau$ for \texttt{Beta-SWTS} and \texttt{$\gamma$-SWGTS} and on the $\Gamma$ parameter for \texttt{ET-Beta-TS} and \texttt{$\gamma$-ET-GTS}. We provide the value of their average regret $\hat{R}(\mathfrak{A}, T)$ at the end of the time horizon $T$, where the average has been taken over $50$ runs of the algorithms on the same 15-arms setting. The results are reported in Figure \ref{fig:sens15100k}.\footnote{We report the standard deviation bars that, sometimes, are not clearly visible due to their limited size.} 
Coherently with our analysis, whenever the forced exploration parameter $\Gamma$ or the window size $\tau$ is large enough, our algorithms are able to identify the best arm consistently. A too-small value of $\Gamma$, instead, leads to a poor exploration that makes the regret grow fast. Similarly, employing a too-small window size $\tau$, introduces a large variance resulting in a poorer performance.

\begin{wrapfigure}{r}{0.3\textwidth}
\vspace{-.5cm}
    \resizebox{\linewidth}{!}{\includegraphics{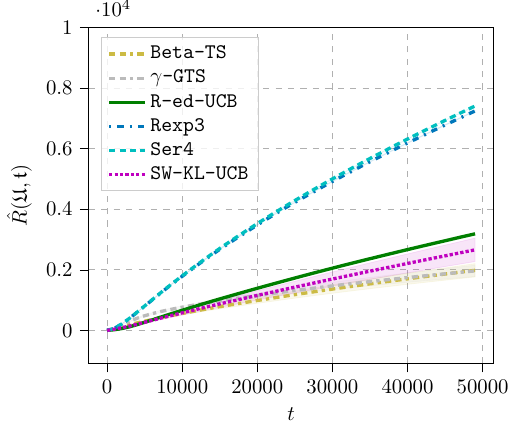}}
    \captionof{figure}{Average cumulative regret in the IMBD setting ($30$ runs $\pm$ std).}\label{fig:imdb}
    \vspace{-.5cm}
\end{wrapfigure}
\subsection{IMDB Experiment}
We consider the same \emph{online model selection} task on the IMDB dataset proposed in \cite{metelli2022stochastic}, where each arm corresponds to a different online-trained classifier and the learner gets reward $1$ when the predicted label is correct, $0$ otherwise. Whenever a classifier is pulled, one step of training is performed. We compare the non-windowed approaches (\texttt{Beta-TS} and \texttt{$\gamma$-GTS}), over a time horizon of $T=50,000$ considering the empirical cumulative regret over 30 runs, against several baselines considered in \cite{metelli2022stochastic} with the values of the hyperparameters defined there.

\textbf{Results.}~~ The results are reported in Figure~\ref{fig:imdb}. We observe that, as we discussed, TS-like algorithms can be still effective whenever the complexity term 
$\sigma_{\bm{\mu}}(T)$ is low enough. In fact \texttt{Beta-TS} and standard $\gamma$\texttt{-GTS} outperform the baselines, even those designed explicitly for SRRBs bandits, i.e., \texttt{R-ed-UCB}.
Additional experiments are reported in Appendix~\ref{apx:imdb}.

\section{Conclusions}

In this paper, we investigated the properties of TS-like algorithms for regret minimization in the setting of SRRBs. We analyzed the \texttt{TS} algorithms with Beta and Gaussian priors. In both cases, we derived a general analysis that highlights the challenges of the setting, which makes use of novel technical tools of independent interest. We showed that in the SRRB setting, the classical logarithmic regret is increased by a term that depends on the total variation distance between pairs of suitably defined distributions. We also specialize these results for a parametric subset of SRRBs showing the tightness of the bounds on the regret, by inferring the complexity of the problem via a lower bound, dependent on the complexity index that we have introduced. 
Finally, we validated our theoretical findings with numerical experiments succeeding in showing the advantages of the approaches we proposed on both synthetic experiments and real-world data.
\newpage
\bibliographystyle{abbrvnat}
\bibliography{biblio}

\newpage


\appendix


\section{Comparison with the \texttt{R-ed-UCB} Algorithm} \label{sec:Comparison}

\begin{figure}[H]
\centering
\subfloat[]{
	\begin{tikzpicture}[scale=.7]
		\draw[->] (1,0) -- (7,0);
		\draw[->] (1,0) -- (1,5) node[above] {$\mu(n)$};
		\draw[dashed](1,4) -- (5,4);
		\node[right] at (7,0) {$n$};
		\draw[domain=1:7, smooth, variable=\t, blue, ultra thick] plot({\t},{4*(1-2^(-\t))});
		\draw[domain=1:7,smooth, variable=\t,red, ultra thick] plot({\t},{4*(1-2^(-2*\t+2))});
		\node[above] at (7,4) {{\color{blue} $\mu_1(n)$}};
		\node[above] at (7,2.5) {{\color{red} $\mu_2(n)$}};
		\node[above] at (0,2.7) {$1$};
		
	\end{tikzpicture} \label{fig:env1}
}
\subfloat[]{
    \begin{tikzpicture}[scale=.7]
	\draw[->] (0,0) -- (6,0);
	\draw[->] (0,0) -- (0,5) node[above] {$\mu(n)$};

    \node[right] at (6,0) {$n$};
    \draw[dashed](0,2.5) -- (4.5,2.5);
	
	\draw[domain=0:6, smooth, variable=\t, blue, ultra thick] plot ({\t},{5*(1-1/(2+\t))});
 
	\draw[domain=0:6, smooth, variable=\t, red, ultra thick] plot ({\t},{5*(0.5-1/(2+\t))});
	\node[above] at (6,5) {{\color{blue} $\mu_1(n)$}};
	\node[above] at (6,2) {{\color{red} $\mu_2(n)$}};
\end{tikzpicture} \label{fig:env2}
}
\caption{Different environments for the SRRB problem.}
\label{fig:env}
\end{figure}
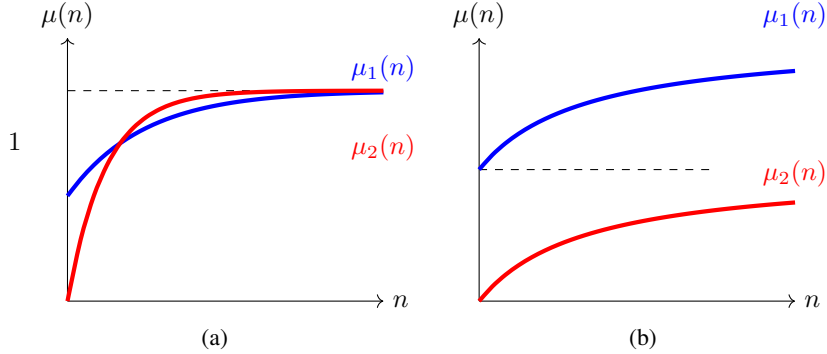

In this appendix, provide and analyze two SRRB instances to highlight the advantages and disadvantages of the proposed algorithms when compared with the optimistic algorithm \texttt{R-ed-UCB}~\cite{metelli2022stochastic} designed for SRRB settings. We recall the regret result provided by~\cite{metelli2022stochastic} for \texttt{R-ed-UCB}: 

\begin{restatable}[Theorem 4.4, \cite{metelli2022stochastic}]{thr}{metelli} \label{thm:metelli}
 \texttt{R-ed-UCB} with a suitable exploration index (see \cite{metelli2022stochastic}) $\alpha > 2$, and $\epsilon \in (0,1/2)$ suffers an expected regret for every $q \in [0,1]$ bounded as:
\begin{equation}
	R(\texttt{R-ed-UCB}, T) \leq O \left( \frac{K}{\epsilon}T^{\frac{2}{3}}(\alpha \log T)^{\frac{1}{3}} + \frac{K T^q}{1-2 \epsilon} \Upsilon \left(\Bigl \lceil (1-2 \epsilon) \frac{T}{K} \Bigr \rceil, q\right)\right),
\end{equation}
where $\Upsilon(M,q) := \sum_{l=1}^{M-1} \max_{i \in \dsb{K}} \{ \gamma_i(l)^q \}$ is a complexity index depending on the expected rewards.
\end{restatable}

\textbf{First Instance.}~~We start with an instance in which \texttt{R-ed-UCB} succeeds in delivering a sublinear regret, while our algorithms may fail. We define the expected reward functions as follows: $\mu_1(n) = 1 - 2^{-n}$ and $\mu_2(n) = 1 - 2^{-2n+2}$ (Figure~\ref{fig:env1}). It is not possible to find a 
 $\sigma$, so that $\overline{\Delta}(\sigma,T) \ge \overline{\Delta}$, with $\overline{\Delta}>0$ and not vanishing with $T$. This is visible since $\max_{\sigma \le T} \overline{\Delta}(\sigma,T) \le \frac{5}{6T}$. Therefore, we cannot guarantee that our algorithms provide a sublinear regret.
Conversely, by using the definition of $\Upsilon$, for $q \in [0,1]$, we have:
\begin{align*}
	\Upsilon \left( \Bigl \lceil (1-2 \epsilon) \frac{T}{K} \Bigr \rceil, q \right) = \sum_{n=1}^{\lceil (1-2 \epsilon) \frac{T}{K}  \rceil} \max_{y \in \{1,2\}}\{e^{- y \lambda n} - e^{-y \lambda (n+1)}\}^{q} \le \frac{3}{4} + \frac{1}{q \log 2},
\end{align*}
which implies that Theorem~\ref{thm:metelli} provides a regret of order $O(T^{2/3}+T^{q}/q)$ for the \texttt{R-ed-UCB} algorithm which is sublinear for every $q < 1$ and, selecting $q = 1/\log T$ we obtain the best rate $O(T^{2/3} + \log T)$.

\textbf{Second Instance.}~~The second instance is designed so that our algorithms provide sublinear regret, while \texttt{R-ed-UCB} fails.
We define the expected reward functions as: $\mu_1(n) = 1 - \frac{2^{\lambda-1}}{(n+1)^\lambda}$, and $\mu_2(n) =  \frac{1}{2} - \frac{2^{\lambda-1}}{(n+1)^\lambda}$, where $\lambda \in [0,1]$ is an arbitrary parameter (Figure~\ref{fig:env2}). With $\sigma = 2$ and $\overline{\Delta}(2,T)= \overline{\mu}_1(2)-\frac{1}{2}$, that is independent from $T$, since $\overline{\mu}_1(2) > \frac{1}{2}\ge\overline{\mu}_2(T)$ for all possible time horizons $T$. The $\Upsilon$ factor of this setting for every $q \in [0,1]$ is given by:
\begin{align*}
   \Upsilon \left( \Bigl \lceil (1-2 \epsilon) \frac{T}{K} \Bigr \rceil, q \right) = \hspace{-0.4 cm} \sum_{n=1}^{   \lceil (1-2 \epsilon) \frac{T}{K}  \rceil } \left( \frac{2^{\lambda-1}}{(n+1)^\lambda} - \frac{2^{\lambda-1}}{(n+2)^\lambda} \right)^q \ge O 
   \begin{cases}
        \lambda^q & \text{if } q(\lambda+1)>1 \\
        \lambda^\frac{1}{\lambda+1} \log T & \text{if } q(\lambda+1)=1 \\
        \lambda^q T^{1-q(\lambda+1)} & \text{otherwise}
   \end{cases}.
\end{align*}
We prove in Appendix~\ref{apx:detailsInstances}, that the optimal choice of $q$ is $1/(\lambda+1)$, according to Theorem~\ref{thm:metelli}, this leads to a regret of order $O(T^{2/3}+(\lambda T)^{\frac{1}{\lambda+1}})$ for \texttt{R-ed-UCB}. Thus, for instance, choosing $\lambda = 1/3$, \texttt{R-ed-UCB} attains a regret bound of order $O(T^{3/4})$ while our approaches succeed to achieve an instance-dependent $O(\log(T))$ regret.

\subsection{Detailed Computations for the Instances of Appendix~\ref{sec:Comparison}}\label{apx:detailsInstances}

\textbf{First Instance.} Let us upper bound the complexity index:
\begin{align*}
	\Upsilon \left( \Bigl \lceil (1-2 \epsilon) \frac{T}{K} \Bigr \rceil, q \right) & = \sum_{n=1}^{\lceil (1-2 \epsilon) \frac{T}{K}  \rceil} \max\{2^{-n}-2^{-(n+1)}, 2^{-2n+2}-2^{-2(n+1)+2}\}^q \\
     & \le \frac{3}{4} + \sum_{n=2}^{\lceil (1-2 \epsilon) \frac{T}{K}  \rceil} (2^{-n}-2^{-(n+1)} + 2^{-2n+2}-2^{-2(n+1)+2})^q \\
     & = \frac{3}{4} + \sum_{n=2}^{\lceil (1-2 \epsilon) \frac{T}{K}  \rceil} (2^{-1-2n}(6+2^n))^q\\
     & \le \frac{3}{4} + \sum_{n=2}^{\lceil (1-2 \epsilon) \frac{T}{K}  \rceil} \left(\frac{5}{4} 2^{-n}\right)^q\\
    & \le \frac{3}{4} + \int_{n=2}^{+\infty} (2^{(-1-2n)}(6+2^n))^q \de n \\
    & = \frac{3}{4} + \frac{1}{q \log 2} \left(\frac{5}{8}\right)^q\le \frac{3}{4} + \frac{1}{q \log 2},
\end{align*}
where we bounded the maximum with the sum, performed some algebraic bounds, and bounded the summation with the integral. For this instance, we compute the average expected rewards, for every $n \in \dsb{N}$:
\begin{align}
    & \overline{\mu}_1(n) = 1 - \frac{1}{n} \left( 1 - 2^{-n} \right), \\
    & \overline{\mu}_1(n) = 1 - \frac{4}{3n} \left( 1 - 2^{-2n} \right).
\end{align}
We will show that:
\begin{align}
    \inf_{T \in \mathbb{N}} \max_{\sigma \in \dsb{T}} \overline{\Delta}(\sigma,T) = 0.
\end{align}
Let us fix $T \ge 1$, we have:
\begin{align}
    \max_{\sigma \le T} \overline{\Delta}(\sigma,T) & = \max_{\sigma \le T} \left\{  1 - \frac{1}{\sigma} \left( 1 - 2^{-\sigma} \right) -  1 + \frac{4}{3T} \left( 1 - 2^{-2T} \right)\right\} \\
    & = 1 - \frac{1}{T} \left( 1 - 2^{-T} \right) -  1 + \frac{4}{3T} \left( 1 - 2^{-2T} \right) \\
    & \le \frac{1}{3T} + \frac{2^{-T}}{T} \le \frac{5}{6T},
\end{align}
being the function $- \frac{1}{\sigma} \left( 1 - 2^{-\sigma} \right)$ non-decreasing in $\sigma$. 

\textbf{Second Instance.} Let us lower bound the complexity index:
\begin{align*}
     \Upsilon \left( \Bigl \lceil (1-2 \epsilon) \frac{T}{K} \Bigr \rceil, q \right) &=  \sum_{n=1}^{   \lceil (1-2 \epsilon) \frac{T}{K}  \rceil } \left(\frac{2^{\lambda-1}}{(n+1)^\lambda} - \frac{2^{\lambda-1}}{(n+2)^\lambda}\right)^q \\
     & \ge \sum_{n=1}^{   \lceil (1-2 \epsilon) \frac{T}{K}  \rceil } \left( \frac{2^{\lambda-1}\lambda}{(n+2)^{\lambda+1}} \right)^q, \\
\end{align*}
where we used $\frac{2^{\lambda-1}}{(n+1)^\lambda} - \frac{2^{\lambda-1}}{(n+2)^\lambda} \ge \min_{x \in [n,n+1]}\frac{\partial}{\partial x} \left(1 - \frac{2^{\lambda-1}}{(x+1)^\lambda}\right) = \frac{2^{\lambda-1}\lambda}{(n+2)^{\lambda+1}}$. For $q(\lambda+1)>1$, we proceed as follows:\footnote{We use big-O notation to highlight the dependences on $\lambda \rightarrow 0$ and $T \rightarrow +\infty$.}
\begin{align*}
     \sum_{n=1}^{   \lceil (1-2 \epsilon) \frac{T}{K}  \rceil } \left( \frac{2^{\lambda-1}\lambda}{(n+2)^{\lambda+1}} \right)^q \ge 2^{q(\lambda-1)}3^{-q (\lambda+1)}\lambda^q = O(\lambda^q).
\end{align*}
Instead, for $q(\lambda+1)=1$, we bound the summation with the integral:
\begin{align*}
    \sum_{n=1}^{   \lceil (1-2 \epsilon) \frac{T}{K}  \rceil } \left( \frac{2^{q(\lambda-1)}\lambda^q}{n+2} \right)^q  & \ge    \int_{n=1}^{  (1-2 \epsilon) \frac{T}{K} } \left( \frac{2^{q(\lambda-1)}\lambda^q}{n+2} \right)  \mathrm{d} n\\
    & \ge 2^{q(\lambda-1)}\lambda^q \log \left( (1-2 \epsilon) \frac{T}{K}  - \frac{2}{3} \right) = O (\lambda^q \log T).
\end{align*}
Finally, for $q(\lambda+1)<1$, we still bound the summation with the integral:
\begin{align*}
    \sum_{n=1}^{   \lceil (1-2 \epsilon) \frac{T}{K}  \rceil } \left( \frac{2^{\lambda-1}\lambda}{(n+2)^{\lambda+1}} \right)^q & \ge    \int_{n=1}^{  (1-2 \epsilon) \frac{T}{K} } \left( \frac{2^{\lambda-1}\lambda}{(n+2)^{\lambda+1}} \right)^q  \mathrm{d} n\\
    & \ge \frac{2^{q(\lambda-1)}\lambda^q}{1-q(\lambda+1)} \left(\left( (1-2 \epsilon) \frac{T}{K} \right)^{1-q(\lambda+1)} - 3^{1-q(\lambda+1)} \right) \\&
    = O(\lambda^q T^{1-q(\lambda+1)}).
\end{align*}
Now, recalling that the instance-dependent component of the regret of Theorem~\ref{thm:metelli} is in the order of $T^q  \Upsilon \left( \Bigl \lceil (1-2 \epsilon) \frac{T}{K} \Bigr \rceil, q \right) $, we have for the three cases the optimal choice of $q$ that minimizes the regret:
\begin{align*}
    & q(\lambda+1)>1 \implies q \downarrow \frac{1}{\lambda+1} \implies O\left( \lambda^{\frac{1}{\lambda+1}} T^{\frac{1}{\lambda+1}} \right); \\
    & {q(\lambda+1)=1} \implies q=\frac{1}{\lambda+1} \implies O\left( \lambda^{\frac{1}{\lambda+1}} T^{\frac{\lambda}{\lambda+1}} \log T \right); \\
    & {q(\lambda+1)<1} \implies q \uparrow \frac{1}{\lambda+1} \implies O\left( \lambda^{\frac{1}{\lambda+1}} T^{\frac{1}{\lambda+1}} \right). 
\end{align*}
Thus, we have that the bound of the instance-dependent component of the regret is at least $O\left( T^{\frac{1}{\lambda+1}} \right)$.

\clearpage
\section{Regret Analysis of Sliding Window Approaches} \label{sec:SWapproach}
The main drawback of the previously presented approach is that they use \emph{all the samples from the beginning of learning} for estimating the average expected reward. However, in some cases, it might be convenient to forget the past and focus on the most recent samples only. 

\textbf{Preliminaries.}~~We extend the definitions of Section~\ref{sec:problem} to account for a sliding window. For every arm $i \in \dsb{K}$, round $t \in \dsb{T}$, and window size $\tau \in \dsb{T}$, we define the \emph{windowed average expected reward} as $\overline{\mu}_{i}(t;\tau) \coloneqq \frac{1}{\tau} \sum_{l=t-\tau+1}^t \mu_i(l)$.
Furthermore, we define the minimum number of pulls needed so that the optimal arm $i^*(T)$ can be identified as optimal in a window of size $\tau$:
\begin{align*}
    &\sigma_i'(T;\tau) \hspace{-0.1 cm}\coloneqq \hspace{-0.1 cm}\min\{ \{l\hspace{-0.05 cm} \in \hspace{-0.05 cm}\dsb{T} : \overline{\mu}_{i^*(T)}(l;\tau) > \mu_i(T)\} \hspace{-0.09 cm}\cup \hspace{-0.09 cm} \{+\infty\} \}, \nonumber\\ &\sigma'(T;\tau):=\max_{i \neq i^*(T)} \sigma_i'(T;\tau). 
\end{align*}
These definitions resemble those of Equation~\eqref{eq:sigmaDef}. However, here they involve the windowed average expected reward of the optimal arm $\overline{\mu}_{i^*(T)}(l;\tau)$ compared against the expected reward (not averaged) of the other arms at the end of the learning horizon $\mu_i(T)$. Furthermore, for some values of $\tau$, a number of pulls $l$ so that $\overline{\mu}_{i^*(T)}(l;\tau) > \mu_i(T)$ might not exist. In such a case, we set $ \sigma_i'(T;\tau)$ (and, thus, $\sigma'(T;\tau)$) to $+\infty$. Nevertheless, as visible in Figure~\ref{fig:spiegone}, in some cases $\sigma'(T;\tau) \ll \sigma(T)$. This justifies the introduction of the complexity index $\sigma'(T;\tau)$ that will appear in the regret bounds presented in this section. Finally, we introduce a new definition of suboptimality gaps: $\Delta_i'(T;\tau) \coloneqq \overline{\mu}_{i^*(T)}(\sigma'(T;\tau);\tau) - \mu_i(T)$ for every arm $i \in \dsb{K}$.
\begin{figure}[th!]
\begin{center}
\includegraphics{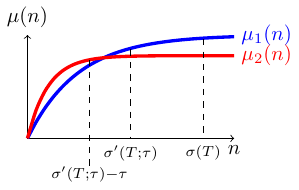}
\end{center}
    \caption{Visual representation of $\sigma'(T;\tau)$, a point in which $\overline{\mu}_{i^*(T)}(\sigma'(T;\tau), \tau) > \mu_i(T)$.}
    \label{fig:spiegone}
\end{figure}

\textbf{Regret.}~~In this section, we analyze the sliding window versions of TS with Bernoulli priors, namely \texttt{ET-Beta-SWTS} (Algorithm~\ref{alg:betats}), and Gaussian priors, namely $\gamma$-\texttt{ET-SWGTS} (Algorithm~\ref{alg:gts}).
The following results provide the regret upper bounds achieved by these algorithms as a function $\tau$. 




\begin{restatable}[\texttt{ET-Beta-SWTS} Bound]{thr}{swbeta}\label{thr:swbeta}
Under Assumption~\ref{ass:bernoulli}, for the \texttt{ET-Beta-SWTS} algorithm with a window of size $\tau \in \dsb{T}$, for every arm $i \in \dsb{K}\setminus\{i^*(T)\}$, it holds that:
\begin{align*}
	\mathbb{E}[N_{i,T}]\leq O &\Bigg(\textcolor{vibrantRed}{\underbrace{\Gamma}_{\text{($i$)}}} + \textcolor{vibrantBlue}{\underbrace{\frac{T\log(T)}{\tau(\Delta_i'(T;\tau))^{3}}}_{\text{($ii$)}}}+\textcolor{vibrantTeal}{\underbrace{\frac{\sigma'(T;\tau)-\Gamma}{(1-\overline{\mu}_{i^*(T)}(\sigma'(T;\tau); \tau))^{\tau+1}}}_{\text{($iii$)}}}\Bigg).
\end{align*}
\end{restatable}

\begin{restatable}[$\gamma$-\texttt{ET-SWGTS} Bound]{thr}{gtssw}\label{thr:gtssw}
Under Assumption~\ref{ass:nonNeg}, setting $\gamma \leq \min \left\{ \frac{1}{4\lambda^2}, 1\right\}$, 
for the \texttt{$\gamma$-ET-SWGTS} algorithm with a window of size $\tau \in \dsb{T}$, for every arm $i \in \dsb{K}\setminus\{i^*(T)\}$, it holds that:
\begin{align*}
	\mathbb{E}[N_{i,T}]\leq &O \Bigg( \textcolor{vibrantRed}{\underbrace{\Gamma}_{\text{($i$)}}} + \textcolor{vibrantBlue}{\underbrace{\frac{T\log(T(\Delta_i'(T;\tau))^2+e^6)}{\gamma\tau(\Delta_i'(T;\tau))^{2}}+\frac{T}{\tau}}_{\text{($ii$)}}}+\textcolor{vibrantTeal}{\underbrace{\frac{\sigma'(T;\tau)-\Gamma}{\mathrm{erfc}(\sqrt{{\gamma\tau}/{2}}(\overline{\mu}_{i^*(T)}(\sigma'(T;\tau);\tau))}}_{\text{($iii$)}}} \Bigg).
\end{align*} 
\end{restatable}

We recognize the terms related to ($i$) the number of pulls $\Gamma$ performed during the \emph{\textcolor{vibrantRed}{forced exploration }}; ($ii$) the \emph{\textcolor{vibrantBlue}{expected number of pulls for a windowed algorithm in standard bandits}}; ($iii$) the cumulative \emph{\textcolor{vibrantTeal}{total variation (TV) distance}}.  First, the regret bounds are presented for a generic choice of the window size $\tau$, whose optimal choice depends on the instance-dependent quantities $\sigma'(T;\tau)$ and $\Delta_i'(T;\tau)$. Second, if we compare these regret bounds with those of the corresponding non-windowed versions, we observe that ($iii$) has a magnitude proportional $O(\sigma) = O(\sigma(T))$ in Theorems~\ref{thm:ts} and~\ref{thm:gts} and becomes $O(\sigma'(T;\tau))$ in Theorems~\ref{thr:swbeta} and~\ref{thr:gtssw}. This quantifies the advantage of the sliding window algorithms in the cases in which $\sigma'(T;\tau) \ll \sigma(T)$. Thus, the windowed algorithms can be more robust to inaccurate choices of $\Gamma$ (i.e., $\Gamma$ too small). Furthermore, since we can have at most $\tau$ samples, we are able to control the bias. This is encoded by the denominator of terms ($iii$). However, for the \texttt{ET-Beta-SWTS}, we pay a heavier dependence on $\Delta_i'(T;\tau)^3$ compared to KL divergence term of its non-windowed counterpart (Theorem~\ref{thm:ts}). The dependence on $T$ and $\tau$ is in line with the state-of-the-art results for windowed algorithms \citep[][Theorem 5.1]{combes2014unimodal}.
 Finally, we remark that these regret bounds become vacuous when $\sigma'(T;\tau)=+\infty$. As it is common in the Sliding-Window literature \cite{trovo2020sliding,garivier2011upper}, the best choice for the window-lenght is instance dependent. However, the analysis highlights that \emph{any} choice of the sliding window reduces the impact of the bias term. Furthermore, the analysis suggests a useful criterion for the choice of the window length: whenever the policy maker thinks the instance itself is hard to learn should set a small window length, conversely, for easy to learn problem, can employ a larger window length.

\clearpage
\section{Proofs and Derivations}\label{apx:proofs}
In this appendix, we provide the complete proofs and derivations we have omitted in the main paper.

\subsection{Proofs of Section~\ref{sec:preliminari}}
\rd*
\begin{proof}
We define the event $E_i(t)\coloneqq \{\theta_{i,t,\tau}\leq y_{i,t}\}$. Thus, the following holds:
\begin{equation}
    \mathbb{E}[N_{i,T}]=\sum_{t=1}^{T}\Prob(I_t=i)\leq \Gamma+\underbrace{\sum_{t=K\Gamma+1}^T \Prob(I_t=i, E_i^\complement(t))}_{\text{(A)}}+\underbrace{\sum_{t=K\Gamma+1}^T \Prob(I_t=i, E_i(t))}_{\text{(B)}}.
\end{equation}
Let us first face term (A): 
\begin{align}
    \text{(A)} &\leq\sum_{t=K\Gamma+1}^T \Prob(I_t=i, E_i^\complement(t), N_{i,t,\tau}\leq \omega)+\sum_{t=K\Gamma+1}^T\Prob(I_t=i, E_i^\complement(t), N_{i,t,\tau}\ge \omega)\\
    &\leq \sum_{t=K\Gamma+1}^T\Prob(I_t=i, N_{i,t,\tau}\leq \omega)+\sum_{t=K\Gamma+1}^T\Prob(I_t=i, E_i^\complement(t), N_{i,t,\tau}\ge \omega)\\
    &= \mathbb{E}\left[\sum_{t=K\Gamma+1}^T\mathds{1}\left\{I_t=i,N_{i,t,\tau}\leq\omega\right\}\right]+\sum_{t=K\Gamma+1}^T\Prob(I_t=i, E_i^\complement(t), N_{i,t,\tau}\ge \omega) \\
&\leq\mathbb{E}\left[\underbrace{\sum_{t=1}^{T}\mathds{1}\left\{I_t=i, N_{i,t,\tau}\leq \omega\right\}}_{\text{(C)}}\right]+\sum_{t=K\Gamma+1}^T\Prob(I_t=i, E_i^\complement(t), N_{i,t,\tau}\ge \omega).
\end{align}
Observe that (C) can be bounded by Lemma \ref{lemma:window}. Thus, the above inequality can be rewritten as:
\begin{align}
    \text{(A)} &\leq \frac{\omega T}{\tau}+\underbrace{\sum_{t=K\Gamma+1}^{T}\Prob(I_t=i, E_i^\complement(t), N_{i,t,\tau}\ge \omega)}_{\text{(D)}}.
\end{align}
We now focus on the term (D). Defining $\mathcal{T}:=\{t \in \dsb{K\Gamma+1,T}:1-\Prob(\theta_{i,t,\tau}\leq y_{i,t}\mid \mathcal{F}_{t-1})> \frac{1}{T}, N_{i,t,\tau}\ge \omega \}$ and $\mathcal{T}':=\{t \in \dsb{K\Gamma+1,T}:1-\Prob(\theta_{i,t,\tau}\leq y_{i,t}\mid \mathcal{F}_{t-1})\le \frac{1}{T}, N_{i,t,\tau}\ge \omega \}$ we obtain:
\begin{align}
    \sum_{t=K\Gamma+1}^{T} & \Prob(I_t=i, E_i^\complement(t), N_{i,t,\tau}\ge \omega) = \mathbb{E}\left[\sum_{t=K\Gamma+1}^T\mathds{1}\{I_t=i, E_i^\complement(t), N_{i,t,\tau}\ge \omega\}\right]\\
    &=\mathbb{E}\left[\sum_{t \in \mathcal{T}}\mathds{1}\{I_t=i,E_i(t)^\complement\}\right]+\mathbb{E}\left[\sum_{t \in \mathcal{T}'}\mathds{1}\{I_t=i,E_i(t)^\complement\}\right]\\
    &\leq \mathbb{E}\left[\sum_{t \in \mathcal{T}}\mathds{1}\{I_t=i\}\right]+\mathbb{E}\left[\sum_{t \in \mathcal{T}'}\mathds{1}\{E_i(t)^\complement\}\right]\\
    &\leq \mathbb{E}\left[\sum_{t=K\Gamma+1}^T\mathds{1}\left\{1-\Prob(\theta_{i,t,\tau}\leq y_{i,t}\mid \mathcal{F}_{t-1})> \frac{1}{T}, N_{i,t,\tau}\ge \omega, I_t=i\right\}\right]+\sum_{t=1}^T\frac{1}{T}.
\end{align}
Now we focus on term (B). We have:
\begin{align}
  \sum_{t=K\Gamma+1}^T \Prob(I_t=i, E_i(t))=\sum_{t=K\Gamma+1}^T \mathbb{E}\left[\underbrace{\Prob(I_t=i, E_i(t)\mid \mathcal{F}_{t-1})}_{\text{(E)}}\right].
\end{align}
In order to bound (B) we need to bound (E).
Let $i_t^{\prime}=\operatorname{argmax}_{i \neq i^*(t)} \theta_{i,t,\tau}$. Then, we have:
$$
\begin{aligned}
\mathbb{P}\left(I_t=i^*(t), E_i(t) \mid \mathcal{F}_{t-1}\right) & \geq \mathbb{P}\left(i_t^{\prime}=i, E_i(t), \theta_{i^*(t),t,\tau} > y_{i,t} \mid \mathcal{F}_{t-1}\right) \\
& =\mathbb{P}\left(\theta_{i^*(t),t,\tau} > y_{i,t} \mid \mathcal{F}_{t-1}\right) \mathbb{P}\left(i_t^{\prime}=i, E_i(t) \mid \mathcal{F}_{t-1}\right) \\
& \geq \frac{p_{i^*(t),t,\tau}^i}{1-p_{i^*(t),t,\tau}^i} \mathbb{P}\left(I_t=i, E_i(t) \mid \mathcal{F}_{t-1}\right),
\end{aligned}
$$
where in the first equality we used the fact that $\theta_{i^*(t),t,\tau}$ is conditionally independent of $i_t^{\prime}$ and $E_i(t)$ given $\mathcal{F}_{t-1}$. In the second inequality, we used the fact that:
$$
\mathbb{P}\left(I_t=i, E_i(t) \mid \mathcal{F}_{t-1}\right) \leq\left(1-\mathbb{P}\left(\theta_{i^*(t),t,\tau}>y_{i,t} \mid \mathcal{F}_{t-1}\right)\right) \mathbb{P}\left(i_t^{\prime}=i, E_i(t) \mid \mathcal{F}_{t-1}\right)
$$
which is true since $\left\{I_t=i\right\} \cap E_i(t)  \subseteq\left\{i_t^{\prime}=i\right\} \cap E_i(t) \cap\left\{\theta_{i^*(t),t,\tau}\leq y_{i,t}\right\}$, and the two intersected events are conditionally independent given $\mathcal{F}_{t-1}$. Therefore, we have:
$$
\begin{aligned}
\mathbb{P}\left(I_t=i, E_i(t) \mid \mathcal{F}_{t-1}\right) & \leq\left(\frac{1}{p_{i^*(t),t,\tau}^i}-1\right) \mathbb{P}\left(I_t=i^*(t), E_i(t) \mid \mathcal{F}_{t-1}\right) \\
& \leq\left(\frac{1}{p_{i^*(t),t,\tau}^i}-1\right) \mathbb{P}\left(I_t=i^*(t) \mid \mathcal{F}_{t-1}\right),
\end{aligned}
$$
substituting, we obtain:
\begin{align}
    \sum_{t=K\Gamma+1}^T\mathbb{E}[\Prob(I_t=i,E_i(t)\mid\mathcal{F}_{t-1})]&\leq \mathbb{E}\left[\sum_{t=K\Gamma+1}^T\left(\frac{1}{p_{i^*(t),t,\tau}^i}-1\right) \mathbb{P}\left(I_t=i^*(t) \mid \mathcal{F}_{t-1}\right)\right]\\
    &=\mathbb{E}\left[\mathbb{E}\left[\sum_{t=K\Gamma+1}^T\left(\frac{1}{p_{i^*(t),t,\tau}^i}-1\right)\mathds{1}\left\{I_t=i^*(t)\right\}\mid\mathcal{F}_{t-1}\right]\right]\\
    &=\mathbb{E}\left[\sum_{t=K\Gamma+1}^{T} \left(\frac{1}{p_{i^*(t),t,\tau}^i}-1\right)\mathds{1}\left\{I_t=i^*(t)\right\}\right].
\end{align}
The statement follows by summing all the terms.
\end{proof}
\mf*
\begin{proof}
 Let us assume, without loss of generality, the arm $1$ is the optimal arm at round $t \in \dsb{T}$. Let $N_{1,t} = j$, $S_{1,t}= s$. Then,
as shown by~\citet{agrawal2012analysis}, $p_{1,t}^i$ can be written as:
    $$p_{1,t}^i = \Prob(\theta_{1,t} > y_{i,t}) = F^B_{j+1,y_{i,t}}(s),$$
    where $F^B_{j+1,y_{i,t}}(s)$ is the cumulative distribution function of a Binomial random variable after $j+1$ Bernoulli trials each with probability of success $y_{i,t}$ evaluated in $s$.
    Our goal is to prove the inequality:
\begin{equation}\label{eq:goal}
	\E_{X' \sim \text{PB($\underline{\mu}_1(j)$)}}\left[\frac{1}{F^B_{j+1,y_{i,t}} (X')} \right]\leq \E_{X \sim \text{Bin}(j,x)}\left[\frac{1}{F^B_{j+1,y_{i,t}} (X)} \right].
\end{equation}  
    We notice that the probability mass function of a binomial distribution is discrete log-concave (Lemma~\ref{lemma:con}). Thus, let $Y\sim \text{Bin}(j+1,y_{i,t})$, we have that for $i \in \dsb{0,j-1}$ it holds  that:
\begin{equation}\label{eq;conbin}
	p_Y(i+1)^2 \geq p_Y(i) p_Y(i+2).
\end{equation}
By Lemma~\ref{lemma:logcon} used with $\alpha = 1$ and $r = +\infty$, and $q$ being the probability mass function of the binomial distribution (precisely, $q(x) = p(-x)$ in Lemma~\ref{lemma:logcon}), we find that the CDF of the binomial is discrete log-concave on $\mathbb{Z}$ too. Indeed, according to Lemma~\ref{lemma:logcon}, if the probability mass function of an integer-valued random variable is discrete log-concave as a function on $\mathbb{Z}$, then the corresponding CDF ($F^B_{j+1,y_{i,t}}$ in our notation) is also discrete log-concave as a function on $\mathbb{Z}$.
Thus, omitting superscripts and subscripts,  $1 / F$
is discrete log-convex on the set $ x \in S\coloneqq \dsb{0,j+1}$, i.e.:
\begin{equation}
	\left(\frac{1}{F(x+1)}\right)^2\leq\frac{1}{F(x+2)}\frac{1}{F(x)},
\end{equation}
or, equivalently:
\begin{equation}
	\frac{1}{F(x+1)}\leq \left(\frac{1}{F(x+2)}\right)^{\frac{1}{2}}\left(\frac{1}{F(x)}\right)^{\frac{1}{2}}.
\end{equation}
Using the AM-GM inequality, we obtain:
\begin{align}
	\frac{1}{F(x+1)} <  \frac{1}{2}\left(\frac{1}{F(x+2)}\right)+ \frac{1}{2}\left(\frac{1}{F(x)}\right).
\end{align}
Notice the inequality is strict since the AM-GM inequality holds with equality only when all elements are equal and this is not our case since  $\frac{1}{F(x+2)} < \frac{1}{F(x)}$ for $x\in\dsb{0,j-1}$, and hence $\frac{1}{F(x+2)} \neq \frac{1}{F(x)}$. 
Thus, we have proved that $1 / F$ is strictly discrete convex on $S$. Therefore, using Lemma~\ref{lemma:hoeffpomp} we obtain that for every $j$, being the number of trials for both the Poisson-binomial and the binomial distributions, the expected value of the term of our interest for a Poisson-binomial distribution with a certain mean of the probabilities of the success at each trial, namely $\overline{\mu}_{1,j}=\frac{1}{j} {\sum_{l=1}^j\mu_{1,l}}$, is always smaller than the one of a binomial distribution where each Bernoulli trial has a probability of success equal to $\overline{\mu}_{1,j}$. More formally:
\begin{equation}
	\E_{X' \sim \text{PB($\underline{\mu}_1(j)$)}}\left[\frac{1}{F^B_{j+1,y_{i,t}}(X')} \right]\leq \E_{X'' \sim \text{Bin}(j,\overline{\mu}_{1,j})}\left[\frac{1}{F^B_{j+1,y_{i,t}}(X'')} \right].
\end{equation}  
To show Equation~\eqref{eq:goal} for any $j$ such that $\overline{\mu}_{1,j}\ge x$, we need to prove that the expected value of $1/{F_{j+1, y_{i,t}}^B}$ considered for a binomial process with mean $\overline{\mu}_{1,j}$ is smaller than the expected value of $1/{F_{j+1, y_{i,t}}^B}$ for a binomial distribution with mean $x\leq\overline{\mu}_{1,j}$. 
We apply Lemma~\ref{lemma:change} stating that for a non-negative random variable (like ours $1/{F_{j+1, y_{i,t}}^B}$), the expected value can be computed as:
\begin{align}
	\mathbb{E}_{\text{Bin}(j,\overline{\mu}_{1,j})}\left[\frac{1}{F_{j+1, y_{i,t}}^B}\right]=\int_{0}^{+\infty}\Prob\left(\frac{1}{F_{j+1, y_{i,t}}^B}> y\right) \de y.
\end{align}
Let $X''\sim \text{Bin}(j, \overline{\mu}_{1,j})$. Thus, we have:
\begin{align}
	\Prob\left(\frac{1}{F_{j+1, y_{i,t}}^B} > y \right)& = \Prob(X''=0)+\Prob(X''=1)+\ldots+\Prob\left(X''=\left(\frac{1}{F_{j+1,y_{i,t}}^B}\right)^{-1}(y) -1\right)\\
	& =\Prob\left(X'' \leq \underbrace{\left(\frac{1}{F_{j+1,y_{i,t}}^B}\right)^{-1}(y)-1}_{\text{$\eqqcolon k_j(y)$}} \right),
\end{align}
and the same goes for $X\sim \text{Bin}(j,x)$:
\begin{align}
	\Prob\left(\frac{1}{F_{j+1, y_i}^B} > y \right)& = \Prob(X=0)+\Prob(X=1)+\ldots+\Prob\left(X=\left(\frac{1}{F_{j+1,y_{i,t}}^B}\right)^{-1}(y)-1 \right)\\
	& =\Prob\left(X\leq \underbrace{\left(\frac{1}{F_{j+1,y_{i,t}}^B}\right)^{-1}(y)-1}_{\text{$\eqqcolon k_j(y)$}} \right),
\end{align}
where the inverse is formally defined as follows:
\begin{align}
	{\left(\frac{1}{F_{j+1,y_{i,t}}^B}\right)^{-1}(y)} \coloneqq \min\left\{ s \in \dsb{0,j}\, :\, y \ge \frac{1}{F_{j+1,y_{i,t}}^B(s)} \right\}.
\end{align}
Notice, that whenever such $s$ in the above definition does not exist, we will have that $\Prob\left(X<{\left(\frac{1}{F_{j+1,y_{i,t}}^B}\right)^{-1}(y)}\right)=0$, so that those values does not contribute to the integral.
Thus, we need to prove:
\begin{align}\label{eq:condtion2}
	\mathbb{E}_{\text{Bin}(j,\overline{\mu}_{1,j})}\left[\frac{1}{F_{j+1, y_{i,t}}^B}\right]= \int_{0}^{+\infty}\Prob(X''\leq k_j(y))\de y&\leq \int_{0}^{+\infty}\Prob(X\leq k_j(y))\de y = \mathbb{E}_{\text{Bin}(j,x)}\left[\frac{1}{F_{j+1, y_{i,t}}^B}\right].
\end{align}
A sufficient condition to ensure that the condition in Equation~\eqref{eq:condtion2} holds is that:
\begin{equation} \label{eq:condition3}
	\Prob(X''> m)\ge \Prob(X> m), \quad \forall m \in \mathbb{R}.
\end{equation}
Indeed, we have
\begin{align}
    &\int_{0}^{+\infty}\Prob(X''\leq k_j(y))\de y\leq \int_{0}^{+\infty}\Prob(X\leq k_j(y))\de y\\
    &\iff \int_{0}^{+\infty}(1-\Prob(X''> k_j(y)))\de y\leq \int_{0}^{+\infty}(1-\Prob(X > k_j(y)))\de y\\
    &\iff \int_{0}^{+\infty}\Prob(X''> k_j(y))\de y\ge \int_{0}^{+\infty}\Prob(X> k_j(y))\de y\\
    & \iff  \int_{0}^{+\infty}\big(\Prob(X''>k_j(y))-\Prob(X > k_j(y))\big)\de y \ge 0.
\end{align}
Let us recall the concept of stochastic order (\citet{boland2002stochastic,boland2004sstochasticordertesting,marshall2011inequalities}) that is often useful in comparing random variables. For two random variables $U$ and $V$, we say that $U$ is \emph{greater} than $V$ in the usual stochastic order (or $U$  \emph{stochastically dominates} $V$), and we denote it with $U\ge_{\text{st}}V$, when $\Prob(U> m) \ge \Prob(V >m)$ for every $ m \in \mathbb{R}$.
Thus, if we have that $X'' \ge_{\text{st}} X$ we would have that also Equation~\eqref{eq:condition3} holds too. It has been shown by ~\citet{boland2002stochastic} (Lemma \ref{lemma:stochastic}) that the condition for that to happen when $X''$ and $X$ are binomial distributions  with means $\mu''$ and $\mu$ is that $\mu''\ge \mu$.
Thus, we have showed that for any $j$ such that $\overline{\mu}_{1,j}\ge x$:
\begin{equation}
	\mathbb{E}_{X' \sim\text{PB($\underline{\mu}_1(j)$)}}\left[\frac{1}{F_{j+1,y_{i,t}}(X')}\right]\leq \mathbb{E}_{X'' \sim\text{Bin}(j,\overline{\mu}_{1,j})}\left[\frac{1}{F_{j+1,y_{i,t}}(X'')}\right]\leq \mathbb{E}_{X \sim \text{Bin($j,x$)}}\left[\frac{1}{F_{j+1,y_{i,t}}(X)}\right].
\end{equation}  
This concludes the proof.
\end{proof}

\subsection{Additional Lemmas}\label{apx:swgts}
\begin{restatable}[Expected Number of Pulls Bound for  \texttt{$\gamma$-ET-SWGTS}]{lemma}{rdg} \label{lem:RDg}
   Let $T \in \mathbb{N}$ be the learning horizon, $\tau \in \dsb{T}$ be the window size, $\Gamma \in \dsb{T}$ be the forced exploration parameter, for the  \texttt{$\gamma$-ET-SWGTS} algorithm the following holds for every $i \neq i^*(t)$ and free parameters $\omega \in \dsb{T}$ and $\epsilon > 0$:
     \begin{align*}
        \mathbb{E}[N_{i,T}] & \leq \Gamma+\frac{1}{\epsilon}+\frac{T}{\tau}+\frac{\omega T}{\tau}+{\mathbb{E}\left[\sum_{t=K\Gamma+1}^{T} \mathds{1}\left \{ p_{i,t,\tau}^i>\frac{1}{T\epsilon}, \ I_t=i, \ N_{i,t,\tau}\ge \omega,\right\}\right]} \\
        & \quad +{\mathbb{E}\left[\sum_{t=K\Gamma+1}^{T}\left(\frac{1}{p_{i^*(t),t,\tau}^i}-1\right)\mathds{1}\left\{I_t=i^*(t) \right\}\right]}.
    \end{align*}
\end{restatable}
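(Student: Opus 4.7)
The plan is to mimic the decomposition used in the proof of Lemma~\ref{lem:RD} for \texttt{ET-Beta-SWTS}, carefully accounting for the two structural differences introduced by \texttt{$\gamma$-ET-SWGTS}: (a) the posterior variance $1/(\gamma N_{i,t,\tau})$ is undefined when the windowed count $N_{i,t,\tau}$ is zero, forcing the algorithm to pull any such arm at line~\ref{line:forI0}; and (b) we will use a sharper threshold $1/(T\epsilon)$ rather than $1/T$, which will produce a $1/\epsilon$ term in place of the implicit constant $1$ obtained by summing $1/T$ over $T$ rounds. Accordingly, I would write
\[
\mathbb{E}[N_{i,T}]\le \Gamma + \underbrace{\mathbb{E}\bigg[\sum_{t=K\Gamma+1}^{T}\mathds{1}\{I_t=i,\,N_{i,t,\tau}=0\}\bigg]}_{\text{forced-pull term}}+ \underbrace{\sum_{t=K\Gamma+1}^{T}\mathbb{P}(I_t=i,\,N_{i,t,\tau}\ge 1,\,E_i^\complement(t))}_{\text{(A)}}+\underbrace{\sum_{t=K\Gamma+1}^{T}\mathbb{P}(I_t=i,\,N_{i,t,\tau}\ge 1,\,E_i(t))}_{\text{(B)}},
\]
where $E_i(t)=\{\theta_{i,t,\tau}\le y_{i,t}\}$ exactly as before. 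The forced-pull term is bounded by $T/\tau$: since the window has length $\tau$, the events $\{N_{i,t,\tau}=0\}$ along the trajectory of arm $i$ occur at most once per block of $\tau$ consecutive rounds, so there are at most $\lceil T/\tau\rceil$ of them.

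For term (A), I would reproduce the split used in Lemma~\ref{lem:RD}: further decompose according to whether $N_{i,t,\tau}\le \omega$ or $N_{i,t,\tau}\ge \omega$. The first case is handled by invoking Lemma~\ref{lemma:window}, which yields the $\omega T/\tau$ contribution. For the second case, I would partition the remaining rounds into $\mathcal{T}=\{t:\,p_{i,t,\tau}^i>1/(T\epsilon),\,N_{i,t,\tau}\ge\omega\}$ and $\mathcal{T}'=\{t:\,p_{i,t,\tau}^i\le 1/(T\epsilon),\,N_{i,t,\tau}\ge\omega\}$. The former directly produces the indicator term in the statement. The latter is handled by summing $\mathbb{P}(E_i^\complement(t)\mid\mathcal{F}_{t-1})=p_{i,t,\tau}^i\le 1/(T\epsilon)$ over $T$ rounds, giving the $1/\epsilon$ term; this is precisely the reason the threshold becomes $1/(T\epsilon)$ rather than $1/T$ and explains the appearance of the free parameter $\epsilon>0$.

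For term (B), the argument is verbatim the same as in Lemma~\ref{lem:RD}: conditionally on $\mathcal{F}_{t-1}$ and using the conditional independence of $\theta_{i^*(t),t,\tau}$ from $E_i(t)$ and from $i_t':=\arg\max_{j\ne i^*(t)}\theta_{j,t,\tau}$, one derives $\mathbb{P}(I_t=i,E_i(t)\mid\mathcal{F}_{t-1})\le \bigl(1/p_{i^*(t),t,\tau}^i-1\bigr)\mathbb{P}(I_t=i^*(t)\mid\mathcal{F}_{t-1})$. Taking expectations then yields the second summation in the statement. Note that the presence of the forced-exploration prefix or of an empty-window pull does not affect this step because (B) is restricted to $N_{i,t,\tau}\ge 1$, on which the posterior Gaussian is well defined and the TS selection rule is active.

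The main obstacle I foresee is bookkeeping around the empty-window pulls: one must be careful that the forced-pull term only counts rounds where the algorithm deterministically selects $i$ because $N_{i,t,\tau}=0$, and that these rounds are properly removed from both (A) and (B) so that the TS selection inequalities are applied only when the Gaussian posterior is actually defined. The factor $T/\tau$ arises cleanly from the pigeonhole observation above, and the remaining steps are essentially a transcription of the Beta-case argument with the slightly more permissive threshold $1/(T\epsilon)$.
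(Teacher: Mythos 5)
Your proposal is correct and follows essentially the same route as the paper's proof: the same decomposition into forced-exploration, empty-window forced pulls (bounded by $T/\tau$), the $N_{i,t,\tau}\le\omega$ case via Lemma~\ref{lemma:window}, the threshold split at $1/(T\epsilon)$ giving the indicator term plus $1/\epsilon$, and term (B) handled verbatim as in Lemma~\ref{lem:RD}. Your pigeonhole argument for the $T/\tau$ forced-pull term is a valid justification of a step the paper states without elaboration.
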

\begin{proof}
    We define the event $E_i(t)\coloneqq \{\theta_{i,t,\tau}\leq y_{i,t}\}$. Thus, the following holds:
\begin{equation}
    \mathbb{E}_{\tau}[N_{i,T}]=\sum_{t=1}^{T}\Prob(I_t=i)\leq \Gamma+\underbrace{\frac{T}{\tau}}_{\text{(X)}}+\underbrace{\sum_{t=K\Gamma+1}^T \Prob(I_t=i, E_i^\complement(t))}_{\text{(A)}}+\underbrace{\sum_{t=K\Gamma+1}^T \Prob(I_t=i, E_i(t))}_{\text{(B)}},
\end{equation}
where (X) is the term arising given by the forced play whenever $N_{i,t,\tau}=0$. 
Let us first face term (A): 
\begin{align}
    \text{(A)} &\leq\sum_{t=K\Gamma+1}^T \Prob(I_t=i, E_i^\complement(t), N_{i,t,\tau}\leq \omega)+\sum_{t=K\Gamma+1}^T\Prob(I_t=i, E_i^\complement(t), N_{i,t,\tau}\ge \omega)\\
    &\leq \sum_{t=K\Gamma+1}^T\Prob(I_t=i, N_{i,t,\tau}\leq \omega)+\sum_{t=K\Gamma+1}^T\Prob(I_t=i, E_i^\complement(t), N_{i,t,\tau}\ge \omega)\\
    &\leq \mathbb{E}\left[\sum_{t=K\Gamma+1}^{T}\mathds{1}\left\{I_t=i, N_{i,t,\tau}\leq \omega\right\}\right]+\sum_{t=K\Gamma+1}^T\Prob(I_t=i, E_i^\complement(t), N_{i,t,\tau}\ge \omega)\\
&\leq\mathbb{E}\left[\underbrace{\sum_{t=1}^{T}\mathds{1}\left\{I_t=i, N_{i,t,\tau}\leq \omega\right\}}_{\text{(C)}}\right]+\sum_{t=K\Gamma+1}^T\Prob(I_t=i, E_i^\complement(t), N_{i,t,\tau}\ge \omega).
\end{align}
Observe that (C) can be bounded by Lemma \ref{lemma:window}. Thus, the above inequality can be rewritten as:
\begin{align}
    \text{(A)} &\leq \frac{\omega T}{\tau}+\underbrace{\sum_{t=1}^{T}\Prob(I_t=i, E_i^\complement(t), N_{i,t,\tau}\ge \omega)}_{\text{(D)}}.
\end{align}
We now focus on the term (D). Defining $\mathcal{T}:=\{t \in \dsb{K\Gamma+1,T}:1-\Prob(\theta_{i,t,\tau}\leq y_{i,t}\mid \mathcal{F}_{t-1})> \frac{1}{T\epsilon}, N_{i,t,\tau}\ge \omega \}$ and $\mathcal{T}':=\{t \in \dsb{K\Gamma+1,T}:1-\Prob(\theta_{i,t,\tau}\leq y_{i,t}\mid \mathcal{F}_{t-1})\le \frac{1}{T\epsilon}, N_{i,t,\tau}\ge \omega \}$ we obtain:
\begin{align}
    \sum_{t=K\Gamma+1}^{T}&\Prob(I_t=i, E_i^\complement(t), N_{i,t,\tau}\ge \omega) = \mathbb{E}\left[\sum_{t=K\Gamma+1}^T\mathds{1}\{I_t=i, E_i^\complement(t), N_{i,t,\tau}\ge \omega\}\right]\\
    &=\mathbb{E}\left[\sum_{t \in \mathcal{T}}\mathds{1}\{I_t=i,E_i(t)^\complement\}\right]+\mathbb{E}\left[\sum_{t \in \mathcal{T}'}\mathds{1}\{I_t=i,E_i(t)^\complement\}\right]\\
    &\leq \mathbb{E}\left[\sum_{t \in \mathcal{T}}\mathds{1}\{I_t=i\}\right]+\mathbb{E}\left[\sum_{t \in \mathcal{T}'}\mathds{1}\{E_i(t)^\complement\}\right]\\
    &\leq \mathbb{E}\left[\sum_{t=K\Gamma+1}^T\mathds{1}\left\{1-\Prob(\theta_{i,t,\tau}\leq y_{i,t}\mid \mathcal{F}_{t-1})> \frac{1}{T\epsilon}, N_{i,t,\tau}\ge \omega, I_t=i \right\}\right]+\sum_{t=1}^T\frac{1}{T\epsilon}.
\end{align}
Term (B) is bounded exactly as in the proof of Lemma~\ref{lem:RD}. The statement follows by summing all the terms.
\end{proof}

\subsection{Proofs of Section~\ref{sec:AnalysisBetaTS}}
\wald*
\begin{proof}
	We start with the usual definition of regret and proceed as follows:
	\begin{align}
		R(\mathfrak{A},T) & = T \overline{\mu}_{i^*(T)}(T) - \E \left[ \sum_{t=1}^T  \mu_{I_t}(N_{I_t,t}) \right] \\
		& =  \E \left[ \sum_{t=1}^T \left(  \mu_{i^*(T)}(t) - \mu_{I_t}(N_{I_t,t}) \right) \right] \\
		& = \E \left[ \sum_{t=1}^T  \mu_{i^*(T)}(t) - \sum_{j=1}^{N_{i^*(T),T}} \mu_{i^*(T)}(j) - \sum_{i\neq i^*(T)} \sum_{j=1}^{N_{i,T}} \mu_{i}(j) \right] \\
		& = \E \left[ \sum_{j=N_{i^*(T),T}+1}^T  \mu_{i^*(T)}(j) - \sum_{i\neq i^*(T)} \sum_{j=1}^{N_{i,T}} \mu_{i}(j) \right] \\
		& \le \E \left[ \sum_{i\neq i^*(T)} \sum_{j=1}^{N_{i,T}} \left( \mu_{i^*(T)}(T) -  \mu_{i}(j)  \right) \right] \\
		& \le \sum_{i \neq i^*(T)} \left( \mu_{i^*(T)}(T) - \mu_i(1) \right) \E \left[ \sum_{j=1}^{N_{i,T}} 1  \right].
	\end{align}
    The result follows from the definition of $\Delta_i(T,1)$.
\end{proof}

\ts*

\begin{proof}
In order to bound the expected value of the pulls of the suboptimal arm $i \neq i^*(T)$ at the time horizon $T$, we use Lemma \ref{lem:RD}, considering the forced exploration and imposing the window length $\tau=T$. Formally, for \texttt{Beta-ET-TS}, we have:
\begin{align}
    \mathbb{E}[N_{i,T}] & \leq \Gamma+1+\frac{\omega T}{T}+\underbrace{\mathbb{E}\left[\sum_{t=K\Gamma+1}^T \mathds{1}\left\{p_{i,t,T}^i>\frac{1}{T}, \ I_t=i, \ N_{i,t}\ge \omega\right\}\right]}_{\text{(A)}} \nonumber\\
    & \quad +\underbrace{\mathbb{E}\left[\sum_{t=K\Gamma+1}^{T} \left(\frac{1}{p_{i^*(t),t,T}^i}-1\right)\mathds{1}\left\{I_t=i^*(t)\right\}\right]}_{\text{(B)}}.
\end{align}
In the following, we consider, without loss of generality, the first arm to be the best, i.e., for every round $t \in \dsb{T}$ we have $i^*(t)=i^*(T)=1$ and introduce the following quantities, defined for some $\epsilon \in (0,1]$: $x_{i,t}=x_i \in (\overline{\mu}_i(T),\overline{\mu}_1(\sigma))$ such that $d(x_i, \overline{\mu}_1(\sigma)) = d(\overline{\mu}_i(T),\overline{\mu}_1(\sigma))/(1+\epsilon)$, $y_{i,t}=y_i \in (x_i,\overline{\mu}_1(\sigma))$ such that $d(x_i, y_i) = d(x_i,\overline{\mu}_1(\sigma))/(1+\epsilon) = d(\overline{\mu}_i(T),\overline{\mu}_1(\sigma))/(1+\epsilon)^2$, $\omega=\frac{\log(T)}{d(x_i,y_i)} = (1+\epsilon)^2 \frac{\log(T)}{d(\overline{\mu}_i(T),\overline{\mu}_1(\sigma))}$. We define $\hat{\mu}_{i,t,T}=\frac{S_{i,t}}{N_{i,t}}$.

We further decompose term $\mathcal{A}$ in two contributions:
\begin{align}
\text{(A)} & =\underbrace{\mathbb{E}\left[\sum_{t=K\Gamma+1}^T\mathds{1}\left\{{p_{i,t,T}^i>\frac{1}{T},\hat{\mu}_{i,t,T}\leq x_i, I_t=i,\ N_{i,t}\ge \omega}\right\}\right]}_{\text{(A1)}}\nonumber\\
& \quad +\underbrace{\mathbb{E}\left[\sum_{t=K\Gamma+1}^T\mathds{1}\left\{p_{i,t,T}^i>\frac{1}{T},\hat{\mu}_{i,t,T}> x_i, I_t=i,\ N_{i,t}\ge \omega\right\}\right]}_{\text{(A2)}}
\end{align}
\paragraph{Term (A2)} Focusing on (A2), let $\tau_j^i$ denote the round in which arm $i$ is played for the $j$-th time:
\begin{align}
   \text{(A2)} &\leq \mathbb{E}\left[\sum_{t=K\Gamma+1}^T \mathds{1}\left\{\hat{\mu}_{i,t,T}> x_i ,  I_t=i,  N_{i,t}\ge \omega\right\}\right]\\
    & \le \mathbb{E}\left[\sum_{j=K\Gamma}^T \mathds{1}\left\{\hat{\mu}_{i,\tau_j^i+1,T}> x_i\right\} \sum_{t=\tau_j^i+1}^{\tau_{j+1}^i} \mathds{1} \left\{ I_t=i \right\}\right]\\
    &\le\sum_{j=\Gamma}^T\Prob(\hat{\mu}_{i,\tau_j^i+1,T}> x_i\big)\\
    &=\sum_{j=\Gamma}^T\Prob(\hat{\mu}_{i,\tau_j^i+1,T}-\mathbb{E}[\hat{\mu}_{i,\tau_j^i+1,T}]> x_i-\mathbb{E}[\hat{\mu}_{i,\tau_j^i+1,T}])\\
    &\leq \sum_{j=\Gamma}^T \exp{(-j d ( x_i,\mathbb{E}[\hat{\mu}_{i,\tau_{j}^i+1,T}]))} \\
    &\leq \sum_{j=0}^T \exp{(-j d ( x_i,\overline{\mu}_i(T)))},\label{eq:tsexp}
\end{align}
where in the last but one inequality, we have exploited the Chernoff-Hoeffding bound (Lemma \ref{lemma:chernoff} with $\lambda = x_i - \mathbb{E}[\hat{\mu}_{i,\tau_j^i+1,T}]$), and in the last one, we observed that $\mathbb{E}[\hat{\mu}_{i,\tau_j^i+1,T}] \le \overline{\mu}_i(T)$. Thus, we have:
\begin{align}
\sum_{j=0}^T \exp{(-j d ( x_i,\overline{\mu}_i(T)))} & \le 1 + \int_{j=0}^{+\infty} \exp{(-j d ( x_i,\overline{\mu}_i(T)))} \de j \\
& = 1 + \frac{1}{d ( x_i,\overline{\mu}_i(T))} \\
& \le 1 + \frac{1}{2(x_i-\overline{\mu}_i(T))^2},
\end{align} 
having used Pinsker's inequality.
Following \cite{agrawal2017near}, we have the inequality:
\begin{align}
    x_i - \overline{\mu}_i(T) \ge \frac{\epsilon}{1+\epsilon} \cdot \frac{d(\overline{\mu}_i(T),\overline{\mu}_1(\sigma))}{\log \frac{\overline{\mu}_1(\sigma)(1-\overline{\mu}_i(T))}{\overline{\mu}_i(T)(1-\overline{\mu}_1(\sigma))}}.
\end{align}
From this, we have:
\begin{align}
    \sum_{s=0}^T \exp{(-s d ( x_i,\overline{\mu}_i(T)))} \le 1 + \left(\log \frac{\overline{\mu}_1(\sigma)(1-\overline{\mu}_i(T))}{\overline{\mu}_i(T)(1-\overline{\mu}_1(\sigma))} \right)^2  \cdot \frac{(1+\epsilon)^2}{2\epsilon^2} \cdot \frac{1}{d(\overline{\mu}_i(T),\overline{\mu}_1(\sigma))^2}.
\end{align}
\paragraph{Term (A1)}
Let us focus on the term (A1) of the regret. 
\begin{align}
   \text{(A1)} \leq \mathbb{E}\left[\sum_{t=K\Gamma+1}^T\mathds{1}\left\{\underbrace{ p_{i,t,T}^i>\frac{1}{T}, N_{i,t}\ge \omega,\hat{\mu}_{i,t,T}\leq x_i}_{\text{(C)}} \right\}\right].
\end{align}
We wish to evaluate if the condition (C) ever occurs.
To this end, let $(\mathcal{F}_{t-1})_{t \in \dsb{T}}$ be the canonical filtration. We have:
\begin{align}
	&\Prob(\theta_{i,t,T}>y_{i}|\hat{\mu}_{i,t,T} \leq x_{i}, N_{i,t}\ge \omega,\mathcal{F}_{t-1})=
	\nonumber\\ & = \Prob \left( \text{Beta} \left( \hat{\mu}_{i,t,T} N_{i,t} + 1, (1 - \hat{\mu}_{i,t,T}) N_{i,t} + 1 \right) > y_{i} | \hat{\mu}_{i,t,T} \leq x_{i},N_{i,t}\ge \omega \right) \label{line:line}\\
	& \leq \Prob \left( \text{Beta} \left( x_{i} N_{i,t} + 1, (1 - x_{i}) N_{i,t} + 1 \right) > y_{i} |N_{i,t}\ge \omega\right)\label{eq:bbi} \\
	& \leq F^{B}_{N_{i,t}+1,y_{i}}\big(x_{i}N_{i,t}|N_{i,t}\ge \omega\big) \leq F^{B}_{N_{i,t},y_{i}}\big(x_{i}N_{i,t}|N_{i,t}\ge \omega\big)\label{eq:bborder}\\
    &\leq\exp \left( - (N_{i,t}) d(x_{i}, y_{i})|N_{i,t}\ge \omega \right)\label{eq:ch}\\
    &\leq \exp \left( - \omega  d(x_{i}, y_{i}) \right),\label{eq:tscondfin}
\end{align}
where \eqref{eq:ch} follows from the generalized Chernoff-Hoeffding bounds  (Lemma~\ref{lemma:chernoff}) and \eqref{eq:bbi} from the Beta-Binomial identity (Fact~\ref{lem:betabin}). Equation~\eqref{line:line} was derived by exploiting the fact that on the event $\hat{\mu}_{i,t,\tau}\leq x_i$ a sample from $\text{Beta} \left( x_{i} N_{i,t} + 1, (1 - x_{i}) N_{i,t} + 1 \right) $ is likely to be as large as a sample from $\text{Beta}( \hat{\mu}_{i,t,T} N_{i,t} + 1, (1 - \hat{\mu}_{i,t,T})N_{i,t} + 1 )$, as a $\text{Beta}(\alpha, \beta)$ random variable
is stochastically dominated by $\text{Beta}(\alpha', \beta')$ if $\alpha'\ge \alpha$ and $\beta'\leq \beta$ (Lemma \ref{lem:nbord}), and the second inequality in \eqref{eq:bborder} derives from Lemma \ref{lem:bborder}.
Therefore, since $\omega=\frac{\log(T)}{d(x_i,y_i)}$, we have:
\begin{equation}
	\Prob(\theta_{i,t,T}>y_{i}|\hat{\mu}_{i,t,T} \leq x_{i}, N_{i,t}\ge \omega,\mathbb{F}_{t-1})\leq\frac{1}{T}, \label{eq:tscond}
\end{equation}
in contradiction with condition (C), implying that (A1)$=0$.
\paragraph{Term (B)}
For this term, We have:
\begin{align}
   \text{(B)}= \mathbb{E}\left[\sum_{t=K\Gamma+1}^{T} \left(\frac{1}{p_{1,t,T}^i}-1\right)\mathds{1}\left\{I_t=1\right\}\right]
\end{align}
Let $\tau_j$ denote the time step at which arm $1$ is played for the $j$-th time:
\begin{align}
	  \text{(B)} &\leq \sum_{j=\Gamma}^{T-1}\mathbb{E}\bigg[\frac{1-p_{1,\tau_{j}+1,T}^i}{p_{1,\tau_{j}+1,T}^i}\sum_{t=\tau_j+1}^{\tau_{j+1}}\mathds{1}\{I_t=1\}\bigg]\\
	&\leq\sum_{j=\Gamma}^{T-1}\mathbb{E}\bigg[\frac{1-p_{1,\tau_{j}+1,T}^i}{p_{1,\tau_{j}+1,T}^i}\bigg], \label{eq:fixed}
\end{align}
where the inequality in Equation~\eqref{eq:fixed} uses the fact that $p_{1,t,T}^i$ is fixed, given $\mathcal{F}_{t-1}$. Then, we observe that $p_{1,t,T}^i = \Prob(\theta_{1,t,T} > y_i |\mathcal{F}_{t-1})$
changes only when the distribution of $\theta_{1,t,T}$ changes, that is, only on the time step after each play of the first arm. Thus, $p_{1,t,T}^i$ is the same at all time steps $t \in \{\tau_j+1, \dots , \tau_{j+1}\}$, for every $j$. Finally, bounding the probability of selecting the optimal arm by $1$ we have:
\begin{equation}
	\text{(B)} \leq \sum_{k=\Gamma}^{T-1}\mathbb{E} \left[ \frac{1}{p_{1,\tau_j+1,T}^i} - 1 \right].
\end{equation}

By definition $N_{1,\tau_j+1} = j$, and let $S_{1,t}= s$. Thus, we have:
$$p_{1,\tau_j+1,T}^i = \Prob(\theta_{1,\tau_j+1,T} > y_i) = F^B_{j+1,y_i}(s)$$
due to the relation that links the Beta and the binomial distributions (Fact 3 of~\cite{agrawal2017near}). Let $\tau_j + 1$ denote the time step after the $j$-th play of the optimal arm. Then, $N_{1,\tau_j + 1} =j$. We do notice a sensible difference with respect to the stationary case. Indeed, the number of successes after $j$ trial is not distributed anymore as a binomial distribution. Instead, it can be described by a Poisson-Binomial distribution $\text{PB}(\underline{\mu}_{1}(j))$ where the vector $\underline{\mu}_{1}(j)=(\mu_{1}(1),\ldots,\mu_{1}(j))$, and $\mu_{1}(m)$ represents the probability of success of the best arm at the $m$-th trial. The probability of having $s$ successful trials out of a total of $j$ trials can be written as follows~\citep{wang1993poissobinomial,poisson2019english}:
\begin{align}
	f_{j,\underline{\mu}_{1}(j)}(s)=\sum_{A \in F_s} \prod_{m \in A}\mu_{1}(m)\prod_{m' \in A^c}(1-\mu_{1}(m')),   
\end{align}
where $F_{s}$ is the set of all subsets of $s$ integers that can be selected from $\dsb{j}$. $F_s$ by definition will contain $\frac{j!}{(j-s)!s!}$ elements, the sum over which is infeasible to compute in practice unless the number of trials $j$ is small. A useful property of $f$ is that it is invariant to the order of the elements in $\underline{\mu}_{1}(j)$. Moreover, we define density function of the binomial of $j$ trials and mean $\overline{\mu}_{1}(j)$, i.e., $\text{Bin}(j,\overline{\mu}_{1}(j))$, as:
\begin{align}
	f_{j,\overline{\mu}_{1}(j)}(s) = \binom{j}{s} \overline{\mu}_{1}(j)^s (1-\overline{\mu}_{1}(j))^{j-s}.
\end{align}
Using the first inequality of Lemma \ref{lemma:techlemma}, we obtain:
\begin{equation}
    \sum_{s=0}^{j} \frac{f_{j,\underline{\mu}_{1}(j)}(s)}{F_{j+1, y_i}^B(s)}\leq \sum_{s=0}^{j} \frac{f_{j,\overline{\mu}_{1}(j)}(s)}{F_{j+1, y_i}^B(s)}
\end{equation}
For the case $j \ge \sigma$, we apply the second inequality of Lemma~\ref{lemma:techlemma}, to obtain:
\begin{align}
     \sum_{s=0}^{j} \frac{f_{j,\overline{\mu}_{1}(j)}(s)}{F_{j+1, y_i}^B(s)} \le  \sum_{s=0}^{j} \frac{f_{j,\overline{\mu}_{1}(\sigma)}(s)}{F_{j+1, y_i}^B(s)}.
\end{align}
Instead, for the case $j < \sigma$, by applying the change of measure argument of Lemma~\ref{lemma:changeMeasure}, we have:
\begin{align}
	\sum_{s=0}^{j} \frac{f_{j,\underline{\mu}_{1}(j)}(s)}{F_{j+1, y_i}^B(s)} \leq {\sum_{s=0}^{j} \frac{f_{j,\overline{\mu}_{1}(j)}(s)}{F_{j+1, y_i}^B(s)}} \leq \left(\frac{1}{(1-y_i)^{j+1}}-1\right)&\delta_{\text{TV}}\left(\text{Bin}(j,\overline{\mu}_{1}(j)),\text{Bin}(j,\overline{\mu}_{1}(\sigma))\right)+ \nonumber\\&+{\sum_{s=0}^{j} \frac{f_{j,\overline{\mu}_{1}(\sigma)}(s)}{F_{j+1, y_i}^B(s)}},
\end{align}
where $\delta_{TV}(P,Q) \coloneqq \sup_{A\in\mathcal{F}}{\big|P(A)-Q(A)\big|}$ is the total variation between the probability measures $P$ and $Q$, having observed that, using the notation of Lemma~\ref{lemma:changeMeasure}:
\begin{align}
	& b= \max_{s \in \dsb{0,j}} \frac{1}{F^B_{j+1,y_i}(s)} = \frac{1}{F^B_{j+1,y_i}(0)} =\frac{1}{\Prob(\text{Bin}(j+1,y_i)=0)} =  \frac{1}{(1-y_i)^{j+1}}, \\
	& a =\min_{s \in \dsb{0,j}} \frac{1}{F^B_{j+1,y_i}(s)} = \frac{1}{F^B_{j+1,y_i}(j)} =\frac{1}{\Prob(\text{Bin}(j+1,y_i)\le j)} \ge \frac{1}{\Prob(\text{Bin}(j+1,y_i)\le j+1)} = 1.  
\end{align}
Putting all together, we obtain:
\begin{align}
	&\sum_{s=0}^{j} \frac{f_{j,\underline{\mu}_{1}(j)}(s)}{F_{j+1, y_i}^B(s)}\leq \nonumber \\
    &\leq
	\begin{cases}
		\Big(\frac{1}{(1-y_i)^{j+1}}-1\Big)\delta_{\text{TV}}(\text{Bin}(j,\overline{\mu}_{1}(j)),\text{Bin}(j,\overline{\mu}_{1}(\sigma)))+ \sum_{s=0}^{j} \frac{f_{j,\overline{\mu}_{1}(\sigma)}(s)}{F_{j+1, y_i}^B(s)} & \text{if } 0 \le j<\sigma \\
		\sum_{s=0}^{j} \frac{f_{j,\overline{\mu}_{1}(\sigma)}(s)}{F_{j+1, y_i}^B(s)} & \text{if } j\ge\sigma \\
	\end{cases}.
\end{align}
Then, summing over $j$, we have:
\begin{align}
  &\sum_{j=\Gamma}^{T-1}\left(\sum_{s=0}^{j} \frac{f_{j,\underline{\mu}_{1}(j)}(s)}{F_{j+1, y_i}^B(s)}-1\right)  \leq \nonumber\\ &\leq\begin{cases}
      \sum_{j=\Gamma}^{\sigma-1}\Big(\frac{1}{(1-y_i)^{j+1}}-1\Big)\delta_{\text{TV}}(\text{Bin}(j,\overline{\mu}_{1}(j)),\text{Bin}(j,\overline{\mu}_{1}(\sigma)))+ \sum_{j=\Gamma}^{T-1}\left(\sum_{s=0}^{j} \frac{f_{j,\overline{\mu}_{1}(\sigma)}(s)}{F_{j+1, y_i}^B(s)}-1\right) & \text{if }  \Gamma<\sigma \\
      \\
      \sum_{j=\Gamma}^{T-1}\left(\sum_{s=0}^{j} \frac{f_{j,\overline{\mu}_{1}(\sigma)}(s)}{F_{j+1, y_i}^B(s)}-1\right) & \text{if }  \Gamma\ge\sigma 
  \end{cases},
\end{align}
where we recall that $y_i \le \overline{\mu}_1(\sigma)$.
From Lemma 2.9 by~\citet{agrawal2017near}, we have that:
\begin{align}
	\sum_{s=0}^{j} \frac{f_{j,\overline{\mu}_{1}(\sigma)}(s)}{F_{j+1, y_i}(s)}-1 \leq
	\begin{cases}
		\frac{3}{\Delta_i'} & \text{if } j<\frac{8}{\Delta_i'}\\
		\Theta\left(e^{-\frac{\Delta_i^{'2}j}{2}}+\frac{e^{-D_i j}}{(j+1)\Delta_i'^{2}}+\frac{1}{e^{\Delta_i'^{2}\frac{j}{4}}-1}\right) & \text{if } j\ge\frac{8}{\Delta_i'}
	\end{cases},\label{eq:lemma4}
\end{align}
where $\Delta_i'= \overline{\mu}_{1}(\sigma)-y_i$ and $D_i=d(y_i,\overline{\mu}_{1}(\sigma)) =y_i\log{\frac{y_i}{\overline{\mu}_{1}(\sigma)}}+(1-y_i)\log{\frac{1-y_i}{1-\overline{\mu}_{1}(\sigma)}}$. Thus, summing over $j$, we obtain:
\begin{align}
	\sum_{j=\Gamma}^{T-1}\left(\sum_{s=0}^{j}\frac{f_{j,\overline{\mu}_{1}(\sigma)}(s)}{F_{j+1, y_i}(s)}-1\right)& \leq \hspace{-0.4 cm} \sum_{j<\min\{\frac{8}{\Delta_i'}-\Gamma,0\}}\hspace{-0.2 cm}\frac{3}{\Delta_i'}+\hspace{-0.4 cm}\sum_{j\ge\max\{\Gamma,\frac{8}{\Delta_i'}\}}\hspace{-0.5 cm}\Theta \left(e^{-\frac{\Delta_i^{'2}j}{2}}+\frac{e^{-D_i j}}{(j+1)\Delta_i'^{2}}+\frac{1}{e^{\Delta_i'^{2}\frac{j}{4}}-1}\right)\\
    &\leq \sum_{j<\frac{8}{\Delta_i'}}\frac{3}{\Delta_i'}+\sum_{j\ge\frac{8}{\Delta_i'}}\Theta \left(e^{-\frac{\Delta_i^{'2}j}{2}}+\frac{e^{-D_i j}}{(j+1)\Delta_i'^{2}}+\frac{1}{e^{\Delta_i'^{2}\frac{j}{4}}-1}\right)\\
    &\leq\frac{24}{\Delta_i'^2}+\Theta\left(\frac{1}{\Delta_i'^2}+\frac{1}{\Delta_i'^2\sqrt{D_i}} + \frac{1}{\Delta_i'^3} \right),
\end{align}
having bounded the individual terms carefully:
\begin{align}
& \sum_{j\ge\frac{8}{\Delta_i'}} e^{-\frac{\Delta_i^{'2}j}{2}} \le \sum_{j=1}^{+\infty} e^{-\frac{\Delta_i^{'2}j}{2}} \le \int_{j=0}^{+\infty}  e^{-\frac{\Delta_i^{'2}j}{2}}  \de j = \frac{2}{\Delta_i^{'2}}, \\
& \sum_{j\ge\frac{8}{\Delta_i'}} \frac{e^{-D_i j}}{(j+1)\Delta_i'^{2}} \le  \frac{1}{8\Delta_i'} \sum_{j\ge\frac{8}{\Delta_i'}} e^{-D_i j}  \le \frac{1}{8\Delta_i'} \sum_{j=1}^{+\infty} e^{-D_i j} \le  \frac{1}{8\Delta_i'} \int_{j=0}^{+\infty} e^{-D_i j} \de j \le  \frac{1}{8\Delta_i'D_i},\\
& \sum_{j\ge\frac{8}{\Delta_i'}} \frac{1}{e^{\Delta_i'^{2}\frac{j}{4}}-1}  \le \int_{j=\frac{4}{\Delta_i'}}^{+\infty} \frac{1}{e^{\Delta_i'^{2}\frac{j}{4}}-1} \de j = \frac{4}{\Delta_i'^2} \log \frac{1}{1-e^{-\Delta_i'}} \le \frac{4}{\Delta_i'^2} \log \frac{1}{  \Delta_i'},
\end{align}
having bounded $1-e^{-x} \ge x$. First of all, let us observe that from Pinsker's inequality, we have $D_i \ge 2 \Delta_i'^2$. We now relate $\Delta_i'^2$ with $d(\overline{\mu}_i(T),\overline{\mu}_1(\sigma))$. From the mean value theorem:
\begin{align}
    \frac{\epsilon}{(1+\epsilon)^2} d(\overline{\mu}_i(T),\overline{\mu}_1(\sigma)) & = \frac{d(\overline{\mu}_i(T),\overline{\mu}_1(\sigma))}{1+\epsilon} - \frac{d(\overline{\mu}_i(T),\overline{\mu}_1(\sigma))}{(1+\epsilon)^2} \\
    & = d(x_i,\overline{\mu}_1(\sigma)) - d(x_i,y_i) \\
    & \le \max_{z \in [y_i,\overline{\mu}_1(\sigma)]} \frac{\partial d(x_i,z)}{\partial z} (\overline{\mu}_1(\sigma) - y_i) \\
    & \le \frac{\partial d(x_i,z)}{\partial z}\rvert_{z = \overline{\mu}_1(\sigma)} \Delta_i',
\end{align}
having exploited that the maximum derivative is attained in $z = \overline{\mu}_1(\sigma)$. Moreover, we have, from Pinsker's inequality:
\begin{align}
    \frac{\partial d(x_i,z)}{\partial z}\rvert_{z = \overline{\mu}_1(\sigma)} = \frac{\overline{\mu}_1(\sigma)-x_i}{\overline{\mu}_1(\sigma)(1-\overline{\mu}_1(\sigma))} \le  \frac{\sqrt{d(x_i,\overline{\mu}_1(\sigma))}}{\sqrt{2}\overline{\mu}_1(\sigma)(1-\overline{\mu}_1(\sigma))} = \frac{\sqrt{d(\overline{\mu}_1(T),\overline{\mu}_1(\sigma))}}{\sqrt{2(1+\epsilon)}\overline{\mu}_1(\sigma)(1-\overline{\mu}_1(\sigma))}.
\end{align}
Putting all together, we obtain:
\begin{align}
    \Delta_i'^2 \ge   \frac{2\epsilon^2}{(1+\epsilon)^3} (\overline{\mu}_1(\sigma)(1-\overline{\mu}_1(\sigma)))^2 d(\overline{\mu}_i(T),\overline{\mu}_1(\sigma)) = \frac{2\epsilon^2}{(1+\epsilon)^3} \widetilde{d}(\overline{\mu}_i(T),\overline{\mu}_1(\sigma)) .
\end{align}
Finally, we get the following bound:
\begin{align}
\sum_{j=\Gamma}^{T-1}\left(\sum_{s=0}^{j}\frac{f_{j,\overline{\mu}_{1}(\sigma)}(s)}{F_{j+1, y_i}(s)}-1\right) \le \Theta \left( \frac{1}{\epsilon^2 \widetilde{d}(\overline{\mu}_i(T),\overline{\mu}_1(\sigma))} \log \frac{1}{\epsilon^2 \widetilde{d}(\overline{\mu}_i(T),\overline{\mu}_1(\sigma))} \right).
\end{align}
Putting all together, we have:
\begin{equation}\label{eq:fullExpression}
\begin{aligned}
\E[N_{i,T}] & \le \Gamma + 2 + (1+\epsilon)^2 \frac{\log(T)}{d(\overline{\mu}_i(T),\overline{\mu}_1(\sigma))} \\
& +  \left(\log \frac{\overline{\mu}_1(\sigma)(1-\overline{\mu}_i(T))}{\overline{\mu}_i(T)(1-\overline{\mu}_1(\sigma))} \right)^2  \cdot \frac{(1+\epsilon)^2}{2\epsilon^2} \cdot \frac{1}{d(\overline{\mu}_i(T),\overline{\mu}_1(\sigma))^2} \\
& +  \Theta \left( \frac{1}{\epsilon^2 \widetilde{d}(\overline{\mu}_i(T),\overline{\mu}_1(\sigma))} \log \frac{1}{\epsilon^2 \widetilde{d}(\overline{\mu}_i(T),\overline{\mu}_1(\sigma))} \right) \\
& + \sum_{j=\Gamma}^{\sigma-1}\frac{\delta_{\mathrm{TV}}(\mathrm{Bin}(j,\overline{\mu}_{1}(j)),\mathrm{Bin}(j,\overline{\mu}_{1}(\sigma)) }{(1-\overline{\mu}_{1}(\sigma))^{j+1}}.
\end{aligned}
\end{equation}
By setting $\epsilon'=3\epsilon$, we get the result.
\end{proof}
\gts*
{
\begin{proof}
Using Lemma \ref{lem:RDg} to bound the expected value of pulls of the suboptimal arm when the window length is equal to the time horizon we have:
     \begin{align*}
        \mathbb{E}_{\tau}[N_{i,T}]& \leq \Gamma+\frac{1}{\epsilon_i}+\frac{T}{T}+\frac{\omega T}{T}\hspace{-0.12 cm} +\hspace{-0.12 cm}\underbrace{\mathbb{E}\left[\sum_{t=K\Gamma+1}^{T} \hspace{-0.3 cm}\mathds{1}\left \{ p_{i,t,T}^i>\frac{1}{T\epsilon_i}, I_t=i, N_{i,t}\ge \omega\right\}\right]}_{\text{(A)}}\\
        & \quad + \underbrace{\mathbb{E}\left[\sum_{t=K\Gamma+1}^{T}\left(\frac{1}{p_{i^*(t),t,T}^i}-1\right)\mathds{1}\left\{I_t=i^*(t) \right\}\right]}_{\text{(B)}}.
    \end{align*}
In the following, we consider, without loss of generality, the first arm to be the best, i.e., for each round $t \in \dsb{T}$ we will have $i^*(t)=i^*(T)=1$. Furthermore, we set $y_{i,t}=y_i=\overline{\mu}_1(\sigma)-\frac{\overline{\Delta}_i(\sigma,T)}{3}$, $x_{i,t}=x_i=\overline{\mu}_i(T)+\frac{\overline{\Delta}_i(\sigma,T)}{3}$, $\omega=\frac{32\log{(T\overline{\Delta}_i(\sigma,T)^2+e^6)}}{\gamma(y_i-x_i)^2}=\frac{288\log{(T\overline{\Delta}_i(\sigma,T)^2+e^6)}}{\gamma\overline{\Delta}_i(\sigma,T)^2}$, $\epsilon_i=\overline{\Delta}_i(\sigma,T)^2$. Finally, we define ${\overline{\hat{\mu}}_{i,t,T}}=\frac{S_{i,t}}{N_{i,t}}$. We rewrite term (A) as a contribution of two different terms.
\begin{align}
\text{(A)}& =\underbrace{\mathbb{E}\left[\sum_{t=K\Gamma+1}^{T} \hspace{-0.3 cm}\mathds{1}\left \{ p_{i,t,T}^i>\frac{1}{T\epsilon_i},\overline{\hat{\mu}}_{i,t,T}\leq x_i, I_t=i, N_{i,t}\ge \omega\right\}\right]}_{\text{(A1)}} \nonumber\\
& \quad +\underbrace{\mathbb{E}\left[\sum_{t=K\Gamma+1}^{T} \hspace{-0.3 cm}\mathds{1}\left \{ p_{i,t,T}^i>\frac{1}{T\epsilon_i},\overline{\hat{\mu}}_{i,t,T}>x_i, I_t=i, N_{i,t}\ge \omega\right\}\right]}_{\text{(A2)}}.
\end{align}
We focus first on term (A2).
\paragraph{Term (A2)}
  Focusing on (A2), let $\tau_j^i$ denote the round in which arm $i$ is played for the $j$-th time:
\begin{align}
   \text{(A2)} &\leq \mathbb{E}\left[\sum_{t=K\Gamma+1}^T \mathds{1}\left\{\hat{\mu}_{i,t,T}> x_i ,  I_t=i,  N_{i,t}\ge \omega\right\}\right]\\
    & \le \mathbb{E}\left[\sum_{j=\omega}^T \mathds{1}\left\{\hat{\mu}_{i,\tau_j^i+1,T} >x_i\right\} \sum_{t=\tau_j^i+1}^{\tau_{j+1}^i} \mathds{1} \left\{ I_t=i \right\}\right]\\
    &\le\sum_{j=\omega}^T\Prob(\hat{\mu}_{i,\tau_j^i+1,T}> x_i\big)\\
    &=\sum_{j=\omega}^T\Prob(\hat{\mu}_{i,\tau_j^i+1,T}-\mathbb{E}[\hat{\mu}_{i,\tau_j^i+1,T}]> x_i-\mathbb{E}[\hat{\mu}_{i,\tau_j^i+1,T}])\\
    &\leq \sum_{j=\omega}^T \exp{(-j d ( x_i,\mathbb{E}[\hat{\mu}_{i,\tau_{j}^i+1,T}]))} \\
     &\leq \sum_{j=\omega}^T \exp{(-2\omega   ( x_i-{\mu}_i(T))^2)},
\end{align}
where the last inequality follows from the Chernoff-Hoeffding bound (Lemma \ref{lemma:chernoff}). By definition, we will have that $x_i-\overline{\mu}_i(T)=y_i-x_i$. So, substituting in the last inequality, we obtain:
\begin{align}
    \sum_{j=\omega}^T \exp{(-2\omega(x_i-\overline{\mu}_i(T))^2)}&=\sum_{j=\omega}^T \exp{\left(-2\frac{32\log(T\overline{\Delta}_i(\sigma,T)^2+e^6)}{(y_i-x_i)^2}(x_i-\overline{\mu}_i(T))^2\right)}\\
    &\leq \frac{1}{\overline{\Delta}_i(\sigma,T)^2}.
\end{align}
\paragraph{Term (A1)} Focusing on term (A1), we have:
\begin{align}
    \text{(A1)} \leq \mathbb{E}\left[\sum_{t=K\Gamma+1}^{T} \hspace{-0.3 cm}\mathds{1}\left \{\underbrace{ p_{i,t,T}^i>\frac{1}{T\epsilon_i}, N_{i,t}\ge \omega, \overline{\hat{\mu}}_{i,t,T}\leq x_i}_{\text{(C)}}\right\}\right].
\end{align}
We now wish to evaluate if condition \text{(C)} ever occurs.
In order to do so, notice that $\theta_{i,t,T}$ is a normal distributed random variable, in particular distributed as $\mathcal{N}\left(\overline{\hat{\mu}}_{i,t,T}, \frac{1}{\gamma N_{i,t}}\right)$. An $\mathcal{N}\left(m, \sigma^2\right)$ distributed random variable is stochastically dominated by $\mathcal{N}\left(m^{\prime}, \sigma^2\right)$ distributed r.v. if $m^{\prime} \geq m$. Therefore, given $\overline{\hat{\mu}}_{i,t,T} \leq x_i$, the distribution of $\theta_{i,t,T}$ is stochastically dominated by $\mathcal{N}\left(x_i, \frac{1}{\gamma N_{i,t}}\right)$ (see Lemma \ref{lem:nbord}). This implies:
$$
\Prob\left(\theta_{i,t,T}>y_i \mid N_{i,t}\ge \omega, \overline{\hat{\mu}}_{i,t,T} \leq x_i, \mathbb{F}_{t-1}\right) \leq \Prob\left(\left.\mathcal{N}\left(x_i, \frac{1}{\gamma N_{i,t}}\right)>y_i \right\rvert\, \mathcal{F}_{t-1}, N_{i,t}>\omega\right).
$$
Using Lemma~\ref{lemma:Abramowitz2}, we have:
\begin{align}
    \Prob\left(\mathcal{N}\left(x_i, \frac{1}{\gamma N_{i,t}}\right)>y_i\right) & \leq \frac{1}{2} e^{-\frac{\left(\gamma N_{i,t}\right)\left(y_i-x_i\right)^2}{2}} \\ & \leq \frac{1}{2} e^{-\frac{\left(\gamma \omega\right)\left(y_i-x_i\right)^2}{2}},
\end{align}
which is smaller than $\frac{1}{T \overline{\Delta}_i(\sigma,T)^2}$ because $\omega \ge \frac{2 \log \left(T \overline{\Delta}_i(\sigma,T)^2\right)}{\gamma \left(y_i-x_i\right)^2}$. Substituting, we get,
\begin{equation}
   \Prob\left(\theta_{i,t,T}>y_i \mid N_{i,t}>\omega, \hat{\mu}_i(t) \leq x_i, \mathcal{F}_{t-1}\right) \leq \frac{1}{T\overline{\Delta}_i(\sigma,T)^2}. 
\end{equation}
In contradiction with the condition (C), then (A1) $=0$.
\paragraph{Term (B)}
Focusing on term (B):
\begin{align}    \text{(B)}=\mathbb{E}\left[\sum_{t=K\Gamma+1}^{T}\left(\frac{1}{p_{1,t,T}^i}-1\right)\mathds{1}\left\{I_t=1 \right\}\right]
\end{align}
Let $\tau_j$ denote the time step at which arm $1$ is played for the $j$-th time:
\begin{align}
    \text{(B)} &\leq \sum_{j=\Gamma}^{T-1}\mathbb{E}\bigg[\frac{1-p_{1,\tau_{j}+1,T}^i}{p_{1,\tau_{j}+1,T}^i}\sum_{t=\tau_j+1}^{\tau_{j+1}}\mathds{1}\{I_t=1\}\bigg]\\
    &\leq\sum_{j=\Gamma}^{T-1}\mathbb{E}\bigg[\frac{1-p_{1,\tau_{j}+1,T}^i}{p_{1,\tau_{j}+1,T}^i}\bigg], \label{eq:fixed2}
\end{align}
where the inequality in Equation~\eqref{eq:fixed2} uses the fact that $p_{1,t,T}^i$ is fixed, given $\mathcal{F}_{t-1}$. Then, we observe that $p_{1,t,T}^i = \Prob(\theta_{1,t,T} > y_i |\mathcal{F}_{t-1})$
changes only when the distribution of $\theta_{1,t,T}$ changes, that is, only on the time step after each play of the first arm. Thus, $p_{1,t,T}^i$ is the same at all time steps $t \in \{\tau_j+1, \dots , \tau_{j+1}\}$, for every $j$. Finally, bounding the probability of selecting the optimal arm by $1$ we have:
\begin{equation}
    \text{(B)} \leq \sum_{j=\Gamma}^{T-1}\mathbb{E} \left[ \frac{1}{p_{1,\tau_j+1,T}^i} - 1 \right].
\end{equation}
Now in order to face this term, let us consider the arbitrary trial $j$. Thanks to Lemma \ref{lemma:changeMeasure}, we can bound the difference between the real process and an analogous (same number of trials) virtual process with mean $\overline{\mu}_1(\sigma)$ (where by stationary we mean that all the trials of the virtual process will have a fixed mean):
\begin{align}
    \underbrace{\mathbb{E} \left[ \frac{1}{p_{1,\tau_j+1,T}^i}  \right]}_{\text{(D)}} \leq \frac{2\delta_{TV}(\mathbb{P}_j,\mathbb{Q}_j(\overline{\mu}_{1}(\sigma)))}{\mathrm{erfc}(\sqrt{\frac{\gamma j}{2}}\overline{\mu}_1(\sigma))}+\underbrace{\mathbb{E}_{\overline{\mu}_1(\sigma)} \left[ \frac{1}{p_{1,\tau_j+1,T}^i} \right]}_{\text{(E)}}, 
\end{align}
where $\mathbb{P}_j$ is the distribution of the sample mean of the first $j$ samples collected from arm $1$, namely $\overline{\hat{{\mu}}}_{1,\tau_j+1,T}$, while $\mathbb{Q}_j(y)$ is the distribution of the sample mean of $j$ samples collected from \emph{any} $\lambda^2$-subgaussian distribution with fixed mean $\overline{\mu}_1(\sigma)$.
By definition, $\delta_{TV}(P,Q) \coloneqq \sup_{A\in\mathcal{F}}{\big|P(A)-Q(A)\big|}$ is the total variation between the probability measures $P$ and $Q$ (assuming they are defined over a measurable space $(\Omega,\mathcal{F})$), having observed that, using the notation of Lemma~\ref{lemma:changeMeasure} (as the environment cannot generate rewards smaller than zero):
\begin{align}
    & b= \max_{s \ge 0} \frac{1}{\Prob\left(\mathcal{N}\left(s, \frac{1}{\gamma N_{i,t}}\right)>y_i\right)} \leq\frac{1}{\Prob\left(\mathcal{N}\left(0, \frac{1}{\gamma N_{i,t}}\right)\ge \overline{\mu}_1(\sigma)\right)} =  \frac{2}{\mathrm{erfc}(\sqrt{\frac{\gamma j}{2}}\overline{\mu}_1(\sigma))}, \label{eq:nonNegChange} \\
    & a= 1.  
\end{align}
Our interest is to find if there is a minimum number of trials $j$ such that we will have $\text{(D)}\ge\text{(E)}$ without the need to add any term.
Given $\mathcal{F}_{\tau_j}$, let $\Theta_j$ denote a $\mathcal{N}\left(\overline{\hat{\mu}}_1\left(\tau_j+1\right), \frac{1}{\gamma j}\right)$ distributed Gaussian random variable. Let $G_j$ be the geometric random variable denoting the number of consecutive independent trials until and including the trial where a sample of $\Theta_j$ becomes greater than $y_i$. Then observe that $p_{1, \tau_j+1,T}^i=\operatorname{Pr}\left(\Theta_j>y_i \mid \mathcal{F}_{\tau_j}\right)$ and
\begin{align}\label{eq:rif}
    \mathbb{E}\left[\frac{1}{p_{1, \tau_j+1,T}^i}\right]=\mathbb{E}\left[\mathbb{E}\left[G_j \mid \mathcal{F}_{\tau_j}\right]\right]=\mathbb{E}\left[G_j\right].
\end{align}

We compute first the expected value for the real process. We will consider first $j$ such that $\overline{\mu}_1(j)\ge \overline{\mu}_1(\sigma)$ , we will bound the expected value of $G_j$ by a constant for all $j$ defined as earlier.
Consider any integer $r \geq 1$. Let $z=\sqrt{\log r}$ and let random variable MAX $_r$ denote the maximum of $r$ independent samples of $\Theta_j$. We abbreviate $\overline{\hat{\mu}}_1\left(\tau_j+1\right)$ to $\overline{\hat{\mu}}_1$ and we will abbreviate $\overline{\mu}_1(\sigma)$ as $\mu_1$ and $\overline{\Delta}_i(\sigma,T)$ as $\Delta_i$ in the following. Then for any integer $r\ge 1$:
\begin{align} \Prob\left(G_j \leq r\right) & \geq \Prob\left(\operatorname{MAX}_r>y_i\right) \\ & \geq \Prob\left(\operatorname{MAX}_r>\overline{\hat{\mu}}_1+\frac{z}{\sqrt{\gamma j}} \geq y_i\right) \\ & =\mathbb{E}\left[\mathbb{E}\left[\left.\mathds{1}\left(\operatorname{MAX}_r>\overline{\hat{\mu}}_1+\frac{z}{\sqrt{\gamma j}} \geq y_i\right) \right\rvert\, \mathcal{F}_{\tau_j}\right]\right] \\ & =\mathbb{E}\left[\mathds{1}\left(\overline{\hat{\mu}}_1+\frac{z}{\sqrt{\gamma j}} \geq y_i\right) \Prob\left(\left.\operatorname{MAX}_r>\overline{\hat{\mu}}_1+\frac{z}{\sqrt{\gamma j}} \right\rvert\, \mathcal{F}_{\tau_j}\right)\right].
\end{align}
For any instantiation $F_{\tau_j}$ of $\mathcal{F}_{\tau_j}$, since $\Theta_j$ is Gaussian $\mathcal{N}\left(\hat{\mu}_1, \frac{1}{\gamma j}\right)$ distributed r.v., this gives using \ref{lemma:Abramowitz}:
\begin{align}
    \Prob\left(\left.\operatorname{MAX}_r>\overline{\hat{\mu}}_1+\frac{z}{\sqrt{\gamma j}} \right\rvert\, \mathcal{F}_{\tau_j}=F_{\tau_j}\right) & \geq 1-\left(1-\frac{1}{\sqrt{2 \pi}} \frac{z}{\left(z^2+1\right)} e^{-z^2 / 2}\right)^r \\ & =1-\left(1-\frac{1}{\sqrt{2 \pi}} \frac{\sqrt{\log r}}{(\log r+1)} \frac{1}{\sqrt{r}}\right)^r \\ & \geq 1-e^{-\frac{r}{\sqrt{4 \pi r \log r}}}.
\end{align}
For $r \ge e^{12}$:
\begin{align}
   \Prob\left(\left.\operatorname{MAX}_r>\overline{\hat{\mu}}_1+\frac{z}{\sqrt{\gamma j}} \right\rvert\, \mathcal{F}_{\tau_j}=F_{\tau_j}\right) \geq 1-\frac{1}{r^2}. 
\end{align}
Substituting we obtain:
\begin{align}
    \Prob\left(G_j \leq r\right) & \geq \mathbb{E}\left[\mathds{1}\left(\overline{\hat{\mu}}_1+\frac{z}{\sqrt{\gamma j}} \geq y_i\right)\left(1-\frac{1}{r^2}\right)\right] \\ & =\left(1-\frac{1}{r^2}\right) \Prob\left(\overline{\hat{\mu}}_1+\frac{z}{\sqrt{\gamma j}} \geq y_i\right).
\end{align}
Applying Lemma~\ref{lemma:Subg} to the second term, we can write, since $\mathbb{E}[\overline{\hat{\mu}}_1]\ge\mu_1$:
\begin{align}
  \Prob\left(\overline{\hat{\mu}}_1+\frac{z}{\sqrt{\gamma j}} \geq \mu_1\right) \geq 1-e^{-\frac{z^2}{2\gamma \lambda^2}} \geq 1-\frac{1}{r^2},  
\end{align}
being $\gamma\leq \frac{1}{4\lambda^2}$. Using, $y_i \leq \mu_1$, this gives
\begin{equation}
   \Prob\left(\overline{\hat{\mu}}_1+\frac{z}{\sqrt{\gamma j}} \geq y_i\right) \geq 1-\frac{1}{r^2} . 
\end{equation}
Substituting all back we obtain:
\begin{align}
    \mathbb{E}\left[G_j\right] & =\sum_{r=0}^{+\infty} \Prob\left(G_j \geq r\right) \\ & =1+\sum_{r=1}^{\infty} \Prob\left(G_j \geq r\right) \\ & \leq 1+e^{12}+\sum_{r \geq 1}\left(\frac{1}{r^2}+\frac{1}{r^{2}}\right) \\ & \leq 1+e^{12}+2+2.
\end{align}
This shows a constant bound of $\mathbb{E}\left[\frac{1}{p_{1, \tau_j+1,T}^i}-1\right]=\mathbb{E}\left[G_j\right]-1 \leq e^{12}+5$ for all $j\ge \sigma$.
We derive a  bound for large $j$. Consider $j\ge\omega$ (and still $j\ge\sigma$). Given any $r \geq 1$, define $G_j, \operatorname{MAX}_r$, and $z=\sqrt{\log r}$ as defined earlier. Then,
\begin{align} \Prob\left(G_j \leq r\right) & \geq \Prob\left(\operatorname{MAX}_r>y_i\right) \\ & \geq \Prob\left(\operatorname{MAX}_r>\overline{\hat{\mu}}_1+\frac{z}{\sqrt{\gamma j}}-\frac{\Delta_i}{6} \geq y_i\right) \\ & =\mathbb{E}\left[\mathbb{E}\left[\left.\mathds{1}\left(\operatorname{MAX}_r>\overline{\hat{\mu}}_1+\frac{z}{\sqrt{\gamma j}}-\frac{\Delta_i}{6} \geq y_i\right) \right\rvert\, \mathcal{F}_{\tau_j}\right]\right] \\ & =\mathbb{E}\left[\mathds{1}\left(\overline{\hat{\mu}}_1+\frac{z}{\sqrt{\gamma j}}+\frac{\Delta_i}{6} \geq \mu_1\right) \Prob\left(\left.\operatorname{MAX}_r>\overline{\hat{\mu}}_1+\frac{z}{\sqrt{\gamma j}}-\frac{\Delta_i}{6} \right\rvert\, \mathcal{F}_{\tau_j}\right)\right] .\end{align}
where we used that $y_i=\mu_1-\frac{\Delta_i}{3}$. Now, since $j \geq \omega=\frac{288 \log \left(T \Delta_i^2+e^{6}\right)}{\gamma\Delta_i^2}$,
\begin{align}
    2 \frac{\sqrt{2 \log \left(T \Delta_i^2+e^{6}\right)}}{\sqrt{\gamma j}} \leq \frac{\Delta_i}{6}.
\end{align}

Therefore, for $r \leq\left(T \Delta_i^2+e^{6}\right)^2$,
\begin{align}
 \frac{z}{\sqrt{\gamma j}}-\frac{\Delta_i}{6}=\frac{\sqrt{\log (r)}}{\sqrt{\gamma j}}-\frac{\Delta_i}{6} \leq-\frac{\Delta_i}{12}.
\end{align}

Then, since $\Theta_j$ is $\mathcal{N}\left(\overline{\hat{\mu}}_1\left(\tau_j+1\right), \frac{1}{\gamma j}\right)$ distributed random variable, using the upper bound in Lemma \ref{lemma:Abramowitz2}, we obtain for any instantiation $F_{\tau_j}$ of history $\mathbb{F}_{\tau_j}$,
\begin{align}
    \Prob\left(\left.\Theta_j>\overline{\hat{\mu}}_1\left(\tau_j+1\right)-\frac{\Delta_i}{12} \right\rvert\, \mathcal{F}_{\tau_j}=F_{\tau_j}\right) \geq 1-\frac{1}{2} e^{-\gamma j \frac{\Delta_i^2}{288}} \geq 1-\frac{1}{2\left(T \Delta_i^2+e^{6}\right)}.
\end{align}
being $j\geq \omega$. This implies:
\begin{align}
    \Prob\left(\left.\operatorname{MAX}_r>\hat{\mu}_1\left(\tau_j+1\right)+\frac{z}{\sqrt{\gamma j}}-\frac{\Delta_i}{6} \right\rvert\, \mathcal{F}_{\tau_j}=F_{\tau_j}\right) \geq 1-\frac{1}{2^r\left(T \Delta_i^2+e^{6}\right)^r}.
\end{align}

Also, for any $t \geq \tau_j+1$, we have $k_1(t) \geq j$, and using Lemma~\ref{lemma:Subg}, as $\mathbb{E}[\overline{\hat{\mu}}_1]\ge\mu_1$ we get:
\begin{align}
\Prob\left(\overline{\hat{\mu}}_1+\frac{z}{\sqrt{\gamma j}}-\frac{\Delta_i}{6} \geq y_i\right) \geq \Prob\left(\overline{\hat{\mu}}_1 \geq \mu_1-\frac{\Delta_i}{6}\right) \geq 1-e^{- k_1(t) \Delta_i^2 / 72\lambda^2} \geq 1-\frac{1}{\left(T \Delta_i^2+e^{6}\right)^{16}}.
\end{align}

Let $T^{\prime}=\left(T \Delta_i^2+e^{6}\right)^2$. Therefore, for $1 \leq r \leq T^{\prime}$, we have:
\begin{align}
    \Prob\left(G_j \leq r\right) \geq 1-\frac{1}{2^r\left(T^{\prime}\right)^{r / 2}}-\frac{1}{\left(T^{\prime}\right)^8}.
\end{align}

When $r \geq T^{\prime} \geq e^{12}$, we obtain:
\begin{align}
    \Prob\left(G_j \leq r\right) \geq 1-\frac{1}{r^2}-\frac{1}{r^{2}}.
\end{align}

Combining all the bounds we have derived:
    \begin{align} \mathbb{E}\left[G_j\right] & \leq \sum_{r=0}^{\infty} \Prob\left(G_j \geq r\right) \\ & \leq 1+\sum_{r=1}^{T^{\prime}} \Prob\left(G_j \geq r\right)+\sum_{r=T^{\prime}}^{\infty} \Prob\left(G_j \geq r\right) \\ & \leq 1+\sum_{r=1}^{T^{\prime}} \frac{1}{\left(2 \sqrt{T^{\prime}}\right)^r}+\frac{1}{\left(T^{\prime}\right)^7}+\sum_{r=T^{\prime}}^{\infty} \frac{1}{r^2}+\frac{1}{r^{1.5}} \\ & \leq 1+\frac{1}{\sqrt{T^{\prime}}}+\frac{1}{\left(T^{\prime}\right)^7}+\frac{2}{T^{\prime}}+\frac{3}{\sqrt{T^{\prime}}} \\ & \leq 1+\frac{5}{T \Delta_i^2+e^{6}} .\end{align}

So we have proved that:
\begin{align}
    \mathbb{E}\left[\frac{1}{p_{1, \tau_j+1,T}^i}-1\right]\le\begin{cases}
                 \frac{2\delta_{TV}(\mathbb{P}_j(\overline{\mu}_{1}(j)),\mathbb{Q}_j(\overline{\mu}_{1}(\sigma)))}{\mathrm{erfc}(\sqrt{\frac{\gamma j}{2}}\overline{\mu}_1(\sigma))}+\mathbb{E}_{\overline{\mu}_1(\sigma)} \left[ \frac{1}{p_{1,\tau_j+1,T}^i} -1\right].  & \text{if } 0 \le j<\sigma \\
                (e^{12}+5) & \text{if } j\ge\sigma \\
                \frac{5}{T\overline{\Delta}_i(\sigma,T)^2} &\textit{if } j\ge \omega \textit{ and } j\ge\sigma 
        \end{cases}
\end{align}
Furthermore, it also holds:
\begin{align}
    \mathbb{E}_{\overline{\mu}_1(\sigma)}\left[\frac{1}{p_{1, \tau_j+1,T}^i}-1\right]\leq\begin{cases}
                (e^{12}+5) & \text{if } j\leq\omega \\
                \frac{5}{T\overline{\Delta}_i(\sigma,T)^2} &\text{if } j\ge \omega  
        \end{cases},
\end{align}
as it is the expected value for the term in a stationary process with fixed mean for the reward equal to $\overline{\mu}_1(\sigma)$ (Lemma 6 \citet{pmlr-v31-agrawal13a}, but can be also retrieved by making the same calculation that we have just performed).
So, we can write:
\begin{align}
    &\text{(B)}\leq\sum_{j=\Gamma}^{T-1}\mathbb{E}\left[\frac{1}{p_{1, \tau_j+1,T}^i}-1\right]\nonumber\\&\leq  \begin{cases}
        \sum_{j=\Gamma}^{\omega}(e^{12}+5)+\sum_{j=\Gamma}^{T-1}\frac{1}{T\overline{\Delta}_i(\sigma,T)^2} &\text{if } \Gamma\ge\sigma\\
        \\
        \sum_{j=\Gamma}^{\sigma-1}\frac{2\delta_{TV}(\mathbb{P}_j(\overline{\mu}_{1}(j)),\mathbb{Q}_j(\overline{\mu}_{1}(\sigma)))}{\mathrm{erfc}(\sqrt{\frac{\gamma j}{2}}\overline{\mu}_1(\sigma))}+\sum_{j=\Gamma}^{\omega}(e^{12}+5)+\sum_{j=\Gamma}^{T-1}\frac{1}{T\overline{\Delta}_i(\sigma,T)^2} &\text{if } \Gamma<\sigma\\
    \end{cases}\\
    &\leq \begin{cases}
        \omega(e^{12}+5)+\frac{1}{\overline{\Delta}_i(\sigma,T)^2} &\text{if } \Gamma\ge\sigma\\
        \\
        \sum_{j=\Gamma}^{\sigma-1}\frac{2\delta_{TV}(\mathbb{P}_j(\overline{\mu}_{1}(j)),\mathbb{Q}_j(\overline{\mu}_{1}(\sigma)))}{\mathrm{erfc}(\sqrt{\frac{\gamma j}{2}}\overline{\mu}_1(\sigma))}+{\omega}(e^{12}+5)+\frac{1}{\overline{\Delta}_i(\sigma,T)^2} &\text{if } \Gamma<\sigma\\
    \end{cases},
\end{align}
summing all the term follows the statement.
\end{proof}}

\cor*
\begin{proof}
    The proof for \texttt{Beta-TS} follows from the proof of Theorem \ref{thm:ts}, setting $\omega=\frac{\log(T)}{2(x_i-y_i)^2}$ $x_i=\overline{\mu}_i(T)+\frac{\overline{\Delta}_i(\sigma,T)}{3}$ and $y_i=\overline{\mu}_{1}(\sigma)-\frac{\overline{\Delta}_i(\sigma,T)}{3}$. Rewriting Equation \eqref{eq:tsexp}:
    \begin{align}
        \sum_{j=1}^{T}\exp(-jd(x_i,y_i))\le \sum_{j=1}^T \frac{1}{j(x_i-y_i)^2}\leq\frac{9\ln(T)}{\overline{\Delta}_i(\sigma,T)^2}.
    \end{align}
Finally we can rewrite Equation \eqref{eq:lemma4}:
\begin{align}
	\sum_{s=0}^{j} \frac{f_{j,\overline{\mu}_{1}(\sigma)}(s)}{F_{j+1, y_i}(s)}-1 &\leq
	\begin{cases}
		\frac{3}{\Delta_i'} & \text{if } j<\frac{8}{\Delta_i'}\\
		\Theta\left(e^{-\frac{\Delta_i^{'2}j}{2}}+\frac{e^{-D_i j}}{(j+1)\Delta_i'^{2}}+\frac{1}{e^{\Delta_i'^{2}\frac{j}{4}}-1}\right) & \text{if } j\ge\frac{8}{\Delta_i'}
	\end{cases},\\
    &\leq \begin{cases}
		\frac{3}{\Delta_i'} & \text{if } j<\frac{8}{\Delta_i'}\\
		\Theta\left(\frac{2}{{\Delta_{i}'}^2j}+\frac{1}{(j+1)\Delta_i'^{2}}+\frac{1}{{\Delta_i'^{2}\frac{j}{4}}}\right) & \text{if } j\ge\frac{8}{\Delta_i'}
	\end{cases}
\end{align}
where $\Delta_i'= \overline{\mu}_{1}(\sigma)-y_i=\frac{\overline{\Delta}_i(\sigma,T)}{3}$ and $D_i=d(y_i,\overline{\mu}_{1}(\sigma)) =y_i\log{\frac{y_i}{\overline{\mu}_{1}(\sigma)}}+(1-y_i)\log{\frac{1-y_i}{1-\overline{\mu}_{1}(\sigma)}}$. Summing over all the terms we obtain:
\begin{align}
   \sum_{j=1}^T \left(\sum_{s=0}^{j} \frac{f_{j,\overline{\mu}_{1}(\sigma)}(s)}{F_{j+1, y_i}(s)}-1\right) &\leq O\left(\frac{1}{\overline{\Delta}_i(\sigma,T)^2} +\frac{\log(T)}{\overline{\Delta}_i(\sigma,T)^2}\right).
\end{align}
By doing all the steps from Equation \eqref{line:line} to Equation \eqref{eq:tscondfin} it is easy to see that the condition from Equation \eqref{eq:tscond} still holds. The result follows by summing all the terms and noticing that for all the instances in $\mathcal{M}_{\sigma}^{\mathcal{S}}$, whenever each arm is pulled at least $\sigma$ times the sum of the dissimilarity terms vanishes. The result for  \texttt{$\gamma$-GTS} follows trivially from the statement of Theorem \ref{thm:gts}, noting again that by definition of the class of instances all the dissimilarity terms vanish for $\Gamma\ge\sigma$.
\end{proof}

\clearpage
\lb*
\begin{proof}
    First of all, we build two instances, defined for $n\in\dsb{T}$ and $2\sigma +1 \le T \implies \sigma \le (T-1)/2$:
    \begin{equation}
       \bm{\mu} = \begin{cases}
           \mu_1(n)=\min\left\{\frac{n-1}{2\sigma}, \frac{1}{2}\right\}\\
           \\
           \mu_i(n)=\min\left\{\frac{n-1}{2\sigma}, \frac{1}{4}\right\} & \text{ if } i\in \dsb{K}\setminus\{1\}
       \end{cases},
    \end{equation}
    We define $\tau_{i,\sigma}$ as the round in which the $i$-th arm gets to $\frac{\sigma}{2}$ pulls and $i^*\in \arg\max_{i\in \dsb{K}\setminus\{1\}} \left\{\mathbb{E}_{\bm{\mu}}[\tau_{i,\sigma}]\right\}$, i.e., the arm whose expected value for the round in which it is played for the $\frac{\sigma}{2}$-th time under policy $\mathfrak{A}$ in environment $\bm{\mu}$ is the largest in the set of the suboptimal arms. We then introduce the second environment as:
   \begin{equation}
       \bm{\mu'} = \begin{cases}
           \mu_1'(n)=\min\left\{\frac{n-1}{2\sigma}, \frac{1}{2}\right\}\\
           \\
           \mu_i'(n)=\min\left\{\frac{n-1}{2\sigma}, \frac{1}{4}\right\}  & \text{ if } i\in \dsb{K}\setminus\{1,i^*\}\\
           \\
           \mu_{i^*}'(n)=\min\left\{\frac{n-1}{2\sigma}, 1\right\}
       \end{cases}.
    \end{equation} 
     Intuitively, the two environments are indistinguishable as long as an algorithm does not pull arm $i^*$ at least $\frac{\sigma}{2}$ times.
    The instances are depicted in Figure \ref{fig:instance2}.

\begin{figure}[t]
\begin{center}
\resizebox{\textwidth}{!}{
\includegraphics{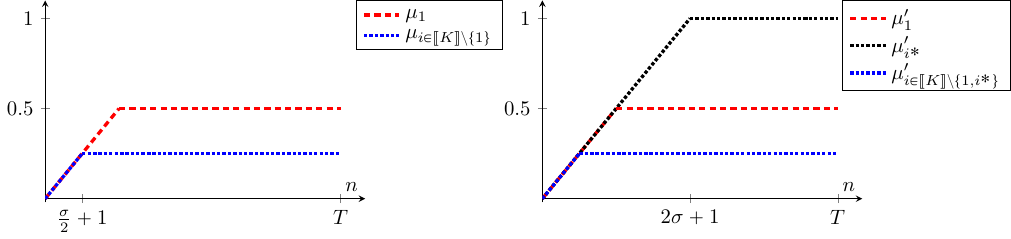}
}
\end{center}
\caption{The two instances $\bm{\mu}$ and $\bm{\mu'}$.}\label{fig:instance2}
\end{figure}
By performing some simple calculations, the following equalities also hold:
\begin{align*}
& \overline{\mu}_i(T)= \overline{\mu}_{i^*}(T) = \overline{\mu}'_i(T)= \frac{1}{4}-\frac{\frac{\sigma}{2}+1}{8T}, \\
    & \overline{\mu}_{i^*}'(T)=  1 - \frac{2\sigma+1}{2T}, & \text{if } i \not\in \dsb{K} \setminus \{1,i^*\}\\
    &\overline{\mu}_1(T)=\overline{\mu}'_1(T)=\frac{1}{2}-\frac{\sigma+1}{4T}
\end{align*}
Consequently, we have for all $i \in \dsb{K}\setminus\{1\}$:
\begin{align}
    \overline{\Delta}_{i,\bm{\mu}}(T,T) = \overline{\mu}_1(T)- \overline{\mu}_i(T) =   \frac{1}{4} -  \frac{\frac{3\sigma}{2}+1}{8T} \ge \frac{\frac{5}{4}T+1}{8T} \ge \frac{5}{32},
\end{align}
and again for all $i \in \dsb{K}\setminus\{i^*\}$:
\begin{align}
    \overline{\Delta}_{i,\bm{\mu'}}(T,T) \ge \overline{\Delta}_{1,\bm{\mu'}}(T,T)\ge\overline{\mu}'_{i^*}(T)- \overline{\mu}'_1(T) =   \frac{1}{2} -  \frac{3\sigma+1}{4T} \ge \frac{T+1}{8T} \ge \frac{1}{8},
\end{align}
having used $\sigma \le (T-1)/2$. Furthermore, notice that by definitions of the instances both $\bm{\mu}$ and $\bm{\mu'}$ will belong to $\mathcal{M}_{2+2\sigma}$.
Let $\mathfrak{A}$ be an algorithm, using Lemma \ref{lem:lbmf}, we can lower bound the regret in both the environments $\bm{\mu}$ and $\bm{\mu'}$:
\begin{align*}
    R_{\bm{\mu}}(\mathfrak{A},T)\ge \sum_{i\neq 1}\overline{\Delta}_{i,\bm{\mu}}(T,T)\mathbb{E}_{\bm{\mu}}[N_{i,T}]\ge\frac{5}{32} \sum_{i\neq 1}\mathbb{E}_{\bm{\mu}}[N_{i,T}],
\end{align*}
where $R_{\bm{\mu}}(\mathfrak{A},T)$ is the expected cumulative regret at the time horizon $T$ in the environment $\bm{\mu}$ and $\mathbb{E}_{\bm{\mu}}[N_{i,T}]$ is the expected value of the total number of pulls of the suboptimal arm $i$ at the time horizon $T$, for a fixed policy $\mathfrak{A}$. Similarly for the environment $\bm{\mu}'$:
\begin{equation*}
  R_{\bm{\mu'}}(\mathfrak{A},T)\ge \sum_{i\neq i^*}\overline{\Delta}_{i,\bm{\mu'}}(T,T)\mathbb{E}_{\bm{\mu'}}[N_{i,T}]\ge\frac{1}{8} \sum_{i\neq i^*}\mathbb{E}_{\bm{\mu'}}[N_{i,T}],
\end{equation*}
where $R_{\bm{\mu'}}(\mathfrak{A},T)$ is the expected cumulative regret at the time horizon $T$ in the environment $\bm{\mu'}$ and $\mathbb{E}_{\bm{\mu'}}[N_{i,T}]$ is the expected value of the total number of pulls of the suboptimal arm $i$ at the time horizon $T$, for a fixed policy $\mathfrak{A}$. For ease of notation we define $N_{i\neq 1,\tau}=\sum_{i\neq 1} N_{i,\tau}$ and $N_{i\neq i^*,\tau}=\sum_{i\neq i^*} N_{i,\tau}$, for all possible stopping times $\tau \in \dsb{T}$. Notice that the following holds:
\begin{align}
    \sup_{\bm{\mu''}\in \mathcal{M}_{2\sigma+2}} R_{\bm{\mu''}}(\mathfrak{A},T)& \ge \max\{R_{\bm{\mu}}(\mathfrak{A},T),R_{\bm{\mu'}}(\mathfrak{A},T)\}\\
    &\ge \frac{R_{\bm{\mu}}(\mathfrak{A},T)+R_{\bm{\mu'}}(\mathfrak{A},T)}{2}\\
    &\ge\frac{\min\{\overline{\Delta}_{i,\bm{\mu}}(T,T),\overline{\Delta}_{1,\bm{\mu'}}(T,T)\}}{2}(\mathbb{E}_{\bm{\mu}}[N_{i\neq 1,T}]+\mathbb{E}_{\bm{\mu'}}[N_{i\neq i^*,T}])\label{eq:riff}
\end{align}
Now we select as a stopping time for the adapted filtration $\tau_{i^*,\sigma}$, where $\tau_{i^*,\sigma}$ is defined as above, so we can write $\mathbb{E}_{\bm{\mu}}[N_{i\neq i^*,\tau_{i^*,\sigma}}]= \mathbb{E}_{\bm{\mu'}}[N_{i\neq i^*,\tau_{i^*,\sigma}}]$, since by definition $\tau_{i^*,\sigma}$ is the r.v. denoting the time at which the $i^*$-th arm gets played for the $\frac{\sigma}{2}$-th, i.e., up to which the two instances are indistinguishable. This yield to: 
\begin{align}
\mathbb{E}_{\bm{\mu}}[N_{i\neq1,\tau_{i^*,\sigma}}] + \mathbb{E}_{\bm{\mu'}}[N_{i\neq i^*,\tau_{i^*,\sigma}}]&=\mathbb{E}_{\bm{\mu}}[N_{i\neq1,\tau_{i^*,\sigma}}]+ \mathbb{E}_{\bm{\mu}}[N_{i\neq i^*,\tau_{i^*,\sigma}}]\\
&= \mathbb{E}_{\bm{\mu}}[N_{i\neq 1,\tau_{i^*,\sigma}}+N_{i\neq i^*,\tau_{i^*,\sigma}}]\\
&\ge \mathbb{E}_{\bm{\mu}}[\tau_{i^*,\sigma}],
\end{align}
where for the last inequality we used the fact that $N_{i\neq 1,\tau_{i^*,\sigma}}+N_{i\neq i^*,\tau_{i^*,\sigma}}=\sum_{t=1}^{\tau_{i^*,\sigma}}\mathds{1}\{I_t\neq 1\}+\mathds{1}\{I_t\neq i^*\}\ge \sum_{t=1}^{\tau_{i^*,\sigma}}1$. Now, in order to find the lower bound we shall find the bound for $\mathbb{E}_{\bm{\mu}}[\tau_{i^*,\sigma}]$. We recall that by definition $i^*\in \arg\max_{i\in \dsb{K}\setminus\{1\}}\mathbb{E}_{\bm{\mu}}[\tau_{i,\sigma}]$, so the following holds:
\begin{align}
    (K-1)\mathbb{E}_{\bm{\mu}}[\tau_{i^*,\sigma}]\ge\sum_{i=2}^K \mathbb{E}_{\bm{\mu}}[\tau_{i,\sigma}]=\mathbb{E}_{\bm{\mu}}\left[\sum_{i=2}^K \tau_{i,\sigma}\right]\label{eq:lbcomb}.
\end{align}
Let us sort the arms in ascending order based on the round in which they get to $\frac{\sigma}{2}$ pulls, i.e., according to the value of $\tau_{\cdot,\sigma}$. In particular, we use the notation:
\begin{align}
    \tau_{[1],\sigma} <  \tau_{[2],\sigma} < \dots < \tau_{[K],\sigma}.
\end{align}
Thus, we can rewrite:
\begin{align}
   \sum_{i=2}^K \tau_{i,\sigma}\ge \sum_{j=1}^{K-1} \tau_{[j],\sigma},
\end{align}
by a pigeonhole-like argument it is easy to infer that $t_{[j],\sigma}\ge j\frac{\sigma}{2}$, then:
\begin{align}
     \sum_{i=2}^K \tau_{i,\sigma}\ge \sum_{j=1}^{K-1} \tau_{[j],\sigma} \ge \frac{\sigma}{2}\sum_{j=1}^{K-1} j\ge\frac{\sigma(K-1)K}{4}.
\end{align}
Substituting in \eqref{eq:lbcomb} we obtain:
\begin{equation}
    \mathbb{E}_{\bm{\mu}}[\tau_{i^*,\sigma}]\ge\frac{K\sigma}{4}
\end{equation}
Putting all together, from Equation \eqref{eq:riff}:
\begin{align}
    \sup_{\bm{\mu''}\in \mathcal{M}_{2\sigma+2}} R_{\bm{\mu''}}(\mathfrak{A},T)& \ge\frac{1}{8}\mathbb{E}_{\bm{\mu}}[\tau_{i^*,\sigma}]\ge \frac{K}{32}\sigma,
\end{align}
the final result follows substituting $\sigma\leftarrow\frac{\sigma-2}{2}$.
\end{proof}
\clearpage
\subsection{Proofs of Section~\ref{sec:SWapproach}}
\swbeta*

\begin{proof}
For ease of notation, we set $\sigma'(T;\tau)=\sigma'(\tau), \overline{\mu}_{i^*(T)}(\sigma'(T;\tau);\tau)=\overline{\mu}_{i^*(T)}(\sigma'(\tau))$ and $\Delta_i'(T;\tau)=\Delta_i$. Using Lemma \ref{lem:RD}, we can bound the expected value of the number of pulls at the time horizon as:
\begin{align*}
    \mathbb{E}_{\tau}[N_{i,T}] & \leq\Gamma+1+\frac{\omega T}{\tau}+\underbrace{\mathbb{E}\left[\sum_{t=K\Gamma+1}^{T} \mathds{1}\left \{ p_{i,t,\tau}^i>\frac{1}{T}, I_t=i, N_{i,t,\tau}\ge \omega\right\}\right]}_{\text{(A)}} \\
    & +\underbrace{\mathbb{E}\left[\sum_{t=K\Gamma+1}^{T}\left(\frac{1}{p_{i^*(t),t,\tau}^i}-1\right)\mathds{1}\left\{I_t=i^*(t) \right\}\right]}_{\text{(B)}}.
\end{align*}
In what follows, we will consider, without loss of generality, the first arm to be the best, i.e., for each round $t \in \dsb{T}$ we have that $i^*(t)=i^*(T)=1$. Furthermore we will set $y_{i,t}=y_i=\overline{\mu}_1(\sigma'(\tau))-\frac{\Delta_i}{3}$, $x_i=\mu_i(T)+\frac{\Delta_i}{3}$, $\omega=\frac{\log{T}}{(y_i-x_i)^2}$.We define $\hat{\mu}_{i,t,\tau}=\frac{S_{i,t,\tau}}{N_{i,t,\tau}}$ We further decompose $\mathcal{A}$ in two contributions, formally:
\begin{align*}
    \text{(A)} & = \underbrace{\mathbb{E}\left[\sum_{t=K\Gamma+1}^{T} \mathds{1}\left \{ p_{i,t,\tau}^i>\frac{1}{T}, \hat{\mu}_{i,t,\tau}\leq x_i, I_t=i, N_{i,t,\tau}\ge \omega\right\}\right]}_{\text{(A1)}}\\ 
    & +\underbrace{\mathbb{E}\left[\sum_{t=K\Gamma+1}^{T} \mathds{1}\left \{ p_{i,t,\tau}^i>\frac{1}{T}, \hat{\mu}_{i,t,\tau}> x_i, I_t=i, N_{i,t,\tau}\ge \omega\right\}\right]}_{\text{(A2)}}.
\end{align*}

\paragraph{Term (A2)}
Focusing on (A2):
\begin{align}
   \text{(A2)} &\leq \mathbb{E}\left[\sum_{t=K\Gamma+1}^T \mathds{1}\left\{\hat{\mu}_{i,t,T}> x_i ,   N_{i,t,\tau}\ge \omega\right\}\right]\\
   &\leq \sum_{K\Gamma+1}^T\Prob(\hat{\mu}_{i,t,\tau}> x_i | N_{i,t,\tau}\ge \omega)\\
   &\leq \sum_{K\Gamma+1}^T\Prob(\hat{\mu}_{i,t,\tau}-\mathbb{E}[\hat{\mu}_{i,t,\tau}]> x_i-\mathbb{E}[\hat{\mu}_{i,t,\tau}] | N_{i,t,\tau}\ge \omega)\\
    &\leq \sum_{K\Gamma+1}^T\Prob(\hat{\mu}_{i,t,\tau}-\mathbb{E}[\hat{\mu}_{i,t,\tau}]> x_i-\mu_i(T) | N_{i,t,\tau}\ge \omega)\\
    &\leq \sum_{t=1}^T \exp{(-2\omega(x_i-\mu_i(T))^2)},
\end{align}
where the last inequality follows from the Chernoff-Hoeffding bound (Lemma \ref{lemma:chernoff}). By definition it holds that $x_i-\mu_i(T)=y_i-x_i$, substituting we obtain:
\begin{align}
     \sum_{t=1}^T \exp{(-2\omega(x_i-\mu_i(T))^2)}= \sum_{t=1}^T \exp{(-2\log(T))}<1.
\end{align}
\paragraph{Term (A1)} Evaluating the term (A1) we can rewrite:
\begin{equation*}
    \text{(A1)}\leq \mathbb{E}\left[\sum_{t=K\Gamma+1}^{T} \mathds{1}\left \{\underbrace{ p_{i,t,\tau}^i>\frac{1}{T}, N_{i,t,\tau}\ge \omega, \hat{\mu}_{i,t,\tau}\leq x_i}_{\text{(C)}}\right\}\right].
\end{equation*}
We wish to evaluate if condition (C) ever occurs.
To this end, let $(\mathcal{F}_{t-1})_{t \in \dsb{T}}$ be the canonical filtration. We have:
\begin{align}
&\Prob(\theta_{i,t,\tau}\hspace{-0.05 cm}>\hspace{-0.05 cm}y_{i}|N_{i,t,\tau}\ge \omega,\hat{\mu}_{i,t,\tau} \leq x_{i}, \mathcal{F}_{t-1})\hspace{-0.05cm}= \nonumber\\ &=\hspace{-0.05cm} \Prob \left( \text{Beta} \left( \hat{\mu}_{i,t,\tau} N_{i,t,\tau}\hspace{-0.1 cm} +\hspace{-0.1 cm} 1, (1 - \hat{\mu}_{i,t,\tau}) N_{i,t,\tau}\hspace{-0.1 cm} +\hspace{-0.1 cm} 1 \right) > y_{i} | \hat{\mu}_{i,t,\tau} \leq x_{i}, N_{i,t,\tau}\ge \omega \right) \\
    & \leq \Prob \left( \text{Beta} \left( x_{i} N_{i,t,\tau} + 1, (1 - x_{i}) N_{i,t,\tau} + 1 \right) > y_{i}| N_{i,t,\tau}\ge \omega\right) \label{lin:linx}\\
    & \leq F^{B}_{N_{i,t,\tau}+1,y_{i}}\big(x_{i}N_{i,t,\tau}| N_{i,t,\tau}\ge \omega\big)\label{line:line2a}\\ &\leq F^{B}_{N_{i,t,\tau},y_{i}}\big(x_{i}N_{i,t,\tau}| N_{i,t,\tau}\ge \omega\big)\label{lin:roch}\\
    &\leq\exp \left( - N_{i,t,\tau} d(x_{i}, y_{i})| N_{i,t,\tau}\ge \omega \right)\\
    &\leq\exp{(-2\omega(y_i-x_i)^2)},
\end{align}
where the last-but-one inequality follows from the generalized Chernoff-Hoeffding bounds (Lemma~\ref{lemma:chernoff}). Equation~\eqref{line:line2a} was derived and the Beta-Binomial identity (Fact \ref{lem:betabin}), \eqref{lin:linx} by exploiting that a sample from $\text{Beta} \left( x_{i} T_{i,t,\tau} + 1, (1 - x_{i}) T_{i,t,\tau} + 1 \right) $ is likely to be as large as a sample from $\text{Beta}( \hat{\mu}_{i,t} T_{i,t,\tau}(t) + 1, (1 - \hat{\mu}_{i,t,\tau})T_{i,t,\tau} + 1 )$ (reported formally in Lemma \ref{lem:nbord}) and finally \eqref{lin:roch} follows from Lemma \ref{lem:bborder}.
Therefore, for $\omega =\frac{\log(T)}{(y_i-x_i)^2}$, we have:
\begin{equation}
    \Prob(\theta_{i,t,\tau}\hspace{-0.05 cm}>\hspace{-0.05 cm}y_{i},N_{i,t,\tau}\ge \omega,\hat{\mu}_{i,t,\tau} \leq x_{i}, \mathcal{F}_{t-1})\leq\frac{1}{T},
\end{equation}
in contradiction with condition (C), so (A1)$=0$.
\paragraph{Term (B)}
We now tackle term (B):
\begin{align}
    \text{(B)}=\mathbb{E}\left[\sum_{t=K\Gamma+1}^{T}\left(\frac{1}{p_{1,t,\tau}^i}-1\right)\mathds{1}\left\{I_t=1 \right\}\right]
\end{align}
We rewrite the last equality as the sum of two contributions. We introduce the number of times we pull the best arm after the exploration phase ad $N_{1,t}^\Gamma=\sum_{t=K\Gamma+1}^T\mathds{1}\{I_t=1\}$. We distinguish between $N_{1,t}^\Gamma\ge\sigma'(\tau)-\Gamma$ and $N_{1,t}^\Gamma\leq\sigma'(\tau)-\Gamma$. This way, we obtain the following:
\begin{align}
    \text{(B)} \leq &\underbrace{\sum_{t=K\Gamma+1}^{T}\mathbb{E}\bigg[\frac{1-p_{1,t,\tau}^i}{p_{1,t,\tau}^i}\mathds{1}\{I_t=1,N_{1,t}^\Gamma\leq \sigma'(\tau)-\Gamma\}\bigg]}_{\text{(B1)}}+ \nonumber\\
&+\underbrace{\sum_{t=K\Gamma+1}^{T}\mathbb{E}\bigg[\frac{1-p_{1,t,\tau}^i}{p_{1,t,\tau}^i}\mathds{1}\{I_t=1, N_{1,t}^\Gamma\ge \sigma'(\tau)-\Gamma\}\bigg]}_{\text{(B2)}}.
\end{align}
We further decompose the term (B1) in other terms:
\begin{align}
    \text{(B1)} &\leq\underbrace{\sum_{t=K\Gamma+1}^{T}\mathbb{E}\left[\frac{1-p_{1,t,\tau}^i}{p_{1,t,\tau}^i}\mathds{1}\left\{\overbrace{I_t=1,N_{1,t}^\Gamma\leq \sigma'(\tau)-\Gamma,N_{1,t,\tau}\leq \frac{8\log(T)}{(\overline{\mu}_1(\sigma'(\tau))-y_i)^2}}^{\mathcal{C}_1}\right\}\right]}_{\text{(B1.1)}} +\nonumber\\&+\underbrace{\sum_{t=K\Gamma+1}^{T}\mathbb{E}\left[\frac{1-p_{1,t,\tau}^i}{p_{1,t,\tau}^i}\mathds{1}\left\{\overbrace{I_t=1,N_{1,t}^\Gamma\leq \sigma'(\tau)-\Gamma,N_{1,t,\tau}\ge \frac{8\log(T)}{(\overline{\mu}_1(\sigma'(\tau))-y_i)^2}}^{\mathcal{C}_2}\right\}\right]}_{\text{(B1.2)}}.
\end{align}
As $\mathbb{E}\left[XY\right] = \mathbb{E}\left[X\mathbb{E}\left[Y\mid X\right]\right]$, we can bound the term (B1.1) as follows:
\begin{align}
    \text{(B1.1)}&=\sum_{t=K\Gamma+1}^{T}\mathbb{E}\left[\mathds{1}\{\mathcal{C}_1\}\mathbb{E}\left[\frac{1-p_{1,t,\tau}^i}{p_{1,t,\tau}^i}\mid \mathcal{C}_1\right]\right]\\
    &=\mathbb{E}\left[\sum_{t=K\Gamma+1}^{T}\mathds{1}\{\mathcal{C}_1\}\mathbb{E}\left[\frac{1-p_{1,t,\tau}^i}{p_{1,t,\tau}^i}\mid \mathcal{C}_1\right]\right]\\
    &=\mathbb{E}\left[\sum_{t=K\Gamma+1}^{T}\mathds{1}\{\mathcal{C}_1\}\mathbb{E}\left[\frac{1-p_{1,t,\tau}^i}{p_{1,t,\tau}^i}\mid \mathcal{C}_1\right]\right]\label{eq:c1}\\
    &\leq \mathbb{E}\left[\sum_{t=K\Gamma+1}^{T}\mathds{1}\{\mathcal{C}_1\}\left(\frac{\delta_{TV}(P_{t\mid \mathcal{C}_1},Q_{t\mid \mathcal{C}_1})}{(1-\overline{\mu}_1(\sigma'(\tau)))^{\tau+1}}+\underbrace{\mathbb{E}_{\overline{\mu}_1(\sigma'(\tau))}\left[\frac{1-p_{1,t,\tau}^i}{p_{1,t,\tau}^i}\mid \mathcal{C}_1 \right]}_{\text{(D1.1)}}\right)\right],
\end{align}
where in line \eqref{eq:c1} we exploited the fact that the only summands that will contribute to the some are those for which condition $\mathcal{C}_1$ occurs, $P_{t\mid \mathcal{C}_1}$ is the distribution of the sample mean of the samples used at round $t$ to compute the sample mean of the expected reward of arm $1$, $Q_{t\mid \mathcal{C}_1}$ is a binomial distribution with $n$ equal to the number of samples used  used at round $t$ to compute the sample mean of the expected reward of arm $1$ and $p$ equal to   $\overline{\mu}_1(\sigma'(\tau))$.

Now consider an arbitrary instantiation $N_{1,t,\tau}'$ of $N_{1,t,\tau}$ (i.e., an arbitrary number of pulls of the optimal arm within the time window $\tau$) in which  $\mathcal{C}_1$ holds true, we can rewrite (D1.1) as:
\begin{align}
    \text{(D1.1)}=\mathbb{E}\left[\frac{1-p_{1,t,\tau}^i}{p_{1,t,\tau}^i}\mid \mathcal{C}_1\right]=\mathbb{E}_{N_{1,t,\tau}'}\left[\underbrace{\mathbb{E}\left[\frac{1-p_{1,t,\tau}^i}{p_{1,t,\tau}^i}\mid\mathcal{C}_1,N_{1,t,\tau}=N_{1,t,\tau}'\right]}_{(\text{E1.1})}\right].
\end{align}
We can bound (E1.1) using Lemma 4 by~\citet{agrawal2012analysis}:
\begin{align}
    &\text{(E.1)} = \sum_{s=0}^{N_{1,t,\tau}'}\frac{f_{N_{1,t,\tau}',\overline{\mu}_1(\sigma'(\tau))}(s)}{F_{N_{1,t,\tau}'+1,y_i}(s)}-1\nonumber\\
    &\leq \begin{cases}
    \frac{3}{\Delta_i'} & \textit{if  }  N_{1,t,\tau}'< \frac{8}{\Delta_i'}\\  
    \\
    {O}\left(e^{-\frac{\Delta_i'^2 N_{1,t,\tau}'}{2}}+\frac{e^{-D_iN_{1,t,\tau}'}}{N_{1,t,\tau}'\Delta_i'^2}+\frac{1}{e^{\Delta_i'^2\frac{N_{1,t,\tau}'}{4}}-1}\right) &\textit{if  } \frac{8}{\Delta'} \leq N_{1,t,\tau}' \leq \frac{8\log(T)}{\Delta_i'^2} \\ 
    \end{cases},
\end{align}
where $\Delta_i'=\overline{\mu}_1(\sigma'(\tau))-y_i$ and $D_i=y_i\log{\frac{y_i}{\overline{\mu}_{1}(\sigma'(\tau))}}+(1-y_i)\log{\frac{1-y_i}{1-\overline{\mu}_{1}(\sigma'(\tau))}}$.
We notice that for $N_{i,t,\tau}'<\frac{8}{\Delta_i'}$ the bound is independent from the number of pulls within the window while when $\frac{8}{\Delta_i}\leq N_{i,t,\tau}'\leq \frac{8\log(T)}{\Delta_i'^2}$ the bound on the expected value is decreasing for $N_{i,t,\tau}'$ increasing, then taking the worst case scenario the following will hold:
\begin{align}
    \text{(D1.1)} \leq \begin{cases}
    \frac{3}{\Delta_i'} & \textit{if  }  N_{1,t,\tau}'< \frac{8}{\Delta_i'}\\  
    \\
    {O}\left(e^{-\frac{\Delta_i'^2 \frac{8}{\Delta_i'}}{2}}+\frac{1}{\frac{8}{\Delta_i'}\Delta_i'^2}+\frac{1}{e^{\Delta_i'^2\frac{\frac{8}{\Delta_i'}}{4}}-1}\right) &\textit{if  } \frac{8}{\Delta'} \leq N_{1,t,\tau}' \leq \frac{8\log(T)}{\Delta_i'^2} \\ 
    \end{cases},
\end{align}
by exploiting the fact that for $x\ge 0$ will hold that $e^{-x}\leq \frac{1}{x}$ and that for all $x$ it holds that $e^x \ge 1+x$ we obtain a term which is independent from the number of pulls within a window $\tau$, so that: \begin{align}
      \text{(D1.1)} \leq O \left( \frac{1}{\Delta_i'}\right) =  {O}\left(\frac{1}{\overline{\mu}_1(\sigma'(\tau))-y_i}\right).
 \end{align}
Let us now consider the number of times the event $\mathcal{C}_1$ can occur in $T$ rounds. Notice that by definition of the event both inequalities will hold true:
\begin{align}
    \mathds{1}\{\mathcal{C}_1\}\leq\begin{cases}
       \mathds{1}\left\{I_t=1, N_{i,t,\tau}\leq \frac{8\log{(T)}}{(\overline{\mu}_1(\sigma'(\tau))-y_i)^2}\right\}\\
       \\
       \mathds{1}\left\{I_t=1, N_{i,t}^\Gamma\leq \sigma'(\tau)-\Gamma\right\}
    \end{cases},
\end{align}
then exploiting Lemma~\ref{lemma:window} we have:
\begin{align}
    \sum_{t=1}^T \mathds{1}\{\mathcal{C}1\}\leq  \sum_{t=1}^T\mathds{1}\left\{I_t=1, N_{i,t,\tau}\leq \frac{8\log{(T)}}{(\overline{\mu}_1(\sigma'(\tau))-y_i)^2}\right\}\leq\frac{8T\log(T)}{\tau(\overline{\mu}_1(\sigma'(\tau))-y_i)^2}.
\end{align}
Furthermore, with analogous considerations:
\begin{align}
    \sum_{t=1}^T \mathds{1}\{\mathcal{C}1\}\leq  \sum_{t=1}^T\mathds{1}\left\{I_t=1, N_{i,t}^\Gamma\leq \sigma'(\tau)-\Gamma \right\}\leq \sigma'(\tau)-\Gamma.
\end{align}
Finally, bounding the TV divergence term with $1$, we obtain:
\begin{align}
    \text{(B1.1)} \leq {O}\left(\frac{\sigma'(\tau)-\Gamma}{ (1-\overline{\mu}_1(\sigma'(\tau)))^{\tau+1}}+\frac{T\log(T)}{\tau (\overline{\mu}_1(\sigma'(\tau))-y_i)^3}\right),
\end{align}

Let us upper bound (B1.2):
\begin{align}
     \text{(B1.2)} &=\sum_{t=1}^{T}\mathbb{E}\left[\mathds{1}\{\mathcal{C}_2\}\mathbb{E}\left[\frac{1-p_{1,t,\tau}^i}{p_{1,t,\tau}^i}\mid\mathcal{C}_2\right]\right]   \\
     &=\mathbb{E}\left[\sum_{t=1}^{T}\mathds{1}\{\mathcal{C}_2\}\mathbb{E}\left[\frac{1-p_{1,t,\tau}^i}{p_{1,t,\tau}^i}\mid\mathcal{C}_2\right]\right] \\ 
     &\leq \mathbb{E}\left[\sum_{t=1}^{T}\mathds{1}\{\mathcal{C}_2\}\left(\frac{\delta_{TV}(P_{t\mid \mathcal{C}2},Q_{t\mid \mathcal{C}2})}{(1-\overline{\mu}_1(\sigma'(\tau)))^{\tau+1}}+\underbrace{\mathbb{E}_{\overline{\mu}_1(\sigma'(\tau))}\left[\frac{1-p_{1,t,\tau}^i}{p_{1,t,\tau}^i}\mid \mathcal{C}_2\right]}_{\text{(D1.2)}}\right)\right].
\end{align}
Let us consider an arbitrary instantiation $N_{1,t,\tau}'$ of $N_{1,t,\tau}$ in which $\mathcal{C}2$ holds true, i.e., an arbitrary number of pulls of the optimal arm within the time window $\tau$. We have:
\begin{align}
    \text{(D1.2)}=\mathbb{E}\left[\frac{1-p_{1,t,\tau}^i}{p_{1,t,\tau}^i}\mid  \mathcal{C}_2 \right]=\mathbb{E}_{N_{1,t,\tau}'}\left[\underbrace{\mathbb{E}\left[\frac{1-p_{1,t,\tau}^i}{p_{1,t,\tau}^i}\mid \mathcal{C}_2 ,N_{1,t,\tau}=N_{1,t,\tau}'\right]}_{\text{(E1.2)}}\right],
\end{align}
where we bound the term (E1.2) using Lemma~4 ~\citet{agrawal2012analysis}:
\begin{align}
   &\text{(E1.2)} = \sum_{s=0}^{N_{1,t,\tau}'}\frac{f_{N_{1,t,\tau}',\overline{\mu}_1(\sigma'(\tau))}(s)}{F_{N_{1,t,\tau}'+1,y_i}(s)}-1\nonumber\\ &\leq {O}\left(e^{-\frac{\Delta_i'^2 N_{1,t,\tau}'}{2}}+\frac{e^{-D_iN_{1,t,\tau}'}}{N_{1,t,\tau}'\Delta_i'^2}+\frac{1}{e^{\Delta_i'^2\frac{N_{1,t,\tau}'}{4}}-1}\right) \qquad \text{ for } N_{1,t,\tau}'\ge \frac{8\log(T)}{\Delta_i'^2},
\end{align}
with $\Delta_i'$ define ad before,
We see that the worst case scenario, when $\mathcal{C}_2$ holds true, is when $ N_{i,t,\tau}'= \frac{8\log(T)}{\Delta_i'^2}$. In such a case, we can bound the expected value for $\frac{1-p_{1,t,\tau}^i}{p_{1,t,\tau}^i}$ for every possible realization of $\mathcal{C}_2$ independently from $N_{1,t,\tau}'$ as:
\begin{align}
    \text{(D1.2)} \leq O \left( e^{-4\log(T)} + e^{-D_i (\Delta_i')^{-2} \log(T)} + \frac{1}{e^{2\log(T)} - 1} \right) =  {O}\left( \frac{1}{T}\right),
\end{align}
so that:
\begin{align}
  \text{(B1.2)} \leq {O}\left(\frac{\sigma'(\tau)-\Gamma}{ (1-\overline{\mu}_1(\sigma'(\tau)))^{\tau+1}}\right),
\end{align}
where the latter inequality is a consequence of the fact that:
\begin{align}
    \sum_{t=K\Gamma+1}^{T}\mathds{1}\{\mathcal{C}_2\})\leq \sigma'(\tau)-\Gamma.
\end{align}
Let us bound term (B2). We decompose this term in two contributions:
\begin{align}
    \text{(B2)} =\sum_{t=K\Gamma+1}^{T}\mathbb{E}\bigg[\frac{1-p_{1,t,\tau}^i}{p_{1,t,\tau}^i}\mathds{1}\{I_t=1,N_{1,t}^\Gamma\ge \sigma'(\tau)-\Gamma\}\bigg],
   \end{align}
so that, similarly to what we have done earlier, we have:
\begin{align}
    \text{(B2)} = \underbrace{\sum_{t=K\Gamma+1}^{T}\mathbb{E}\left[\frac{1-p_{1,t,\tau}^i}{p_{1,t,\tau}^i}\mathds{1}\left\{\overbrace{I_t=1,N_{1,t}^\Gamma\ge \sigma'(\tau)-\Gamma,N_{1,t,\tau}\leq \frac{8\log(T)}{(\overline{\mu}_1(\sigma'(\tau))-y_i)^2}}^{\mathcal{C}_1'}\right\}\right]}_{\text{(B2.1)}}+ \nonumber\\
    +\underbrace{\sum_{t=K\Gamma+1}^{T}\mathbb{E}\left[\frac{1-p_{1,t,\tau}^i}{p_{1,t,\tau}^i}\mathds{1}\left\{\overbrace{I_t=1,N_{1,t}^\Gamma\ge \sigma'(\tau)-\Gamma,N_{1,t,\tau}\ge \frac{8\log(T)}{(\overline{\mu}_1(\sigma'(\tau))-y_i)^2}}^{\mathcal{C}_2'}\right\}\right]}_{\text{(B2.2)}}.
\end{align}
Let us deal with (B2.1) first. We have:
\begin{align}
   \text{(B2.1)} = \mathbb{E}\left[\sum_{t=K\Gamma+1}^{T}\mathds{1}\{\mathcal{C}_1'\}\underbrace{\mathbb{E}\left[\frac{1-p_{1,t,\tau}^i}{p_{1,t,\tau}^i}\mid \mathcal{C}_1'\right]}_{\text{(D2.1)}}\right].
\end{align}
Let us analyze (D2.1):
\begin{align}
    \text{(D2.1)}\leq \mathbb{E}_{N_{1,t,\tau}'}\left[\mathbb{E}\underbrace{\left[\frac{1-p_{1,t,\tau}^i}{p_{1,t,\tau}^i}\mid \mathcal{C}_1', N_{1,t,\tau}=N_{1,t,\tau}'\right]}_{\text{(E2.1)}}\right]
\end{align}
Lemma~\ref{lemma:techlemma} applied to (E2.1), states that a bound on  a instance when the sum of successes is distributed as a binomial $\text{Bin}(N_{1,t,\tau}',\mu_1(\sigma'(\tau)))$,  holds also for (E2.1).
It follows, applying Lemma 4 by~\cite{agrawal2012analysis} to (E2.1), that:
$$\text{(E2.1)} \leq {O}\left(\frac{1}{\overline{\mu}_1(\sigma'(\tau))-y_i}\right).$$
Therefore we have by Lemma~\ref{lemma:window}:
\begin{align}
    \sum_{t=1}^{T}\mathds{1}\{\mathcal{C}_1'\}\leq {O}\left(\frac{T\log(T)}{\tau(\overline{\mu}_1(\sigma'(\tau))-y_i)^2}\right).
\end{align}
Finally, we obtain that:
\begin{align}
    \text{(B2.1)} \leq {O}\left(\frac{T\log(T)}{\tau(\overline{\mu}_1(\sigma'(\tau))-y_i)^3}\right).
\end{align}
Let us analyze (B2.2):
\begin{align}
\text{(B2.2)}=\mathbb{E}\left[\sum_{t=K\Gamma+1}^{T}\mathds{1}\{\mathcal{C}_2'\}\underbrace{\mathbb{E}\left[\frac{1-p_{1,t,\tau}^i}{p_{1,t,\tau}^i}\mid\mathcal{C}_2'\right]}_{\text{(D2.2)}}\right].
\end{align}
Let us analyze (D2.2), considering an instantiation $N_{1,t,\tau}=N_{1,t,\tau}'$ for which condition $\mathcal{C}_2'$ holds we have:
\begin{align}
    \mathbb{E}\left[\frac{1-p_{1,t,\tau}^i}{p_{1,t,\tau}^i}\mid \mathcal{C}_2'\right]=\mathbb{E}_{N_{1,t,\tau}'}\left[\underbrace{\mathbb{E}\left[\frac{1-p_{1,t,\tau}^i}{p_{1,t,\tau}^i}\mid \mathcal{C}_2',N_{1,t,\tau}=N_{1,t,\tau}'\right]}_{\text{(E2.2)}}\right]
\end{align}
Similarly to what has been done for term (B2.1), applying Lemma~\ref{lemma:techlemma} to (E2.2), we have that that term can be bounded by the same bound we would have for a process governed by a Binomial distribution $\text{Bin}(N_{i,t,\tau}',\overline{\mu}_1(\sigma'(\tau)))$. Thus, applying Lemma 4 by~\citet{agrawal2012analysis} to such a instantiation :
$$\text{(E2.2)} \leq {O}\left(\frac{1}{T}\right),$$
and, finally:
\begin{align}
    \text{(B2.2)}\leq {O}(1).
\end{align}
Summing all the term concludes the proof.
\end{proof}

\gtssw*

\begin{proof}
For ease of notation we set $\sigma'(T;\tau)=\sigma'(\tau), \overline{\mu}_{i^*(T)}(\sigma(T;\tau);\tau)=\overline{\mu}_{i^*(T)}(\sigma'(\tau))$ and $\Delta_i'(T;\tau)=\Delta_i$. Using Lemma \ref{lem:RDg}, we can bound the expected value of the number pulls of the suboptimal arm at the time horizon as:
\begin{align*}
        \mathbb{E}_{\tau}[N_{i,T}]\leq &\Gamma+\frac{1}{\epsilon_i}+\frac{T}{\tau}+\frac{\omega T}{\tau}\hspace{-0.12 cm}+\hspace{-0.12 cm}\underbrace{\mathbb{E}\left[\sum_{t=K\Gamma+1}^{T} \hspace{-0.3 cm}\mathds{1}\left \{ p_{i,t,\tau}^i>\frac{1}{T\epsilon_i}, I_t=i, N_{i,t,\tau}\ge \omega\right\}\right]}_{\text{(A)}}\hspace{-0.1 cm} \\
        & +\hspace{-0.1 cm}\underbrace{\mathbb{E}\left[\sum_{t=K\Gamma+1}^{T}\left(\frac{1}{p_{i^*(t),t,\tau}^i}-1\right)\mathds{1}\left\{I_t=i^*(t) \right\}\right]}_{\text{(B)}}.
    \end{align*}
    In what follows, we will consider, without loss of generality, the first arm to be the best, i.e., for each round $t \in \dsb{T}$ we have that $i^*(t)=i^*(T)=1$. Furthermore we will set $y_{i,t}=y_i=\overline{\mu}_1(\sigma'(\tau))-\frac{\Delta_i}{3}$, $x_i=\mu_i(T)+\frac{\Delta_i}{3}$, $\omega=\frac{32\log{(T\Delta_i^2+e^6)}}{\gamma(y_i-x_i)^2}$. We define $\overline{\hat{\mu}}_{i,t,\tau}=\frac{S_{i,t,\tau}}{N_{i,t,\tau}}$ We further decompose (A) in two contributions, formally:
    \begin{align}
\text{(A)}& =\underbrace{\mathbb{E}\left[\sum_{t=K\Gamma+1}^{T} \hspace{-0.3 cm}\mathds{1}\left \{ p_{i,t,\tau}^i>\frac{1}{T\epsilon_i},\overline{\hat{\mu}}_{i,t,\tau}\leq x_i, I_t=i, N_{i,t,\tau}\ge \omega\right\}\right]}_{\text{(A1)}}\nonumber\\
& +\underbrace{\mathbb{E}\left[\sum_{t=K\Gamma+1}^{T} \hspace{-0.3 cm}\mathds{1}\left \{ p_{i,t,\tau}^i>\frac{1}{T\epsilon_i},\overline{\hat{\mu}}_{i,t,\tau}>x_i, I_t=i, N_{i,t,\tau}\ge \omega\right\}\right]}_{\text{(A2)}}.
\end{align}
Let us first tackle term (A2).
\paragraph{Term (A2)}
Focusing on (A2):
\begin{align}
   \text{(A2)} &\leq \mathbb{E}\left[\sum_{t=K\Gamma+1}^T \mathds{1}\left\{\hat{\mu}_{i,t,T}> x_i ,   N_{i,t,\tau}\ge \omega\right\}\right]\\
   &\leq \sum_{K\Gamma+1}^T\Prob(\hat{\mu}_{i,t,\tau}> x_i | N_{i,t,\tau}\ge \omega)\\
   &\leq \sum_{K\Gamma+1}^T\Prob(\hat{\mu}_{i,t,\tau}-\mathbb{E}[\hat{\mu}_{i,t,\tau}]> x_i-\mathbb{E}[\hat{\mu}_{i,t,\tau}] | N_{i,t,\tau}\ge \omega)\\
    &\leq \sum_{K\Gamma+1}^T\Prob(\hat{\mu}_{i,t,\tau}-\mathbb{E}[\hat{\mu}_{i,t,\tau}]> x_i-\mu_i(T) | N_{i,t,\tau}\ge \omega)\\
    &\leq \sum_{t=1}^T \exp{(-2\omega(x_i-\mu_i(T))^2)},
\end{align}
where the last inequality follows from the Chernoff-Hoeffding bound (Lemma \ref{lemma:chernoff}). Substituting $\omega$ and noticing that $x_i-\mu_i(T)=y_i-x_i$, we obtain:
\begin{align}
\sum_{t=K\Gamma+1}^T \exp{(-2\omega(x_i-\mu_i(T))^2)}\leq\frac{1}{\Delta_i^2}.
\end{align}
\paragraph{Term (A1)}
We focus now on (A1):
\begin{align}
    \text{(A1)} \leq \mathbb{E}\left[\sum_{t=K\Gamma+1}^{T} \hspace{-0.3 cm}\mathds{1}\left \{ \underbrace{p_{i,t,\tau}^i>\frac{1}{T\epsilon_i}, N_{i,t,\tau}\ge \omega, ,\overline{\hat{\mu}}_{i,t,\tau}\leq x_i}_{\text{(C)}}\right\}\right].
\end{align}
We wish to evaluate if ever condition (C) occurs.
In this setting, $\theta_{i,t,\tau}$ is a Gaussian random variable distributed as $\mathcal{N}\left(\overline{\hat{\mu}}_{i,t,\tau}, \frac{1}{\gamma N_{i,t,\tau}}\right)$. We recall that an $\mathcal{N}\left(m, \sigma^2\right)$ distributed r.v. ~(i.e., a Gaussian random variable with mean $m$ and variance $\sigma^2$ ) is stochastically dominated by $\mathcal{N}\left(m^{\prime}, \sigma^2\right)$ distributed r.v.~if $m^{\prime} \geq m$ (Lemma \ref{lem:nbord}). Therefore, given $\overline{\hat{\mu}}_{i,t,\tau} \leq x_i$, the distribution of $\theta_{i,t,\tau}$ is stochastically dominated by $\mathcal{N}\left(x_i, \frac{1}{\gamma N_{i,t,\tau}}\right)$. Formally:
\begin{equation} \label{eq:domi}
\Prob\left(\theta_{i,t,\tau}>y_i \mid N_{i,t,\tau}>\omega, \overline{\hat{\mu}}_{i,t,\tau} \leq x_i, \mathcal{F}_{t-1}\right) \leq \Prob\left(\left.\mathcal{N}\left(x_i, \frac{1}{\gamma N_{i,t,\tau}}\right)>y_i \right\rvert\, \mathcal{F}_{t-1}, N_{i,t,\tau}>\omega\right) .
\end{equation}

Using Lemma \ref{lemma:Abramowitz2} we have:
\begin{align} \Prob\left(\mathcal{N}\left(x_i, \frac{1}{\gamma N_{i,t,\tau}}\right)>y_i\right) & \leq \frac{1}{2} e^{-\frac{\left(\gamma N_{i,t,\tau}\right)\left(y_i-x_i\right)^2}{2}} \\ & \leq \frac{1}{2} e^{-\frac{\left(\gamma \omega\right)\left(y_i-x_i\right)^2}{2}}\end{align}
which is smaller than $\frac{1}{T \Delta_i^2}$ because $\omega\geq \frac{2 \log \left(T \Delta_i^2\right)}{\gamma \left(y_i-x_i\right)^2}$. Substituting into Equation~\eqref{eq:domi}, we get:
\begin{equation}
   \Prob\left(\theta_{i,t,\tau}>y_i \mid N_{i,t,\tau}\ge\omega, \overline{\hat{\mu}}_{i,t,\tau} \leq x_i, \mathcal{F}_{t-1}\right)  \leq \frac{1}{T\Delta_i^2},
\end{equation}
in contradiction with the condition (C).
\paragraph{Term (B)}
We decompose term (B) into two contributions:
\begin{align}
   \text{(B)} \leq \underbrace{\sum_{t=K\Gamma+1}^T\mathbb{E}\bigg[\frac{1-p_{1,t,\tau}^i}{p_{1,t,\tau}^i}\mathds{1}\{I_t=1, N_{1,t}^\Gamma\leq \sigma'(\tau)-\Gamma\}\bigg]}_{\text{(B1)}}+\nonumber\\+\underbrace{\sum_{t=k\Gamma+1}^T\mathbb{E}\bigg[\frac{1-p_{1,t,\tau}^i}{p_{1,t,\tau}^i}\mathds{1}\{I_t=1,N_{1,t}^\Gamma\geq \sigma'(\tau)-\Gamma\}\bigg]}_{\text{(B2)}},
\end{align}
with $N_{1,t}^\Gamma=\sum_{t=K\Gamma+1}^T\mathds{1}\{I_t=1\}$.
Analyzing term (B1), we further decompose it in two contributions:
\begin{align}
    \text{(B1)} \leq \underbrace{\sum_{t=K\Gamma+1}^T\mathbb{E}\bigg[\frac{1-p_{1,t,\tau}^i}{p_{1,t,\tau}^i}\mathds{1}\left\{\overbrace{I_t=1,N_{1,t,\tau}\leq\omega, N_{1,t}^\Gamma\geq \sigma'(\tau)-\Gamma}^{\mathcal{C}_1}\right\}\bigg]}_{\text{(B1.1)}}+\nonumber\\+\underbrace{\sum_{t=K\Gamma+1}^T\mathbb{E}\bigg[\frac{1-p_{1,t,\tau}^i}{p_{1,t,\tau}^i}\mathds{1}\left\{\overbrace{I_t=1,N_{1,t,\tau}\ge \omega, N_{1,t}^\Gamma\geq \sigma'(\tau)-\Gamma}^{\mathcal{C}_2}\right\}\bigg]}_{\text{(B1.2)}} \label{eq:c1def}
\end{align}
Let us tackle the term (B1.1) by exploiting the fact that $\mathbb{E}[XY]=\mathbb{E}[X\mathbb{E}[Y\mid X]]$. This way, we can rewrite it as:
\begin{align}\label{eq:rippop}
    \text{(B1.1)}=\mathbb{E}\left[\sum_{t=K\Gamma+1}^T\mathds{1}\{\mathcal{C}_1\}\underbrace{\mathbb{E}\left[\frac{1-p_{1,t,\tau}^i}{p_{1,t,\tau}^i}\mid \mathcal{C}_1\right]}_{\text{(D1.1)}}\right].
\end{align}

In the following, we show that whenever condition $\mathcal{C}_1$ holds, (D1.1) is bounded by a constant. Let $\Theta_j$ denote a $\mathcal{N}\left(\overline{\hat{\mu}}_{1,j}, \frac{1}{\gamma j}\right)$ distributed Gaussian random variable, where $\overline{\hat{\mu}}_{1,j}$ is the sample mean of the optimal arm's rewards played $j$ times within a time window $\tau$. Let $G_j$ be a geometric random variable denoting the number of consecutive independent trials up to $j$ included where a sample of $\Theta_j$ is greater than $y_i$. We will show that for any realization of the number of pulls within a time window $\tau$ such that condition $\mathcal{C}_1$ holds, the expected value of $G_j$ is bounded by a constant for all $j$.

Consider an arbitrary realization of $N_{1,t,\tau} = j$ that satisfies condition $\mathcal{C}_1$. 
\begin{align}
    \mathbb{E}\left[\frac{1}{p_{1, t,\tau}^i}\mid \mathcal{C}_1\right]= \mathbb{E}_{j}\left[\mathbb{E}\left[\frac{1}{p_{1, t,\tau}^i}\mid \mathcal{C}_1, N_{1,t,\tau}=j\right]\right]=\mathbb{E}_{j_{\mid \mathcal{C}_1}}\left[\mathbb{E}\left[\mathbb{E}\left[G_j \mid \mathcal{F}_{\tau_j}\right]\right]\right]=\mathbb{E}_{j_{\mid \mathcal{C}_1}}\left[\mathbb{E}\left[G_j\right]\right]. \label{eq:expgj}
\end{align}
Notice that the term $\mathbb{E}\left[G_j\right]$ in Equation~\eqref{eq:expgj} is the same as the one we had in Equation~\eqref{eq:rif} to derive bounds for the $\gamma$-\texttt{ET-GTS} algortihtm. Relying on the same mathematical steps we bound it as follows:
$$\mathbb{E}\left[G_j\right] \leq e^{12}+5.$$
This shows a constant bound independent from $j$ of $\mathbb{E}\left[\frac{1}{p_{1, t,\tau}}-1\right]$ for any $j$ such that condition $\mathcal{C}_1$ holds. Then, using Lemma~\ref{lemma:window}, (B1.2) can be rewritten as:
\begin{align}
    \text{(B1.1)}&\leq (e^{12}+5)\mathbb{E}\left[\sum_{t=1}^T\mathds{1}\{\mathcal{C}1\}\right]\\
    &\leq (e^{12}+5)\frac{288T\log(T\Delta_i^2+e^6)}{\gamma\tau\Delta_i^2}.
\end{align}
Let us tackle {(B1.2)} by exploiting the fact that $\mathbb{E}[XY]=\mathbb{E}[X\mathbb{E}[Y\mid X]]$:
\begin{align}\label{eq:ripopp2}
     \text{(B1.2)}=\mathbb{E}\left[\sum_{t=K\Gamma+1}^T\mathds{1}\{\mathcal{C}2\}\underbrace{\mathbb{E}\left[\frac{1-p_{1,t,\tau}^i}{p_{1,t,\tau}^i}\mid \mathcal{C}_2\right]}_{\text{(D1.2)}}\right].
\end{align}
We derive a  bound for (D1.2) for large $j$ as imposed by condition $\mathcal{C}_2$. Consider then an arbitrary case in which $N_{i,t,\tau} = j \geq\omega$ (as dictated by $\mathcal{C}_2$), we have:
\begin{align}
    \mathbb{E}\left[\frac{1}{p_{1, t,\tau}^i}\mid \mathcal{C}_2\right]= \mathbb{E}_{j}\left[\mathbb{E}\left[\frac{1}{p_{1, t,\tau}^i}\mid \mathcal{C}_2, T_{1,t,\tau}=j\right]\right]=\mathbb{E}_{j_{\mid \mathcal{C}_2}}\left[\mathbb{E}\left[\mathbb{E}\left[G_j \mid \mathbb{F}_{\tau_j}\right]\right]\right]=\mathbb{E}_{j_{\mid \mathcal{C}_2}}\left[\mathbb{E}\left[G_j\right]\right].
\end{align}

Notice that the term $\mathbb{E}\left[G_j\right]$ in the last equation is the same we bounded in Theorem~\ref{thm:gts} for the regret of $\gamma$-\texttt{ET-GTS}. Therefore, using the same proof line it is bounded by $\mathbb{E}\left[G_j\right] \leq \frac{1}{T\Delta_i^2}$.

For term (B2), we further decompose in two contributions:
\begin{equation}
    \text{(B2)} \leq O \left( \frac{\sigma'(T;\tau)-\Gamma}{\mathrm{erfc}(\sqrt{{\gamma\tau}/{2}}(\overline{\mu}_1(\sigma'(T;\tau),\tau))} + (e^{12}+5)\frac{288T\log(T\Delta_i^2+e^6)}{\gamma\tau\Delta_i^2} + \frac{1}{\Delta_i^2} \right),
\end{equation}
by loosely bounding the total variation with $1$ and noticing that both this condition holds true:
\begin{align}
\text{(B2)} \leq \underbrace{\sum_{t=K\Gamma+1}^T\mathbb{E}\left[\frac{1-p_{1,t,\tau}^i}{p_{1,t,\tau}^i}\mathds{1}\left(\overbrace{I_t=1, N_{1,t}^\Gamma\leq \sigma'(\tau)-\Gamma, N_{1,t,\tau}\leq \omega}^{\mathcal{C}_1'}\right)\right]}_{\text{(B2.1)}}+\nonumber\\+\underbrace{\sum_{t=K\Gamma+1}^T\mathbb{E}\left[\frac{1-p_{1,t,\tau}^i}{p_{1,t,\tau}^i}\mathds{1}\left(\overbrace{I_t=1, N_{1,t}^\Gamma\leq \sigma'(\tau)-\Gamma, N_{1,t,\tau}\geq \omega}^{\mathcal{C}_2'}\right)\right]}_{\text{(B2.2)}}      .
\end{align}
Tackling the two terms separately we have:
\begin{align}
    \text{(B2.1)}&\leq \mathbb{E}\left[\sum_{t=K\Gamma+1}^T \mathds{1}\left\{\mathcal{C}_1'\right\}\mathbb{E}\left[\frac{1-p_{1,t,\tau}^i}{p_{1,t,\tau}^i}\bigg|\mathcal{C}_1'\right]\right]\\
    &\leq \mathbb{E}\left[\sum_{t=K\Gamma+1}^{T}\mathds{1}\{\mathcal{C}_1'\}\left(\frac{2\delta_{TV}(P_{t\mid \mathcal{C}_1'},Q_{t\mid \mathcal{C}_1'})}{\mathrm{erfc}(\sqrt{{\gamma \tau}/{2}}\overline{\mu}_1(\sigma'(\tau)))}+\underbrace{\mathbb{E}_{\overline{\mu}_1(\sigma'(\tau))}\left[\frac{1-p_{1,t,\tau}^i}{p_{1,t,\tau}^i}\mid \mathcal{C}1'\right]}_{\text{(D2.1)}}\right)\right].
\end{align}
Notice that in the proof of \texttt{$\gamma$-ET-GTS} we have already proven for (D2.1) the following:
\begin{align}
    \text{(D2.1)} \leq (e^{12}+5).
\end{align}
Furthermore, we note that both the conditions hold true by Lemma \ref{lemma:window}:
\begin{align}
\sum_{t=K\Gamma+1}^T\mathds{1}\left\{\mathcal{C}1'\right\}\leq \begin{cases}
    \sigma'(\tau)-\Gamma\\
    \\
    \omega
\end{cases},
\end{align}
so, by loosely bounding the total variation term with $1$, we obtain:
\begin{align}
    \text{(B2.1)} \leq O\left(\frac{\sigma'(\tau)-\Gamma}{\mathrm{erfc}(\sqrt{{\gamma \tau}/{2}}\overline{\mu}_1(\sigma'(\tau)))}+(e^{12}+5)\omega\right).
\end{align}
Facing the term (B2.2) we have:
\begin{align}
    \text{(B2.2)}&\leq \mathbb{E}\left[\sum_{t=K\Gamma+1}^T\mathds{1}\left\{\mathcal{C}_2'\right\}\mathbb{E}\left[\frac{1-p_{1,t,\tau}^i}{p_{1,t,\tau}^i}\bigg|\mathcal{C}_2'\right]\right]\\
    &\leq \mathbb{E}\left[\sum_{t=K\Gamma+1}^{T}\mathds{1}\{\mathcal{C}_2'\}\left(\frac{2\delta_{TV}(P_{t\mid \mathcal{C}_2'},Q_{t\mid \mathcal{C}_2'})}{\mathrm{erfc}(\sqrt{{\gamma \tau}/{2}}\overline{\mu}_1(\sigma'(\tau)))}+\underbrace{\mathbb{E}_{\overline{\mu}_1(\sigma'(\tau))}\left[\frac{1-p_{1,t,\tau}^i}{p_{1,t,\tau}^i}\mid \mathcal{C}2'\right]}_{\text{(D2.2)}}\right)\right],
\end{align}
where we have already proved for (D2.2) in the proof of \texttt{$\gamma$-ET-GTS} that the following holds:
\begin{align}
    \text{(D2.2)} \leq \frac{1}{T\Delta_i^2}.
\end{align}
The statement follows by noticing that:
\begin{align}
   \sum_{t=K\Gamma+1}^T\mathds{1}\left\{\mathcal{C}_2'\right\}\leq\sigma'(\tau)-\Gamma ,
\end{align}
and summing all the terms.
\end{proof}

\clearpage
\section{Auxiliary Lemmas}
In this section, we report some results that already exist in the bandit literature and have been used to demonstrate our results.

\begin{restatable}[Generalized Chernoff-Hoeffding bound from~\cite{agrawal2017near}]{lemma}{chernoff}\label{lemma:chernoff}
Let $X_1, \ldots , X_n$ be independent Bernoulli random variables with $\mathbb{E}[X_i
] = p_i$, consider the random variable $X = \frac{1}{n}\sum_{i=1}^nX_i$, with $\mu = \mathbb{E}[X]$.
For any $0 < \lambda < 1 - \mu$ we have:
\[
\Prob(X\ge\mu+\lambda)\leq \exp{\big(-nd(\mu+\lambda,\mu)\big)},
\]
and for any $0 < \lambda < \mu$
\[
\Prob(X\leq\mu-\lambda)\leq \exp{\big(-nd(\mu-\lambda,\mu)\big)},
\]
where $d(a, b) \coloneqq a \log{\frac{a}{b}} + (1-a) \log{\frac{1-a}{1-b}}$.
\end{restatable}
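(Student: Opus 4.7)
The plan is to apply the classical Chernoff method and reduce the non-identically-distributed case to an identically-distributed one via a simple convexity/AM-GM step. I focus on the upper tail; the lower tail follows by a symmetry argument.

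First, I would apply the exponential Markov inequality: for any $s>0$,
\begin{align*}
\Pr(X\geq \mu+\lambda) \;=\; \Pr\!\left(e^{snX}\geq e^{sn(\mu+\lambda)}\right) \;\leq\; e^{-sn(\mu+\lambda)}\prod_{i=1}^{n}\mathbb{E}\!\left[e^{sX_i}\right],
\end{align*}
using the independence of the $X_i$'s in the last step. Since each $X_i$ is Bernoulli with parameter $p_i$, $\mathbb{E}[e^{sX_i}]=1-p_i+p_i e^s$.

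The central step is to handle the heterogeneity of the $p_i$'s. I would invoke the AM--GM inequality on the $n$ positive reals $a_i\coloneqq 1-p_i+p_i e^s$: since $\tfrac{1}{n}\sum_{i=1}^{n}a_i = 1-\mu+\mu e^s$, AM--GM gives
\begin{align*}
\prod_{i=1}^{n}\left(1-p_i+p_i e^s\right) \;\leq\; \left(1-\mu+\mu e^s\right)^{n}.
\end{align*}
This reduces the MGF bound to that of a Binomial$(n,\mu)$ and effectively says the tail probability is worst in the i.i.d. case. I would then optimize over $s>0$ choosing the standard Chernoff optimizer $s^*=\log\frac{(\mu+\lambda)(1-\mu)}{\mu(1-\mu-\lambda)}$, so that the exponent becomes precisely $-n\,d(\mu+\lambda,\mu)$, yielding the first inequality.

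For the lower tail, I would apply the same argument to the variables $Y_i\coloneqq 1-X_i$, which are independent Bernoullis with parameters $1-p_i$ and mean $1-\mu$. The event $\{X\leq \mu-\lambda\}$ rewrites as $\{\bar Y\geq (1-\mu)+\lambda\}$ where $\bar Y=\tfrac{1}{n}\sum Y_i$, and the upper-tail bound gives $\exp(-n\,d(1-\mu+\lambda,\,1-\mu))$; the identity $d(1-a,1-b)=d(a,b)$ then yields the claimed form. The main (mild) obstacle is the MGF control for non-identical Bernoullis, resolved cleanly by the AM--GM step above; everything else is the textbook optimization.
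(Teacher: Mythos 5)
Your proof is correct: the exponential Markov bound, the AM--GM step $\prod_i(1-p_i+p_ie^s)\le(1-\mu+\mu e^s)^n$, the optimizer $s^*=\log\frac{(\mu+\lambda)(1-\mu)}{\mu(1-\mu-\lambda)}$ (valid and positive under $0<\lambda<1-\mu$), and the reduction of the lower tail to the upper tail via $Y_i=1-X_i$ together with $d(1-a,1-b)=d(a,b)$ all check out. The paper does not reprove this lemma (it is imported from \citet{agrawal2017near}, ultimately Hoeffding's 1963 argument), and your derivation is essentially that same classical proof, so there is nothing further to reconcile.
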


\begin{restatable}[Change of Measure Argument from~\cite{lattimore2020bandit}]{lemma}{distance}\label{lemma:changeMeasure}
Let $(\Omega, \mathcal{F})$ be a measurable space, and $P, Q: \mathcal{F}\rightarrow [0,1]$. Let $a < b$ and $X \rightarrow [a, b]$ be a $\mathcal{F}$-measurable random variable, we have:
\begin{equation}
    \left| \int_{\Omega} X(\omega) \,dP(\omega) \ - \int_{\Omega} X(\omega) \,dQ(\omega) \right| \leq (b-a)\delta_{TV}(P,Q).
\end{equation}
\end{restatable}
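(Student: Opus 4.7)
The plan is to establish the inequality via a Hahn–Jordan decomposition of the signed measure $P-Q$, after first centering $X$ so as to halve its sup-norm. The starting observation is that the left-hand side depends on $X$ only through $X - c$ for any constant $c$: indeed, since $P$ and $Q$ are probability measures on $(\Omega,\mathcal{F})$, we have $\int_\Omega c\, dP(\omega)-\int_\Omega c\, dQ(\omega)=c-c=0$, so
\begin{equation*}
    \int_\Omega X\, dP-\int_\Omega X\, dQ = \int_\Omega (X-c)\, dP-\int_\Omega (X-c)\, dQ \qquad \text{for every } c \in \mathbb{R}.
\end{equation*}
Choosing $c=(a+b)/2$ and letting $Y\coloneqq X-(a+b)/2$, the random variable $Y$ is $\mathcal{F}$-measurable with $\|Y\|_\infty \le (b-a)/2$.

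Next, I would invoke the Hahn–Jordan decomposition of the signed measure $\mu\coloneqq P-Q$: there exist disjoint measurable sets $\Omega^+,\Omega^-$ with $\Omega=\Omega^+\cup\Omega^-$ such that $\mu^+(\cdot)=\mu(\cdot\cap \Omega^+)\ge 0$ and $\mu^-(\cdot)=-\mu(\cdot\cap \Omega^-)\ge 0$, and $\mu=\mu^+-\mu^-$, $|\mu|=\mu^++\mu^-$. Since $P$ and $Q$ are probability measures, $\mu(\Omega)=0$, and hence $\mu^+(\Omega)=\mu^-(\Omega)$. Moreover, by the very definition of total variation,
\begin{equation*}
    \delta_{\mathrm{TV}}(P,Q)=\sup_{A\in\mathcal{F}}|P(A)-Q(A)|= (P-Q)(\Omega^+)=\mu^+(\Omega),
\end{equation*}
so $|\mu|(\Omega)=\mu^+(\Omega)+\mu^-(\Omega)=2\delta_{\mathrm{TV}}(P,Q)$.

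Putting the two ingredients together is then routine: by the triangle inequality for the signed integral and the bound $|Y|\le (b-a)/2$,
\begin{equation*}
    \Bigl|\int_\Omega X\, dP-\int_\Omega X\, dQ\Bigr|=\Bigl|\int_\Omega Y\, d\mu\Bigr|\le \int_\Omega |Y|\, d|\mu|\le \frac{b-a}{2}\,|\mu|(\Omega)=(b-a)\,\delta_{\mathrm{TV}}(P,Q),
\end{equation*}
which is exactly the claim. The only subtle point, and the sole place where an unsophisticated proof would lose a constant, is the centering step: writing $X=a+(X-a)$ and bounding $|X-a|\le b-a$ would yield the weaker constant $2(b-a)$; the symmetric shift around $(a+b)/2$ is what gives the sharp factor $b-a$. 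Everything else is standard measure-theoretic bookkeeping, so I do not anticipate a genuine obstacle.
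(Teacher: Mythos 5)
Your proof is correct. Note first that the paper itself gives no proof of this lemma: it is stated in the appendix of auxiliary results as a fact imported from the Lattimore--Szepesv\'ari book, so there is no in-paper argument to compare against, and your derivation supplies a self-contained proof of what the paper only cites. The two ingredients you use are both sound. The centering step is legitimate precisely because $P$ and $Q$ are probability measures, so $\int_\Omega c \, \mathrm{d}P - \int_\Omega c \, \mathrm{d}Q = 0$ for any constant $c$; and the identity $|P-Q|(\Omega) = 2\,\delta_{TV}(P,Q)$ follows, as you argue, from the Hahn--Jordan decomposition together with $\mu^+(\Omega)=\mu^-(\Omega)$ (which again uses $P(\Omega)=Q(\Omega)=1$) and the observation that $\sup_{A\in\mathcal{F}}|\mu(A)|$ is attained at the Hahn set $\Omega^+$. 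Your closing remark identifies the one genuinely delicate point: bounding via $|X-a|\le b-a$ together with $|\mu|(\Omega)=2\delta_{TV}(P,Q)$ would only give the constant $2(b-a)$, and the symmetric shift to $Y = X-(a+b)/2$ with $\|Y\|_\infty \le (b-a)/2$ is exactly what recovers the sharp factor $b-a$. The only thing worth stating explicitly, rather than implicitly, is that $P$ and $Q$ are probability measures --- the lemma writes only $P,Q:\mathcal{F}\to[0,1]$, yet both the centering and the factor-of-two identity rely on total mass one; you do use this correctly, but a sentence flagging it would make the proof airtight as written.
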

\begin{restatable}[\cite{lattimore2020bandit}, proposition 2.8]{lemma}{mean}\label{lemma:change}
    For a nonnegative random variable $X$, the expected value $\mathbb{E}[X]$ can be computed as:
\[
\mathbb{E}[X] = \int_{0}^{+\infty} \Prob(X > y) \de y.
\]
\end{restatable}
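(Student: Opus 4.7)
The statement is the classical tail (or ``layer cake'') formula for the expectation of a nonnegative random variable, so the plan is to give the standard Fubini-based proof rather than anything tailored to the bandit setting. The key observation is the pointwise identity $X(\omega) = \int_0^{\infty} \mathds{1}\{y < X(\omega)\}\,\mathrm{d}y$, which holds for every $\omega$ because $X \ge 0$ almost surely and the integrand is $1$ on $[0, X(\omega))$ and $0$ elsewhere.

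First I would take expectations on both sides of this identity, obtaining $\mathbb{E}[X] = \mathbb{E}\!\left[\int_0^{\infty} \mathds{1}\{y < X\}\,\mathrm{d}y\right]$. Next I would invoke Tonelli's theorem, which applies because the integrand $(\omega, y) \mapsto \mathds{1}\{y < X(\omega)\}$ is nonnegative and jointly measurable on $\Omega \times [0, \infty)$ equipped with the product of the underlying probability measure and Lebesgue measure; this permits swapping the order of integration without any integrability assumption beyond nonnegativity. After the swap, the inner expectation becomes $\mathbb{E}[\mathds{1}\{y < X\}] = \Prob(X > y)$, yielding $\mathbb{E}[X] = \int_0^{\infty} \Prob(X > y)\,\mathrm{d}y$, which is exactly the claim.

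There is essentially no obstacle here beyond checking the measurability hypothesis for Tonelli, which is immediate since $\{(\omega, y) : y < X(\omega)\}$ is the preimage of an open set under the measurable map $(\omega, y) \mapsto X(\omega) - y$. One minor point worth flagging is that the statement uses $\Prob(X > y)$ rather than $\Prob(X \ge y)$; these differ only on the at most countable set of atoms of the distribution of $X$, which has Lebesgue measure zero, so the formula is unaffected by the choice. Since the lemma is a textbook fact cited directly from \cite{lattimore2020bandit} and is used in the paper only as a standard tool, a short two-line proof of this form suffices.
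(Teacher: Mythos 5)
Your proof is correct, and it is essentially the same argument as the one the paper relies on: the paper states this lemma as a cited textbook fact (Proposition 2.8 of \cite{lattimore2020bandit}) without reproving it, and the cited proof is precisely the layer-cake identity $X = \int_0^{\infty} \mathds{1}\{y < X\}\,\mathrm{d}y$ followed by Tonelli's theorem, as you wrote. Your handling of the measurability hypothesis and of the immaterial distinction between $\Prob(X > y)$ and $\Prob(X \ge y)$ is accurate, so nothing is missing.
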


\begin{restatable}[\cite{roos2001binomialapproximation}, Theorem 2]{lemma}{deltadistancepb} \label{lemma:delta}
Let us define $\underline{\mu}_n \coloneqq (\mu_1, \ldots, \mu_{n})$, $s \in (0, \ldots, n)$ and $\mu \in (0,1)$. We have that the total variation distance between two variables $PB(\underline{\mu}_{n})$ and $B_s(n, \mu)$ is:
\begin{equation}
     \delta_{TV}(PB(\underline{\mu}_{n}), B_s(n,\mu)) \leq
     \begin{cases}
        C_1(s) \theta(\mu,\underline{\mu}_n)^{\frac{s+1}{2}} \frac{\left(1-\frac{s}{s+1}\sqrt{\theta(\mu,\underline{\mu}_n)}\right)}{(1-\sqrt{\theta (\mu,\underline{\mu}_n)})^2} \quad \text{if } \theta(\mu,\underline{\mu}_n)< 1\\
        C_2(s) \eta(\mu,\underline{\mu}_n)^{\frac{s+1}{2}}(1+\sqrt{2\eta(\mu,\underline{\mu}_n)})\exp(2\eta(\mu,\underline{\mu}_n)) &\text{otherwise }
    \end{cases},
\end{equation}
where $\theta(\mu, \underline{\mu}_n) \coloneqq \frac{\eta(\mu, \underline{\mu}_n)}{2 n \mu (1 - \mu)}$, $\eta(\mu, \underline{\mu}_n) \coloneqq 2 \gamma_{2}(\mu, \underline{\mu}_n) + \gamma_{1}(\mu, \underline{\mu}_n)^2$, $\gamma_{k}(\mu, \underline{\mu}_n) \coloneqq \sum_{n'=1}^{n} (\mu - \mu_{n'})^k$, $C_{1}(s) \coloneqq \frac{\sqrt{e}(s+1)^{\frac{1}{4}}}{2}$, $C_2(s) \coloneqq \frac{(2\pi)^{\frac{1}{4}} \exp{(\frac{1}{24(s+1)}})2^{\frac{s-1}{2}}}{\sqrt{s!}(s+1)^{\frac{1}{4}}}$. To obtain the binomial distribution  is sufficient to set $s=0$ in  $B_s(n,\mu)$.
\end{restatable}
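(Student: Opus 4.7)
The plan is to follow the operator-theoretic approach of Roos, which expresses the signed measure $\mathrm{PB}(\underline{\mu}_n) - B_s(n,\mu)$ through a Krawtchouk-type expansion whose coefficients are controlled by the deviations of the $\mu_i$ from the reference parameter $\mu$, and then bounds the total variation by a sharp estimate of the resulting remainder series.

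First, I would write the PMF of the Poisson-Binomial as a perturbation of the approximating PMF $B_s(n,\mu)$ via a finite-difference expansion of the form $p_{\mathrm{PB}}(k) = \sum_{m \ge 0} a_m(\mu,\underline{\mu}_n)\,\Delta^m p_{B_s(n,\mu)}(k)$, where $\Delta$ acts discretely on $k$ and the index $s$ prescribes how many initial terms of the expansion are absorbed into the base distribution, so that the remainder begins at order $s+1$. A direct combinatorial computation identifies the leading coefficients with elementary symmetric polynomials in $(\mu-\mu_i)$; these in turn are governed by the cumulant-like quantities $\gamma_k(\mu,\underline{\mu}_n)$ appearing in the statement, with the combination $\eta = 2\gamma_2 + \gamma_1^2$ emerging naturally when squaring the order-$(s+1)$ coefficient.

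Second, I would pass to the total variation by exploiting the orthogonality of the Krawtchouk polynomials under the binomial measure (Parseval's identity) to reduce the $L^1$-norm of the signed measure to a weighted $\ell^2$-sum over the coefficients $a_m$. The ratio $\theta = \eta/(2n\mu(1-\mu))$ then appears as the intrinsic contraction factor of this sum: when $\theta<1$, the series is geometrically summable and one recovers the first case of the bound with constant $C_1(s)$; when $\theta \ge 1$, geometric summability is lost, and one must instead invoke the factorial growth rate of the Krawtchouk coefficients to produce the second case, yielding the constant $C_2(s)$ and the exponential correction $\exp(2\eta)$.

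The main obstacle is pinning down the sharp values of $C_1(s)$ and $C_2(s)$. This requires careful Stirling-type estimates for the factorial weights appearing in the expansion, together with precise control of the $L^1$-operator norm of the finite-difference operator $\Delta$ against the binomial measure. Once these analytic bounds are in place, splitting on the value of $\theta$ and substituting the combinatorial identifications of $\gamma_1$, $\gamma_2$, and $\eta$ immediately reassembles the two-case statement.
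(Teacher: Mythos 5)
You are attempting to prove a result that the paper itself does not prove: Lemma~\ref{lemma:delta} is stated in the auxiliary-lemmas appendix as a verbatim import of Theorem~2 of \citet{roos2001binomialapproximation}, and the paper uses it as a black box (to bound the total-variation terms appearing in Theorem~\ref{thm:ts}). So there is no paper-internal argument to compare against; the relevant comparison is with Roos's original proof, and your outline does follow the same route — the Krawtchouk expansion of the Poisson-binomial law around the binomial, with $B_s(n,\mu)$ the partial sum up to order $s$, coefficients given by symmetric functions of the deviations $\mu-\mu_i$, and the total variation of the order-$(s+1)$ remainder controlled through the orthogonality of the Krawtchouk polynomials under the binomial weight.

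As a proof, however, what you have written is a plan rather than an argument, and the parts you defer are exactly where the content of the theorem lies. You never establish the coefficient bound that makes $\eta = 2\gamma_2+\gamma_1^2$ (and hence $\theta = \eta/(2n\mu(1-\mu))$) the right contraction parameter, you do not derive the $L^1$ norm estimates for the individual Krawtchouk terms (the passage from $\ell^1$ to a weighted $\ell^2$ quantity requires a Cauchy--Schwarz step against the binomial measure, not Parseval alone), and the summation of the tail series that produces the specific factor $\bigl(1-\tfrac{s}{s+1}\sqrt{\theta}\bigr)/(1-\sqrt{\theta})^2$ and the constants $C_1(s)$, $C_2(s)$, as well as the $\exp(2\eta)$ correction in the case $\theta\ge 1$, is only asserted to be a matter of "careful Stirling-type estimates." These are the technically delicate steps of Roos's proof, so the proposal as it stands has a genuine gap: it identifies the correct machinery but does not carry out any of the estimates that yield the stated bound. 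Given that the paper treats this as a cited external result, simply invoking \citet{roos2001binomialapproximation} is both sufficient and preferable to an incomplete re-derivation.
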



\begin{restatable}[Beta-Binomial identity]{lemma}{betabin} \label{lem:betabin}
    For all positive integers $\alpha, \beta \in \mathbb{N}$, the following equality holds:
    \begin{equation}
        F_{\alpha, \beta}^{beta}(y) = 1 - F_{\alpha + \beta - 1, y}^B(\alpha - 1),
    \end{equation}
    where $F_{\alpha, \beta}^{beta}(y)$ is the cumulative distribution function of a beta with parameters $\alpha$ and $\beta$, and $F_{\alpha + \beta - 1, y}^B(\alpha - 1)$ is the cumulative distribution function of a binomial variable with $\alpha + \beta - 1$ trials having each probability $y$.
\end{restatable}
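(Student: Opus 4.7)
\textbf{Proof plan for Lemma~\ref{lem:betabin}.} The plan is to give a probabilistic proof based on the order-statistic representation of the Beta distribution, which avoids any integration by parts on the incomplete Beta function. Concretely, I will exhibit a single joint construction whose two marginal events furnish each side of the identity.

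First, I will recall that for positive integers $\alpha,\beta$, if $U_1,\dots,U_{\alpha+\beta-1}$ are i.i.d.\ $\mathrm{Uniform}[0,1]$, then the $\alpha$-th order statistic $U_{(\alpha)}$ has density proportional to $u^{\alpha-1}(1-u)^{\beta-1}$ on $[0,1]$, i.e.\ $U_{(\alpha)} \sim \mathrm{Beta}(\alpha,\beta)$. This is the classical identification, provable by multiplying the density of picking one value at position $\alpha$ by the multinomial count of orderings; I will just cite/state it.

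Second, I will rewrite the Beta CDF through this representation:
\begin{equation*}
F^{beta}_{\alpha,\beta}(y) \;=\; \mathbb{P}\bigl(U_{(\alpha)} \le y\bigr).
\end{equation*}
The event $\{U_{(\alpha)} \le y\}$ is equivalent to \emph{at least $\alpha$} of the $U_i$'s lying in $[0,y]$. Introducing $N_y \coloneqq \#\{i \in \dsb{\alpha+\beta-1} : U_i \le y\}$, each indicator $\mathds{1}\{U_i\le y\}$ is an independent $\mathrm{Bernoulli}(y)$ variable, so $N_y \sim \mathrm{Bin}(\alpha+\beta-1,\,y)$. Therefore
\begin{equation*}
\mathbb{P}\bigl(U_{(\alpha)}\le y\bigr) \;=\; \mathbb{P}\bigl(N_y \ge \alpha\bigr) \;=\; 1 - \mathbb{P}\bigl(N_y \le \alpha-1\bigr) \;=\; 1 - F^{B}_{\alpha+\beta-1,\,y}(\alpha-1),
\end{equation*}
which is exactly the claimed identity.

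\textbf{Expected obstacle.} There is essentially no obstacle: once the order-statistic identification is in hand, the rest is just translating ``$\alpha$-th smallest $\le y$'' into ``at least $\alpha$ values are $\le y$.'' The only care needed is handling boundary/degenerate choices of $y\in\{0,1\}$ and the assumption $\alpha,\beta\in\mathbb{N}_{\ge 1}$, which ensure that $\alpha+\beta-1\ge 1$ and that the Beta density is integrable; these are immediate from the hypotheses. If a reader prefers an analytic derivation, one could alternatively integrate the Beta density $\alpha-1$ times by parts and recognize the binomial partial sums, but the order-statistic route is cleaner and will be the one I present.
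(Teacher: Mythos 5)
Your proof is correct. Note, however, that the paper itself does not prove this lemma: it is placed in the appendix of auxiliary results, which the paper introduces as ``results that already exist in the bandit literature,'' and in the main proofs it is simply invoked as the known Beta-Binomial identity (Fact 3 of Agrawal and Goyal). So there is no in-paper argument to compare against, and your contribution is a genuinely self-contained derivation. The route you chose --- identifying $\mathrm{Beta}(\alpha,\beta)$ with the $\alpha$-th order statistic of $\alpha+\beta-1$ i.i.d.\ uniforms, then translating $\{U_{(\alpha)} \le y\}$ into the binomial event $\{N_y \ge \alpha\}$ --- is the standard probabilistic proof of this identity, and it is complete as written: the order-statistic density $\propto u^{\alpha-1}(1-u)^{\beta-1}$, the independence of the indicators $\mathds{1}\{U_i \le y\}$ giving $N_y \sim \mathrm{Bin}(\alpha+\beta-1,y)$, and the boundary cases $y \in \{0,1\}$ (where both sides equal $0$ and $1$ respectively, using $\beta \ge 1$ so that $\alpha+\beta-1 \ge \alpha$) all check out. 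The analytic alternative you mention (repeated integration by parts on the incomplete Beta function, recognizing binomial partial sums) is what is usually sketched in the literature the paper cites; your argument obtains the same identity with no calculus and makes the role of the integrality hypothesis on $\alpha,\beta$ transparent, since integrality is exactly what licenses the order-statistic construction.
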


\begin{restatable}
    [\cite{boland2002stochastic}, Theorem 1 (iii)] {lemma}{stochastic} \label{lemma:stochastic}
    Let $Y \sim Bin(n,\lambda)$ and $X=\sum X_i$ where the $X_i \sim Bin(n_i,\lambda_i)$ are independent random variables for $i=1,\ldots,k$ then:
    \begin{align}
        & X \ge_{st}Y \textit{if and only if } \lambda\leq\overline{\lambda}_g, \\
        & X \leq_{st}Y \textit{if and only if } \lambda\ge\overline{\lambda}_{cg}, 
    \end{align}
    where $X\ge_{st}Y$ means that $X$ is greater than $Y$ in the stochastic order, i.e. $\Prob(X> m)\ge \Prob(Y> m) \text{ }\forall m $, and:
    \begin{align}
        & \overline{\lambda}_g =\left(\prod_{i=1}^k\lambda_i^{n_i}\right)^{\frac{1}{n}}, \\
        & \overline{\lambda}_{cg} =1-\left(\prod_{i=1}^k(1-\lambda_i)^{n_i}\right)^{\frac{1}{n}}.
    \end{align}
\end{restatable}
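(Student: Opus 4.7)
My plan is to prove the $\ge_{st}$ characterization first and then deduce the $\le_{st}$ characterization by complementation. The necessity direction is a one-line boundary check, so the main obstacle will be sufficiency: given only the algebraic hypothesis $\lambda \le \overline{\lambda}_g$, I have to establish the stochastic ordering at every atom simultaneously. My approach is to reduce to a critical uniform case and then build that case via an iterative pairwise-balancing argument on the underlying Bernoulli parameters.

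Concretely, for the first bullet, necessity follows by evaluating at the extreme value $m = n$: both $\{X \ge n\}$ and $\{Y \ge n\}$ force every underlying Bernoulli trial to succeed, so $X \ge_{st} Y$ at $m = n$ gives $\prod_i \lambda_i^{n_i} = \Prob(X \ge n) \ge \Prob(Y \ge n) = \lambda^n$, which rearranges to $\lambda \le \overline{\lambda}_g$. For sufficiency, I first use the elementary monotonicity $\text{Bin}(n, \overline{\lambda}_g) \ge_{st} \text{Bin}(n, \lambda) = Y$ (immediate by coupling componentwise Bernoullis), so it suffices to prove the critical case $X \ge_{st} \text{Bin}(n, \overline{\lambda}_g)$. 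I decompose $X$ as a sum of $n$ independent Bernoullis $X = \sum_{j=1}^n Z_j$, where the multiset $(p_j)_{j=1}^n$ contains each $\lambda_i$ with multiplicity $n_i$, so that $(\prod_j p_j)^{1/n} = \overline{\lambda}_g$.

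The central technical step is the pairwise balancing lemma $\text{Bin}(1, p) + \text{Bin}(1, q) \ge_{st} \text{Bin}(2, \sqrt{pq})$, which I verify by direct inspection of the three atoms: the bound at $m = 2$ is the identity $pq = (\sqrt{pq})^2$, and the bound at $m = 1$ reduces to $(1-p)(1-q) \le (1-\sqrt{pq})^2$, i.e. $\sqrt{pq} + \sqrt{(1-p)(1-q)} \le 1$, which is Cauchy-Schwarz for the vectors $(\sqrt{p}, \sqrt{1-p})$ and $(\sqrt{q}, \sqrt{1-q})$. I then apply this lemma iteratively to the Bernoulli decomposition of $X$, at each step selecting the current max-min pair $(p_j, p_k)$ and replacing both entries by $\sqrt{p_j p_k}$; since convolution with the independent remaining Bernoullis preserves stochastic dominance, the resulting sums $X^{(t)}$ form a $\ge_{st}$-decreasing sequence. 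The parameter vectors share the constant product $\overline{\lambda}_g^n$, while the arithmetic mean $\frac{1}{n}\sum_j p_j^{(t)}$ is strictly decreased at every non-constant step (by AM-GM applied to the balanced pair) and bounded below by $\overline{\lambda}_g$; the equality condition in AM-GM then forces the vectors to converge to $(\overline{\lambda}_g, \ldots, \overline{\lambda}_g)$, so the limiting law is $\text{Bin}(n, \overline{\lambda}_g)$. Since stochastic dominance at each integer atom is preserved under the monotone pointwise convergence of CDFs, passing to the limit yields $X \ge_{st} \text{Bin}(n, \overline{\lambda}_g)$, completing sufficiency. The $\le_{st}$ bullet then follows by applying the $\ge_{st}$ bullet to the complementary variables $n_i - X_i \sim \text{Bin}(n_i, 1 - \lambda_i)$ and $n - Y \sim \text{Bin}(n, 1 - \lambda)$ (since $X \le_{st} Y \iff n - X \ge_{st} n - Y$), converting the resulting condition $1 - \lambda \le \left(\prod_i (1-\lambda_i)^{n_i}\right)^{1/n}$ into $\lambda \ge \overline{\lambda}_{cg}$.
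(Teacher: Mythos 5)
There is nothing in the paper to compare your argument against: this lemma sits in the ``Auxiliary Lemmas'' appendix and is imported verbatim from \citet{boland2002stochastic} (Theorem 1 (iii)) without proof, so you have supplied a self-contained proof of a result the authors use as a black box. Your proof is correct. Necessity is the top-atom computation $\Prob(X\ge n)=\prod_i\lambda_i^{n_i}=\overline{\lambda}_g^{\,n}\ge\lambda^n=\Prob(Y\ge n)$ (pedantically, evaluate the dominance at any $m\in[n-1,n)$ rather than ``at $m=n$'', where $\Prob(X>n)=0$ gives nothing). Sufficiency correctly reduces, via the coupling $\mathrm{Bin}(n,\overline{\lambda}_g)\ge_{st}\mathrm{Bin}(n,\lambda)$, to showing that the Poisson--binomial dominates the binomial with the same geometric mean, and your balancing lemma $\mathrm{Bin}(1,p)+\mathrm{Bin}(1,q)\ge_{st}\mathrm{Bin}(2,\sqrt{pq})$ is verified exactly as you say: equality at the atom $2$, and at the atom $1$ the inequality $(1-p)(1-q)\le(1-\sqrt{pq})^2$, equivalently $\sqrt{pq}+\sqrt{(1-p)(1-q)}\le 1$, which is Cauchy--Schwarz (or plain AM--GM applied to $p+q\ge 2\sqrt{pq}$). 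The one place your write-up is compressed is the convergence of the max--min balancing iteration; to make it airtight, observe that each non-trivial step decreases the arithmetic mean by exactly $(\sqrt{M_t}-\sqrt{m_t})^2/n$ while the geometric mean stays fixed at $\overline{\lambda}_g$, so the decrements are summable, hence $M_t-m_t\to 0$, and the sandwich $m_t\le\overline{\lambda}_g\le M_t$ forces every coordinate to $\overline{\lambda}_g$; the CDFs are polynomials in the parameters, so the dominance inequalities pass to the limit $\mathrm{Bin}(n,\overline{\lambda}_g)$. The complementation step giving the second equivalence is exact, since $X\le_{st}Y$ iff $n-X\ge_{st}n-Y$ for integer-valued variables supported on $\dsb{0,n}$. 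Methodologically, your pairwise geometric-mean balancing is a more elementary route than the majorization/Schur-convexity machinery (in the tradition of Hoeffding and Gleser) that underlies this strand of the literature, at the modest cost of a limiting argument.
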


\begin{restatable}[\cite{abramowitz1968handbook} Formula $7.1.13$]{lemma}{Abramowitz}\label{lemma:Abramowitz}
Let $Z$ be a Gaussian random variable with mean $\mu$ and standard deviation $\sigma$, then:
\begin{equation}
    \Prob(Z>\mu+x\sigma)\ge \frac{1}{\sqrt{2\pi}}\frac{x}{x^2+1}e^{-\frac{x^2}{2}}
\end{equation}
\end{restatable}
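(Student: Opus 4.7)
The plan is to first reduce to the standard normal case and then establish the inequality through a monotonicity argument on a carefully chosen auxiliary function. Specifically, I would set $W \coloneqq (Z-\mu)/\sigma$, which is standard normal, so that $\Prob(Z > \mu + x \sigma) = \Prob(W > x) \eqqcolon \bar{\Phi}(x)$. The inequality is trivial for $x \le 0$ (the right-hand side is non-positive), so it suffices to prove, for every $x > 0$,
\begin{equation*}
h(x) \coloneqq \bar{\Phi}(x) - \frac{1}{\sqrt{2\pi}} \cdot \frac{x}{x^2+1} \cdot e^{-x^2/2} \;\ge\; 0.
\end{equation*}

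Next, I would observe the boundary behavior $\lim_{x \to \infty} h(x) = 0$, since both $\bar{\Phi}(x)$ and $x/(x^2+1) \cdot e^{-x^2/2}$ vanish at infinity. Then I would compute $h'(x)$ directly: the derivative of $\bar{\Phi}$ is $-\varphi(x) = -\frac{1}{\sqrt{2\pi}} e^{-x^2/2}$, and the derivative of $\frac{x}{x^2+1} e^{-x^2/2}$ follows by the product and quotient rules, giving, after expanding $(x^2+1)^2 = x^4 + 2x^2 + 1$ in the common denominator, a numerator that collapses identically to $-2$. The net result is the clean expression
\begin{equation*}
h'(x) \;=\; -\frac{2}{\sqrt{2\pi}} \cdot \frac{e^{-x^2/2}}{(x^2+1)^2} \;<\; 0, \qquad \forall x > 0.
\end{equation*}

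Since $h$ is strictly decreasing on $(0,\infty)$ with $h(x) \to 0$ as $x \to \infty$, I conclude $h(x) > 0$ for every $x > 0$, which is exactly the claimed tail lower bound. The only delicate step is the algebraic cancellation in $h'(x)$: it is not obvious before carrying out the simplification that the combination of $-\varphi(x)$ with the derivative of the rational-times-Gaussian term telescopes to a strictly negative expression, and getting the signs right in the quotient-rule computation is where a careless slip would derail the argument. Once that identity is verified, the monotonicity argument delivers the bound immediately without any further estimation.
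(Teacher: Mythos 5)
Your proof is correct. The reduction to the standard normal tail $\bar{\Phi}(x)$, the dismissal of $x \le 0$ (where the right-hand side is non-positive), and the key cancellation all check out: with $h(x) = \bar{\Phi}(x) - \tfrac{1}{\sqrt{2\pi}}\tfrac{x}{x^2+1}e^{-x^2/2}$ one indeed gets
\begin{equation*}
h'(x) \;=\; -\frac{1}{\sqrt{2\pi}}e^{-x^2/2}\,\frac{(x^2+1)^2 + (1-2x^2-x^4)}{(x^2+1)^2} \;=\; -\frac{2}{\sqrt{2\pi}}\,\frac{e^{-x^2/2}}{(x^2+1)^2} \;<\;0,
\end{equation*}
and since $h$ is strictly decreasing on $(0,\infty)$ with $h(x)\to 0$ as $x\to\infty$, it must be strictly positive there (if it ever dipped below zero it could not recover to the zero limit). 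Note, however, that the paper does not prove this statement at all: it is imported as an auxiliary lemma with a citation to Abramowitz and Stegun, Formula 7.1.13. So your contribution is a self-contained elementary derivation of this classical Mills-ratio-type lower bound on the Gaussian tail; it is essentially the standard textbook argument (differentiate the difference between the tail and the candidate bound and observe the telescoping of the rational factor), and it buys the reader independence from the handbook reference at the cost of one careful quotient-rule computation.
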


\begin{restatable}[\cite{abramowitz1968handbook}]{lemma}{Abramowitz2}\label{lemma:Abramowitz2}
Let $Z$ be a Gaussian r.v.~with mean $m$ and standard deviation $\sigma$, then:
    \begin{equation}
        \frac{1}{4 \sqrt{\pi}} e^{-7 z^2 / 2}<\Prob(|Z-m|>z \sigma) \leq \frac{1}{2} e^{-z^2 / 2}.
    \end{equation}
\end{restatable}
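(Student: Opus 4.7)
The approach is to first reduce to the standard normal case via the substitution $W \coloneqq (Z-m)/\sigma \sim \mathcal{N}(0,1)$, so that $\mathbb{P}(|Z-m|>z\sigma) = \mathbb{P}(|W|>z) = 2\mathbb{P}(W>z)$ by symmetry, and it suffices to produce two-sided bounds on the standard Gaussian tail. Throughout I would work with the density $\varphi(t) = \frac{1}{\sqrt{2\pi}}e^{-t^2/2}$ and the associated survival function $Q(z) = \int_z^{+\infty}\varphi(t)\,dt$.

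For the upper bound, I would first obtain the standard Chernoff-style bound: for any $\lambda > 0$, Markov's inequality gives $\mathbb{P}(W > z) \leq e^{-\lambda z}\mathbb{E}[e^{\lambda W}] = e^{-\lambda z + \lambda^2/2}$, and optimising at $\lambda = z$ yields $\mathbb{P}(W>z) \leq e^{-z^2/2}$. To sharpen the constant down to match $\tfrac{1}{2}e^{-z^2/2}$ on the two-sided quantity, I would study the auxiliary function $g(z) \coloneqq \tfrac14 e^{-z^2/2} - Q(z)$, compute $g'(z) = e^{-z^2/2}(\tfrac{1}{\sqrt{2\pi}} - \tfrac{z}{4})$, and combine its monotonicity profile with the asymptotic decay $Q(z) \sim \varphi(z)/z$ to conclude $g(z) \geq 0$ in the relevant regime of $z$.

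For the lower bound, the natural strategy is a truncated comparison: for any $c>0$,
\begin{equation*}
\mathbb{P}(W>z) \;\geq\; \int_z^{z+c}\varphi(t)\,dt \;\geq\; \frac{c}{\sqrt{2\pi}}\,e^{-(z+c)^2/2}.
\end{equation*}
The free parameter $c$ is the lever that trades the exponent against the prefactor. Choosing $c$ so that $(z+c)^2 \leq 7z^2$ (e.g.\ $c = (\sqrt{7}-1)z$, perhaps floored by a small constant to keep the prefactor nontrivial near $z=0$) yields $\mathbb{P}(|W|>z) \geq \tfrac{2c}{\sqrt{2\pi}}\,e^{-7z^2/2}$, after which a case split on small vs.\ large $z$ tunes the prefactor to the claimed $\tfrac{1}{4\sqrt{\pi}}$: for large $z$ the choice $c = (\sqrt{7}-1)z$ wins, and for small $z$ one instead fixes $c$ to a constant and absorbs the slack into the loose exponent $7z^2/2$.

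The main obstacle is threading the precise numerical constants through both inequalities, especially in the transition regime where $z$ is neither small enough to invoke a trivial bound nor large enough for the integral-comparison tail to dominate cleanly. If the naive $c$-optimisation turns out to be too lossy, I would fall back on the classical Komatsu--Mills sharp bounds $\tfrac{2}{\sqrt{2\pi}(z+\sqrt{z^2+4})}\,e^{-z^2/2} \leq Q(z) \leq \tfrac{2}{\sqrt{2\pi}(z+\sqrt{z^2+8/\pi})}\,e^{-z^2/2}$ and simplify both sides algebraically to match the stated form; this is essentially the route taken in Abramowitz--Stegun and should close the argument cleanly.
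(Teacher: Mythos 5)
First, a point of comparison: the paper does not prove this lemma at all — it is quoted as an auxiliary fact with a citation to \citet{abramowitz1968handbook} (it is the standard formulation used in the Gaussian-prior Thompson sampling analyses the paper builds on) — so the only question is whether your plan would actually establish the inequality as stated. Your lower-bound strategy is sound: the truncation $\Prob(W>z)\ge\int_z^{z+c}\varphi(t)\,dt\ge \frac{c}{\sqrt{2\pi}}e^{-(z+c)^2/2}$ with $c=(\sqrt{7}-1)z$ for $z$ above a small constant, and a fixed $c$ for small $z$, beats $\frac{1}{4\sqrt{\pi}}e^{-7z^2/2}$ with a lot of slack, and the case split you describe closes without difficulty.

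The gap is in the upper-bound half, and it is not one you can thread constants through: the two-sided inequality $\Prob(|Z-m|>z\sigma)\le\frac12 e^{-z^2/2}$ is \emph{false} for small $z$. At $z=0$ the left side equals $1$, and even at $z=1$ it equals $2(1-\Phi(1))\approx 0.317$ while $\frac12 e^{-1/2}\approx 0.303$; the inequality only becomes valid for $z$ larger than roughly $1.09$. Your own auxiliary function makes this visible: with $Q(z)$ the standard normal upper tail, $g(z)=\frac14 e^{-z^2/2}-Q(z)$ has $g(0)=-\frac14<0$, is increasing up to $z=4/\sqrt{2\pi}$ and then decreasing to $0^+$, so it has exactly one sign change and ``$g\ge 0$ in the relevant regime'' can never be upgraded to all $z\ge 0$; neither can the Komatsu--Mills fallback, since no argument can prove a claim that fails numerically on $[0,\,{\approx}1.08]$. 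What is true for every $z\ge 0$ — and is the only form in which the paper ever invokes this lemma (e.g., $\Prob\left(\mathcal{N}\left(x_i,\tfrac{1}{\gamma N_{i,t}}\right)>y_i\right)\le\frac12 e^{-\gamma N_{i,t}(y_i-x_i)^2/2}$ in the proofs of Theorems~\ref{thm:gts} and~\ref{thr:gtssw}) — is the one-sided bound $\Prob(Z-m>z\sigma)\le\frac12 e^{-z^2/2}$, which your same trick does prove cleanly: $h(z)=\frac12 e^{-z^2/2}-Q(z)$ satisfies $h(0)=0$, increases on $[0,\sqrt{2/\pi}]$, and then decreases monotonically with limit $0$, hence $h\ge0$ everywhere. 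So to make your write-up correct you must either prove the one-sided version (which suffices for every use in the paper) or state the two-sided bound with an explicit restriction on $z$; as currently planned, the upper-bound argument cannot be completed for the unrestricted statement.
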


\begin{restatable}[\cite{rigollet2023high} Corollary $1.7$]{lemma}{Subg}\label{lemma:Subg}
Let $X_1,\ldots, X_n$ be $n$ independent random variables such that $X_i\sim $ \textsc{Subg}($\sigma^2$), then for any $a \in \mathbb{R}^n$, we have
\begin{equation}
\Prob\left[\sum_{i=1}^n a_i X_i>t\right] \leq \exp \left(-\frac{t^2}{2 \sigma^2|a|_2^2}\right),    
\end{equation}

and
\begin{equation}
    \Prob\left[\sum_{i=1}^n a_i X_i<-t\right] \leq \exp \left(-\frac{t^2}{2 \sigma^2|a|_2^2}\right)
\end{equation}

Of special interest is the case where $a_i=1 / n$ for all $i$ we get that the average $\bar{X}=\frac{1}{n} \sum_{i=1}^n X_i$, satisfies
$$
\Prob(\bar{X}>t) \leq e^{-\frac{n t^2}{2 \sigma^2}} \quad \text { and } \quad \mathbb{P}(\bar{X}<-t) \leq e^{-\frac{n t^2}{2 \sigma^2}}
$$
\end{restatable}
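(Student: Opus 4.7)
The plan is to prove the statement by a standard Chernoff argument exploiting the defining moment-generating-function bound of sub-Gaussian variables, combined with independence. The first step will be to reduce to controlling the MGF of the weighted sum $S \coloneqq \sum_{i=1}^n a_i X_i$: for any $s > 0$, Markov's inequality applied to $e^{sS}$ gives
\begin{equation*}
\Prob(S > t) = \Prob\bigl(e^{sS} > e^{st}\bigr) \le e^{-st}\,\E\bigl[e^{sS}\bigr].
\end{equation*}

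Next I would use the independence of the $X_i$'s to factor the MGF, namely $\E[e^{sS}] = \prod_{i=1}^n \E[e^{s a_i X_i}]$, and then invoke the sub-Gaussian assumption on each $X_i$ (with the scaling $s \mapsto s a_i$) to get $\E[e^{s a_i X_i}] \le \exp(s^2 a_i^2 \sigma^2/2)$. Multiplying these bounds yields
\begin{equation*}
\E[e^{sS}] \le \exp\!\left(\frac{s^2 \sigma^2 \|a\|_2^2}{2}\right),
\end{equation*}
so that $\Prob(S > t) \le \exp\!\bigl(-st + s^2\sigma^2\|a\|_2^2/2\bigr)$ for every $s > 0$.

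The final step is to optimize over $s$: the right-hand side is minimized at $s^\star = t/(\sigma^2 \|a\|_2^2)$, which plugged back in gives the claimed bound $\exp\!\bigl(-t^2/(2\sigma^2\|a\|_2^2)\bigr)$. The lower-tail inequality $\Prob(S < -t) \le \exp\!\bigl(-t^2/(2\sigma^2\|a\|_2^2)\bigr)$ follows immediately by replacing each $X_i$ with $-X_i$, which is also \textsc{Subg}$(\sigma^2)$ (the defining MGF bound is symmetric in $s$), and repeating the argument. There is no substantive obstacle here: the result is a textbook Chernoff bound, and the only point worth stating carefully is that sub-Gaussianity is preserved under the scalar multiplication $X_i \mapsto a_i X_i$, which follows directly from the definition $\E[e^{r (a_i X_i)}] \le e^{r^2 a_i^2 \sigma^2/2}$ for every $r \in \mathbb{R}$.
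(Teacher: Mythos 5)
Your proof is correct: the Chernoff--Cram\'er argument (Markov applied to $e^{sS}$, factorization by independence, the sub-Gaussian MGF bound with scaling $s \mapsto s a_i$, and optimization at $s^\star = t/(\sigma^2\|a\|_2^2)$, plus symmetry for the lower tail) is exactly the standard derivation of this result. The paper does not prove this lemma itself but imports it from the cited reference, whose proof follows the same route, so there is nothing to add.
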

\begin{restatable}[\cite{hoeffding1956trials},\cite{tang2023pb}, Theorem 2.1 (2)]{lemma}{hoeffpomp}\label{lemma:hoeffpomp}
Let $X \sim \operatorname{PB}\left(p_1, \ldots, p_n\right)$, and $\bar{X} \sim \operatorname{Bin}(n, \bar{p})$ with $\bar{p} = \frac{1}{n} \sum_{i=1}^n p_i$, for any convex function $g:[n] \rightarrow \mathbb{R}$ in the sense that $g(k+2)-2 g(k+1)+g(k)>0$ for all $0 \leq k \leq n-2$, we have:
\begin{equation}
    \mathbb{E} g(X) \leq \mathbb{E} g(\bar{X}),
\end{equation}
where the equality holds if and only if $p_1=\cdots=p_n$ of the Poisson-binomial distribution are all equal to $\bar{p}$ of the binomial distribution.
\end{restatable}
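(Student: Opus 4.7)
The plan is to prove the statement by a pairwise smoothing (Schur-concavity) argument. Writing $X = X_1 + \cdots + X_n$ with $X_i \sim \mathrm{Bern}(p_i)$ independent, I will first show that replacing any two parameters $(p_i, p_j)$ by their common average $\bar{p}_{ij} = (p_i + p_j)/2$ weakly increases $\mathbb{E}[g(X)]$. Iterating this operation toward the fully equalized vector $(\bar{p},\ldots,\bar{p})$, for which $X$ becomes $\mathrm{Bin}(n,\bar{p})$, then yields the desired inequality.

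For the smoothing step, I condition on $S = \sum_{k \neq i,j} X_k$, using the independence of the Bernoulli summands. Given $S = s$, we have $X = s + X_i + X_j$, and a direct computation of the four-term expectation gives
\begin{equation*}
\mathbb{E}[g(X) \mid S = s] = (1-p_i)(1-p_j)\,g(s) + (p_i + p_j - 2p_i p_j)\,g(s+1) + p_i p_j\,g(s+2).
\end{equation*}
Replacing $(p_i,p_j)$ by $(\bar{p}_{ij},\bar{p}_{ij})$ changes only the quadratic term: $\bar{p}_{ij}^2 - p_i p_j = (p_i - p_j)^2/4$. A short manipulation then shows that the difference between the two conditional expectations equals $\tfrac{(p_i - p_j)^2}{4}\bigl[g(s+2) - 2g(s+1) + g(s)\bigr]$, which is non-negative by the discrete convexity hypothesis on $g$. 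Averaging over $S$ yields the claimed monotonicity, with equality iff $p_i = p_j$ (or the convexity gap vanishes on the support of $S$).

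To pass from a single smoothing step to the all-equal configuration, I would invoke Schur-concavity: the function $F(p_1,\ldots,p_n) := \mathbb{E}[g(X)]$ is symmetric, continuous, and — by the previous step, applied to the coordinates that are farthest apart — satisfies the pairwise averaging property characterizing Schur-concave functions. Since $(\bar{p},\ldots,\bar{p})$ is majorized by every vector with the same mean, Schur-concavity gives $F(\bar{p},\ldots,\bar{p}) \geq F(p_1,\ldots,p_n)$. Equivalently, one can argue directly by iterating the averaging on the coordinates attaining $\max_i p_i$ and $\min_i p_i$; a contraction argument on $\max_i p_i - \min_i p_i$ plus continuity of $F$ then drives the parameter vector to $(\bar{p},\ldots,\bar{p})$ while weakly monotonically increasing $F$.

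The main subtlety is the equality case. The single-step inequality is strict whenever $p_i \neq p_j$ \emph{and} the convexity gap $g(s+2) - 2g(s+1) + g(s)$ is positive on some $s$ with $\mathbb{P}(S = s) > 0$. Because the strict convexity assumption on $g$ ensures the bracketed term is positive for every $s \in \{0,\ldots,n-2\}$, and because the conditioning variable $S$ has full support on $\{0,\ldots,n-2\}$ whenever at least one $p_k \in (0,1)$ for $k \neq i,j$, any initial non-trivial smoothing step produces a strict increase. Hence equality in the final bound can hold only when every intermediate smoothing is trivial, which forces $p_1 = \cdots = p_n = \bar{p}$. Degenerate edge cases (all $p_k \in \{0,1\}$) can be handled separately by inspection.
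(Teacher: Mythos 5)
The paper never proves this lemma: it is imported verbatim from the literature (Hoeffding, 1956; Tang and Tang, 2023, Theorem 2.1(2)) and stated in the appendix of auxiliary results without argument, so there is no in-paper proof to compare against. Your blind proof is a correct, self-contained derivation, and it follows what is essentially Hoeffding's original route: a pairwise smoothing step showing that replacing $(p_i,p_j)$ by their common average weakly increases $\mathbb{E}[g(X)]$, followed by a majorization/limiting argument to reach the all-equal vector. Your conditional computation is right: since $p_i+p_j$ is preserved, all three coefficients change only through the product $p_ip_j$, and the difference of conditional expectations is exactly $\tfrac{(p_i-p_j)^2}{4}\bigl[g(s+2)-2g(s+1)+g(s)\bigr]\ge 0$. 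Two remarks on rigor. First, pairwise averaging does not reach $(\bar p,\dots,\bar p)$ in finitely many steps (consider $(0,0,1)$), so the limiting argument you sketch is genuinely needed; the cleanest way to make it precise is a potential function: averaging the coordinates attaining the max and the min decreases the variance $\sum_i (p_i-\bar p)^2$ by $(\max_i p_i-\min_i p_i)^2/2$, which is at least the fraction $1/(2n)$ of the current variance, so the iterates converge geometrically to the equal vector, and continuity of the polynomial $F(p_1,\dots,p_n)=\mathbb{E}[g(X)]$ finishes the argument. Second, your caveat about $S$ needing "full support" in the equality case is unnecessary: the statement assumes strict discrete convexity at every $s\in\{0,\dots,n-2\}$, so $\mathbb{E}_S\bigl[g(S+2)-2g(S+1)+g(S)\bigr]$ is the expectation of a strictly positive function and is strictly positive for any distribution of $S$; hence every smoothing step with $p_i\neq p_j$ is strict, and no separate treatment of degenerate cases (all $p_k\in\{0,1\}$) is required.
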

\begin{restatable}[\cite{Johnson_2006} Definition 1.2, \cite{HOGGAR1974concavity}] {lemma}{con}\label{lemma:con}
A random variable V taking values in $\mathbb{Z}_+$ is discrete log-concave if its probability
mass function $p_V (i) = P(V = i)$ forms a log-concave sequence. That is, $V$ is log-concave
if for every $i \ge 1$:
\begin{align}
    p_V(i)^2 \ge p_V(i-1) p_V(i+1)
\end{align}
Any Bernoulli random variable (i.e., only taking values in $\{0, 1\}$) is discrete log-concave.
Furthermore, any binomial distribution is discrete log-concave. In fact, any random variable $S = \sum_{i=1}^n X_i$, where $X_i$ are independent (not necessarily identically distributed) Bernoulli variables, is discrete log-concave. Notice then that, by definition $\frac{1}{p_V(i)}$, is discrete log-convex.
\end{restatable}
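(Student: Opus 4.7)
The lemma has three distinct assertions: (a) every Bernoulli PMF is discrete log-concave; (b) every binomial PMF is discrete log-concave; (c) the PMF of any finite sum $S=\sum_{i=1}^{n}X_i$ of independent (not necessarily identical) Bernoulli variables is discrete log-concave; and the trailing remark that $1/p_V$ is then discrete log-convex. I would prove (c) directly, since it immediately implies (a) (as a degenerate case $n=1$) and (b) (as the i.i.d.\ case), and then derive the log-convexity of $1/p_V$ as a one-line consequence.

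\textbf{Step 1: base cases.} For Bernoulli, the PMF is supported on $\{0,1\}$, so the only inequalities $p_V(i)^2\ge p_V(i-1)p_V(i+1)$ that are not vacuous involve a factor equal to zero on the right-hand side, which is trivially satisfied. This furnishes the base case.

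\textbf{Step 2: convolution preserves log-concavity.} The heart of the argument is the classical fact (Hoggar 1974, already cited as Lemma \ref{lemma:con} in the paper) that the convolution of two finitely supported log-concave sequences of non-negative reals is log-concave. I would invoke this in the following form: if $a=(a_i)_{i\in\mathbb{Z}_+}$ and $b=(b_i)_{i\in\mathbb{Z}_+}$ satisfy $a_i^2\ge a_{i-1}a_{i+1}$ and $b_i^2\ge b_{i-1}b_{i+1}$ for all $i\ge 1$, then $c_n\coloneqq\sum_{k=0}^{n}a_k b_{n-k}$ satisfies $c_n^2\ge c_{n-1}c_{n+1}$. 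The standard proof uses the identity
\begin{equation*}
	c_n^2-c_{n-1}c_{n+1}=\sum_{k<\ell}\bigl(a_k b_{n-k}a_\ell b_{n-\ell}-a_{k-1}b_{n-k+1}a_{\ell+1}b_{n-\ell-1}\bigr)+\text{diagonal terms},
\end{equation*}
and then groups paired terms by the ratios $a_k/a_{k-1}$ and $b_\ell/b_{\ell-1}$, whose monotonicity in $k$ and $\ell$ (equivalent to log-concavity of $a$ and $b$) yields non-negativity term by term. I would either cite this verbatim or sketch the pairing; the combinatorial grouping is the one technical point where care is required.

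\textbf{Step 3: induction and conclusion.} Given Steps 1 and 2, I would induct on $n$. The PMF of $S_1=X_1$ is log-concave by Step 1. Assuming the PMF of $S_n=\sum_{i=1}^{n}X_i$ is log-concave and noting that $S_{n+1}=S_n+X_{n+1}$ with $S_n\perp X_{n+1}$, the PMF of $S_{n+1}$ is the convolution of two log-concave PMFs and thus log-concave by Step 2. This proves (c), and specializing to i.i.d.\ Bernoullis with parameter $p$ recovers (b). Finally, if $p_V(i)^2\ge p_V(i-1)p_V(i+1)$ for all $i$ with $p_V(i)>0$, then by taking reciprocals one has $(1/p_V(i))^2\le (1/p_V(i-1))(1/p_V(i+1))$, which is exactly the discrete log-convexity of $1/p_V$ on the support of $V$; the concluding remark of the lemma follows.

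\textbf{Main obstacle.} The only non-trivial step is the convolution preservation in Step 2: writing $c_n^2-c_{n-1}c_{n+1}$ as a manifestly non-negative expression requires the cross-term pairing argument, and it is this combinatorial identity, rather than anything probabilistic, that does the real work. Everything else reduces to verifying degenerate inequalities (Step 1) or stacking convolutions (Step 3).
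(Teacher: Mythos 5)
The paper itself offers no proof of this lemma: it appears among the quoted auxiliary results, and the paper's ``proof'' consists of the citations to Johnson (2006) and Hoggar (1974). Your proposal reconstructs exactly the argument behind those citations --- Bernoulli base case, Hoggar's convolution theorem, induction on $n$, and taking reciprocals for the log-convexity remark --- so you are not taking a different route; you are supplying the standard proof that the paper outsources to the literature.

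There is, however, one genuine gap: Step 2 in the form you invoke it is false. Preservation of $c_n^2 \ge c_{n-1}c_{n+1}$ under convolution does \emph{not} follow from the inequalities $a_i^2\ge a_{i-1}a_{i+1}$ and $b_i^2\ge b_{i-1}b_{i+1}$ alone; one also needs the sequences to have no internal zeros (support equal to a contiguous interval of integers), which is part of Hoggar's hypotheses. Counterexample: $a=(1,0,0,1,0,\dots)$ and $b=(1,1,0,\dots)$ satisfy all the stated inequalities, yet $c=a*b=(1,1,0,1,1,0,\dots)$ violates $c_2^2\ge c_1c_3$ since $0<1$. This is also where your sketched pairing argument breaks: the ratios $a_k/a_{k-1}$ and $b_\ell/b_{\ell-1}$ are only defined, and only monotone, when the sequences are strictly positive on an interval. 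Your induction is salvageable because every sequence it convolves is the PMF of a sum of independent Bernoulli variables, whose support is a sumset of integer intervals and hence itself an interval on which the PMF is strictly positive; but this hypothesis must be stated and propagated through Steps 2 and 3 for the proof to be complete. The same caveat governs the closing remark: $1/p_V$ is discrete log-convex only on the support of $V$, where the reciprocals exist --- a restriction you correctly noted.
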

\begin{restatable}[\cite{Hill1999AdvancesIS}, Theorem 2 p.152, Remark 13 p.153, Remark 1 p.150] {lemma}{logcon} \label{lemma:logcon}
 Let $1 \leq \alpha<r \leq +\infty$ and let $q: \mathbb{Z} \rightarrow[0, +\infty]$ be $r$-concave (Definition $1$ p.150 \cite{Hill1999AdvancesIS}; we highlight that for Remark 1 p.150 \cite{Hill1999AdvancesIS} to be $\infty$-concave is equivalent to be discrete log-concave). Then, $\mathcal{J}^\alpha {q}$ is $(r-\alpha)$-concave, we assume $r-\alpha=+\infty$ when $r=+\infty$ and $r>\alpha$. Where the $\alpha$-fractional (tail) sum of a function $q: \mathbb{Z} \rightarrow [0,\infty]$ is defined for every $\alpha> 0$ by the formula:
 \begin{align}
     \mathcal{J}^\alpha {q}(n)=\sum_{k=0}^{\infty} \binom{\alpha+k-1}{k} q(n+k),
 \end{align}
 so that for a binomial pdf $p_{\textit{bin}}$, being $p_{\textit{bin}}$ discrete log-concave (see Lemma \ref{lemma:con}), follows that $\mathcal{J}^\alpha p_{\textit{bin}}(-n)$ is $\infty$-concave on $\mathbb{Z}$ for $\alpha \ge 1$.
\end{restatable}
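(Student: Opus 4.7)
The plan is to prove the lemma by reducing to the base case $\alpha = 1$ via a Pascal-type recursion on the weights, handling that base case through a discrete Pr\'ekopa--Leindler-type argument, and then extending to arbitrary real $\alpha \in [1, r)$ by interpolation.

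The first step would be to derive the fundamental telescoping identity
\[
\mathcal{J}^\alpha q(n) - \mathcal{J}^\alpha q(n+1) = \mathcal{J}^{\alpha-1} q(n),
\]
which follows from $\binom{\alpha+k-1}{k} = \binom{\alpha+k-2}{k} + \binom{\alpha+k-2}{k-1}$ after a single index shift in one of the sums. Inverting this relation says that the forward-difference operator maps $\mathcal{J}^\alpha$ to $\mathcal{J}^{\alpha-1}$; iterating it reduces any integer $\alpha$ to the base case $\alpha = 1$, where $\mathcal{J}^{1} q(n) = \sum_{k\ge 0} q(n+k)$ is simply the tail sum.

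The heart of the argument is the base case: if $q$ is $r$-concave, then $Q(n) := \mathcal{J}^{1} q(n)$ is $(r-1)$-concave. This is the discrete analogue of Pr\'ekopa's theorem, which states that integrating an $r$-concave density loses exactly one order of concavity. I would unpack $r$-concavity as the pointwise inequality $q(n)^{1/r} \geq \tfrac{1}{2}(q(n-1)^{1/r} + q(n+1)^{1/r})$ on consecutive triples (with the $r = \infty$ convention giving log-concavity $q(n)^2 \geq q(n-1)q(n+1)$). Then, writing $Q(n-1)\cdot Q(n+1)$ as the double sum $\sum_{j,k\ge 0} q(n-1+j)\, q(n+1+k)$, I would pair terms at symmetric indices and bound each pair by the corresponding midpoint value on the $Q(n)$ side using the pointwise $r$-concavity inequality. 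Reassembling through Minkowski/Jensen on sums of $1/r$-powers produces the weaker, $(r-1)$-concavity estimate for the aggregate.

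The main obstacle will be the base-case exchange between power-means of different orders: converting a sum of pointwise $r$-concavity inequalities into a single $(r-1)$-concavity inequality for $Q$ forces the exponent shift $1/r \to 1/(r-1)$, and this shift must be generated cleanly by Minkowski's inequality at the appropriate exponent together with careful treatment of boundary cases (values $0$ and $+\infty$, and the support of $q$). Extending from integer $\alpha$ to arbitrary $\alpha \in [1, r)$ can then be done either (i) by running the same pairing/Pr\'ekopa--Leindler argument with the generalised weights $k \mapsto \binom{\alpha+k-1}{k}$, which are themselves log-concave in $k$ whenever $\alpha \geq 1$, or (ii) by a limiting argument exploiting that $\alpha \mapsto \mathcal{J}^{\alpha} q(n)$ is continuous wherever the series converges and that $(r-\alpha)$-concavity is a closed condition under pointwise limits, so the property descends from a dense subset of $\alpha$-values.
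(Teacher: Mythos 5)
First, a framing remark: the paper never proves this lemma. It sits in the ``Auxiliary Lemmas'' appendix as a verbatim import from \cite{Hill1999AdvancesIS} (Theorem 2 together with the cited remarks), so your attempt can only be judged against the statement itself — and there it hits a fatal problem: the definition of $r$-concavity you adopt is the wrong one, and under it the claim you set out to prove is false. You posit $q(n)^{1/r}\ge\tfrac12\bigl(q(n-1)^{1/r}+q(n+1)^{1/r}\bigr)$. Take $q=\mathds{1}_{\{0,\dots,N\}}$: this satisfies your condition on every triple meeting its support, for every $r$, yet its tail sum $Q(n)=N+1-n$ is not $(r-1)$-concave in your sense when $r<2$, since $Q^{1/(r-1)}$ is then strictly convex (concretely, $N=2$, $r=3/2$: $Q(1)^{2}=4<\tfrac12\bigl(Q(0)^{2}+Q(2)^{2}\bigr)=5$). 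The general phenomenon is that under the convention ``$q^{1/r}$ concave'', summation \emph{raises} the concavity index (the tail integral of a concave function is $2$-concave), whereas the lemma asserts it drops by $\alpha$. The convention under which the cited theorem is true is the reciprocal one: $q$ is $r$-concave when $q^{-1/r}$ is convex on $\mathbb{Z}$ as an extended-real-valued function (whence the codomain $[0,+\infty]$); then larger $r$ is stronger, $\infty$-concavity is log-concavity, and the extremal example $q(n)=n^{-r}$ has $\mathcal{J}^{\alpha}q(n)\asymp n^{-(r-\alpha)}$, which also explains the standing assumption $\alpha<r$ as a convergence condition. Consequently, the step you yourself flag as the main obstacle — producing the exponent shift ``cleanly by Minkowski's inequality'' — is not a repairable technical gap: with your definition the target inequality is false, and with the correct definition that aggregation step is a discrete Borell--Brascamp--Lieb inequality, which is precisely the nontrivial content of the cited theorem and does not follow from Minkowski's inequality or from pairing terms of a double sum.

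Second, your passage from integer to arbitrary $\alpha\in[1,r)$ does not close. Alternative (ii) is a non-starter: iterating the base case (which in fact requires the semigroup identity $\mathcal{J}^{\alpha}=\mathcal{J}^{1}\circ\mathcal{J}^{\alpha-1}$, not the difference identity alone) establishes the property only at integer $\alpha$, and the integers are not dense in $[1,r)$; continuity in $\alpha$ plus closedness under pointwise limits cannot interpolate a property from a discrete parameter set. Alternative (i) is the right instinct, and it does rescue everything the paper actually uses, namely $r=+\infty$ (applied with $\alpha=1$ to get log-concavity of the binomial CDF): your Pascal telescoping identity is correct; your observation that $w_k=\binom{\alpha+k-1}{k}$ is log-concave in $k$ precisely when $\alpha\ge1$ is correct (the ratio $(\alpha+k)/(k+1)$ is non-increasing iff $\alpha\ge1$); $\mathcal{J}^{\alpha}q$ is the convolution of $q$ with the reflected weight sequence; and log-concavity of non-negative sequences is preserved under convolution (Hoggar's theorem \cite{HOGGAR1974concavity}, already invoked by the paper in Lemma~\ref{lemma:con}). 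Your diagonal pairing is also valid in this multiplicative case: grouping by the index sum $s$, the difference $Q(n)^2-Q(n-1)Q(n+1)$ reduces, diagonal by diagonal, to $q(n)q(s-n)-q(n-1)q(s-n+1)\ge0$, which is exactly log-concavity. So the $r=+\infty$ half of the lemma is recoverable from your outline; the finite-$r$ half, as written, is not.
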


\begin{restatable}[\citet{combes2014unimodal}, Lemma D.1]{lemma}{combprot}\label{lemma:window}
     Let $A \subset \mathbb{N}$, and $\tau \in \mathbb{N}$ fixed. Define $a(n)=$ $\sum_{t=n-\tau}^{n-1} \mathds{1}\{t \in A\}$. Then for all $T \in \mathbb{N}$ and $s \in \mathbb{N}$ we have the inequality:
     \begin{align}
       \sum_{n=1}^T \mathds{1}\{n \in A, a(n) \leq s\} \leq s\lceil T / \tau\rceil .  
     \end{align}
\end{restatable}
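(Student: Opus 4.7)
The plan is to prove the inequality by a partitioning argument: cover $\{1,\ldots,T\}$ by $K \coloneqq \lceil T/\tau \rceil$ consecutive blocks $B_k \coloneqq \{(k-1)\tau+1,\ldots,\min(k\tau,T)\}$ of length at most $\tau$, and then bound the contribution of each block separately. Since the $B_k$ are disjoint and cover $\{1,\ldots,T\}$, summing the per-block bounds immediately yields the claimed estimate $s\lceil T/\tau\rceil$.

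The heart of the argument is the per-block control. Fix a block $B_k$ and let $S_k \coloneqq \{n\in B_k : n\in A,\ a(n)\le s\}$; enumerate $S_k$ in increasing order as $v_1 < v_2 < \cdots$. Because $B_k$ has length at most $\tau$, we have $v_j - v_1 \le \tau - 1$ for every $j$, hence $v_1,\ldots,v_{j-1}$ all lie inside the window $\{v_j-\tau,\ldots,v_j-1\}$ over which $a(v_j)$ is computed. Since each $v_i\in A$, these $j-1$ indices alone contribute $j-1$ to $a(v_j)$. Combining with the membership constraint $a(v_j)\le s$ forces $j-1\le s$, so each block contributes at most $s+1$ qualifying indices by this direct counting.

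The main obstacle will be sharpening this per-block bound from $s+1$ down to the stated $s$. The natural way to remove the off-by-one is to align the partition boundaries with the structure of $A$: replace the naive blocks by a shifted partition in which every boundary point $k\tau$ is chosen so that the "first" qualifying index in $B_k$ already has at least one element of $A$ in its preceding window coming from $B_{k-1}$. Equivalently, one can argue that the earliest qualifying index of $B_k$ can be charged to the window crossing into $B_{k-1}$, so that only the subsequent $s$ indices of $S_k$ need to be counted against the block $B_k$ itself. This charging is the only delicate ingredient; once it is in place, the bound $|S_k|\le s$ follows from exactly the same windowing argument as above.

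With the per-block bound in hand, the conclusion is immediate:
\begin{equation*}
\sum_{n=1}^{T}\mathds{1}\{n\in A,\ a(n)\le s\} \;=\; \sum_{k=1}^{K} |S_k| \;\le\; \sum_{k=1}^{K} s \;=\; s\lceil T/\tau\rceil,
\end{equation*}
which is the desired inequality. No concentration or probabilistic tool is needed; the lemma is a purely deterministic combinatorial statement, and the only subtlety is the careful handling of the window boundary between consecutive blocks.
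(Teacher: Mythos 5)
Your first two steps are sound, and they are exactly the classical argument behind this lemma (which the paper does not actually prove itself: it is imported from \citet{combes2014unimodal} in the auxiliary-lemmas appendix): partition $\{1,\ldots,T\}$ into $\lceil T/\tau\rceil$ blocks of length at most $\tau$, and note that if $v_1<\cdots<v_j$ are qualifying indices in one block, then $v_1,\ldots,v_{j-1}$ all lie in the window of $v_j$, so $a(v_j)\ge j-1$. The step you defer — sharpening the per-block count from $s+1$ to $s$ by ``charging'' the earliest qualifying index of a block to the previous block — is a genuine gap, and no such argument can exist, because the inequality with the non-strict condition $a(n)\le s$ is false as stated. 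Take $\tau=3$, $s=1$, and $A=\{4k+1,\,4k+2 \,:\, k\ge 0\}$. Then every $n\in A$ qualifies: the window of $4k+1$ contains only $4k-2\in A$ (nothing for $k=0$), and the window of $4k+2$ contains only $4k+1$, so $a(n)\le 1$ on all of $A$. The left-hand side is thus roughly $T/2$ while the claimed bound is $\lceil T/3\rceil$; concretely, $T=6$ and $A=\{1,2,5,6\}$ give $4>2$. This example also shows precisely why your charging idea fails: the earliest qualifying index of a block qualifies \emph{because} the portion of its window that reaches into the previous block contains (almost) no element of $A$, so there is nothing to charge it to, and consecutive blocks can each contribute $s+1$ indices.

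What your per-block argument does prove correctly is the bound $(s+1)\lceil T/\tau\rceil$ under the condition $a(n)\le s$, or equivalently the stated bound $s\lceil T/\tau\rceil$ under the strict condition $a(n)<s$ (since then $j-1\le a(v_j)<s$ forces $j\le s$ per block). The strict-inequality form is the one appearing in the sliding-window literature, and the discrepancy here is an off-by-one in the transcription of the cited lemma rather than something a cleverer partition can repair. Note also that nothing downstream in the paper is harmed: where this lemma is invoked (bounding $\sum_{t}\mathds{1}\{I_t=i,\,N_{i,t,\tau}\le\omega\}$ in the proofs of Lemmas~\ref{lem:RD} and~\ref{lem:RDg}), one can split on $N_{i,t,\tau}<\omega$ versus $N_{i,t,\tau}\ge\omega$ instead and apply the strict version, changing only constants. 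So the right conclusion of your analysis is to stop at the honest $(s+1)$ per-block bound and flag the statement, not to promise a boundary-charging argument that the counterexample rules out.
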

\begin{restatable}[\cite{fiandri2024rising}, Lemma 4.1]{lemma} {lbmf}\label{lem:lbmf}
   Let $T \in \mathbb{N}$ be the time horizon, $\pi$ a fixed policy, and $\bm{\mu}$ a rising rested bandit. Then, it holds that:
\begin{align}
    R_{\bm{\mu}}(\pi,T) \ge \sum_{i \neq i^{\star}(T)} \overline{\Delta}_i(T,T)\mathbb{E}_{\bm{\mu}}^\pi[N_{i,T}],
\end{align}

\end{restatable}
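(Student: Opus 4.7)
The plan is to start from the definition of expected cumulative regret and express each arm's contribution using the average expected reward $\overline{\mu}_i(n) = \tfrac{1}{n}\sum_{l=1}^n \mu_i(l)$, then invoke a non-decreasing property inherited by $\overline{\mu}_i$ from Assumption~\ref{ass:nonDecreasing} to replace the random quantity $\overline{\mu}_i(N_{i,T})$ by its value at the deterministic horizon $T$. Concretely, I would first rewrite
\begin{align*}
R_{\bm{\mu}}(\pi,T) = T\,\overline{\mu}_{i^*(T)}(T) - \mathbb{E}\bigg[\sum_{i\in\dsb{K}} N_{i,T}\,\overline{\mu}_i(N_{i,T})\bigg] = \mathbb{E}\bigg[\sum_{i\in\dsb{K}} N_{i,T}\bigl(\overline{\mu}_{i^*(T)}(T) - \overline{\mu}_i(N_{i,T})\bigr)\bigg],
\end{align*}
where the first equality uses $\sum_{j=1}^{n} \mu_i(j) = n\,\overline{\mu}_i(n)$ and the second distributes the deterministic leading term across arms via $T = \sum_{i} N_{i,T}$.

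The second step establishes the key auxiliary fact: the average expected reward $\overline{\mu}_i(n)$ is itself non-decreasing in $n$. This follows from the identity $\overline{\mu}_i(n+1) - \overline{\mu}_i(n) = \tfrac{\mu_i(n+1)-\overline{\mu}_i(n)}{n+1}$ and Assumption~\ref{ass:nonDecreasing}, which gives $\mu_i(n+1) \geq \mu_i(l)$ for every $l\leq n+1$, hence $\mu_i(n+1) \geq \overline{\mu}_i(n)$. Armed with this monotonicity, I would split the outer sum in the displayed expression into the optimal-arm contribution $i=i^*(T)$ and the suboptimal contributions $i\neq i^*(T)$, and lower bound each separately. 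For $i=i^*(T)$, since $N_{i^*(T),T}\leq T$ almost surely and $\overline{\mu}_{i^*(T)}$ is non-decreasing, the bracket $\overline{\mu}_{i^*(T)}(T)-\overline{\mu}_{i^*(T)}(N_{i^*(T),T})$ is a.s.~non-negative and can be discarded. For $i\neq i^*(T)$, again $N_{i,T}\leq T$ and non-decreasingness of $\overline{\mu}_i$ yield $\overline{\mu}_i(N_{i,T})\leq \overline{\mu}_i(T)$, so that $\overline{\mu}_{i^*(T)}(T)-\overline{\mu}_i(N_{i,T})\geq \overline{\mu}_{i^*(T)}(T)-\overline{\mu}_i(T) = \overline{\Delta}_i(T,T)$; the $\max\{0,\cdot\}$ in the definition of $\overline{\Delta}_i(T,T)$ is superfluous here because $i^*(T)\in\argmax_{i\in\dsb{K}} \overline{\mu}_i(T)$ by definition. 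Pulling the deterministic factor $\overline{\Delta}_i(T,T)$ outside the expectation delivers $R_{\bm{\mu}}(\pi,T)\geq \sum_{i\neq i^*(T)} \overline{\Delta}_i(T,T)\,\mathbb{E}_{\bm{\mu}}^{\pi}[N_{i,T}]$.

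The main obstacle is conceptual rather than technical: one must resist the temptation to compare individual pulls $\mu_{i^*(T)}(j)$ against $\mu_i(l)$, as is done in the upper-bound proof of Lemma~\ref{lemma:wald}, since that approach relies on the monotonicity of $\mu_{i^*(T)}$ in the wrong direction for a lower bound. Working uniformly at the level of cumulative averages is what makes the inheritance of non-decreasingness from $\mu_i$ to $\overline{\mu}_i$ directly useful, because the comparison one actually needs is between $\overline{\mu}_i$ evaluated at a random argument $N_{i,T}\leq T$ and its value at the horizon $T$, which is exactly the regime where monotonicity bites. Once this reformulation is in place, no probabilistic or concentration machinery is required and the result follows by bookkeeping alone.
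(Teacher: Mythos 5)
Your proof is correct. One point of context first: the paper itself contains no proof of this statement --- Lemma~\ref{lem:lbmf} is imported verbatim from \citet{fiandri2024rising} (Lemma 4.1 there) and is listed among the auxiliary lemmas with only the citation, so there is no in-paper argument to compare against. Your derivation is therefore a genuine, self-contained contribution rather than a reconstruction, and every step checks out: the rested-bandit accounting $\sum_{t=1}^T \mu_{I_t}(N_{I_t,t}) = \sum_{i} N_{i,T}\,\overline{\mu}_i(N_{i,T})$ (with the convention that the term vanishes when $N_{i,T}=0$); the redistribution of $T\,\overline{\mu}_{i^*(T)}(T)$ across arms via $T=\sum_i N_{i,T}$; the identity $\overline{\mu}_i(n+1)-\overline{\mu}_i(n) = \bigl(\mu_i(n+1)-\overline{\mu}_i(n)\bigr)/(n+1)$ together with Assumption~\ref{ass:nonDecreasing}, which gives monotonicity of the running average; discarding the almost-surely non-negative optimal-arm term; and the observation that the $\max\{0,\cdot\}$ in $\overline{\Delta}_i(T,T)$ is inactive because $i^*(T)$ maximizes $\overline{\mu}_i(T)$.

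The closest in-paper analogue is the proof of the upper-bound companion, Lemma~\ref{lemma:wald}, which works pull-by-pull: it bounds $\mu_{i^*(T)}(j)\le\mu_{i^*(T)}(T)$ on the optimal arm's unplayed tail and $\mu_i(j)\ge\mu_i(1)$ on suboptimal pulls, which is why that bound features the coarser gap $\Delta_i(T,1)$. Your choice to work instead at the level of cumulative averages is exactly what the lower-bound direction requires, since the target gap $\overline{\Delta}_i(T,T)$ is itself defined through averages: monotonicity of $\overline{\mu}_i$ lets you compare the random-argument average $\overline{\mu}_i(N_{i,T})$ against its horizon value $\overline{\mu}_i(T)$ in the favorable direction, with no concentration or change-of-measure machinery needed. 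The same conclusion could also be reached from the pull-level decomposition (using that, for a non-decreasing sequence, the average of the last $T-N_{i^*(T),T}$ terms dominates the full average $\overline{\mu}_{i^*(T)}(T)$, and $\sum_{j\le N_{i,T}}\mu_i(j)\le N_{i,T}\,\overline{\mu}_i(T)$), but your formulation is cleaner and isolates the single structural fact doing the work: the rising assumption is inherited by the averages.
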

\begin{restatable}[Bretagnolle-Huber inequality and List of KL-Divergences by \citet{gil2013renyi}]{lemma} {gil}\label{lem:gil}
Let $P$ and $Q$ be two probability distributions on a measurable space $(\Omega, \mathcal{F})$. Recall that the total variation between $P$ and $Q$ is defined by
$$
\delta_{TV}(P, Q)=\sup _{A \in \mathcal{F}}\{|P(A)-Q(A)|\}
$$

The Kullback-Leibler divergence is defined as follows:
$$
D_{\mathrm{KL}}(P \| Q)= \begin{cases}\int_{\Omega} \log \left(\frac{d P}{d Q}\right) d P & \text { if } P \ll Q \\ +\infty & \text { otherwise }\end{cases}
$$

In the above, the notation $P \ll Q$ stands for absolute continuity of $P$ with respect to $Q$, and $\frac{d P}{d Q}$ stands for the Radon-Nikodym derivative of $P$ with respect to $Q$.

The Bretagnolle-Huber inequality states:
$$
\delta_{TV}(P, Q) \leq \sqrt{1-\exp \left(-D_{\mathrm{KL}}(P \| Q)\right)} \leq 1-\frac{1}{2} \exp \left(-D_{\mathrm{KL}}(P \| Q)\right).
$$
A comprehensive list of KL-Divergences is provided in Figure \ref{fig:divergenze}. 
In Figure \ref{fig:divergenze}, $\Gamma(x)$, $\psi(x)$, and $B(x)$ denote the Gamma, Digamma, and the multivariate Beta functions, respectively. Also, $\gamma \approx  0.5772$ is the Euler-Mascheroni constant.
\begin{figure}
\centering
\begin{tabular}{|c|c|}
\hline Name & $D_{KL}\left(f_i \| f_j\right)$ \\
\hline Beta & 
$\begin{aligned}
& \log \frac{B\left(a_j, b_j\right)}{B\left(a_i, b_i\right)}+\psi\left(a_i\right)\left(a_i-a_j\right)+\psi\left(b_i\right)\left(b_i-b_j\right) \\
& +\left[a_j+b_j-\left(a_i+b_i\right)\right] \psi\left(a_i+b_i\right)
\end{aligned}$ \\
\hline Chi & $\frac{1}{2} \psi\left(k_i / 2\right)\left(k_i-k_j\right)+\log \left[\left(\frac{\sigma_j}{\sigma_i}\right)^{k_j} \frac{\Gamma\left(k_j / 2\right)}{\Gamma\left(k_i / 2\right)}\right]+\frac{k_i}{2 \sigma_j^2}\left(\sigma_i^2-\sigma_j^2\right)$ \\
\hline $\chi^2$ & $\log \frac{\Gamma\left(d_j / 2\right)}{\Gamma\left(d_i / 2\right)}+\frac{d_i-d_j}{2} \psi\left(d_i / 2\right)$ \\
\hline Cramér & $\frac{\theta_i+\theta_j}{\theta_i-\theta_j} \log \frac{\theta_i}{\theta_j}-2$ \\

\hline Dirichlet & $\log \frac{B\left(\boldsymbol{a}_j\right)}{B\left(\boldsymbol{a}_i\right)}+\sum_{k=1}^d\left[a_{i_k}-a_{j_k}\right]\left[\psi\left(a_{i_k}\right)-\psi\left(\sum_{k=1}^d a_{i_k}\right)\right]$ \\
\hline Exponential & $\log \frac{\lambda_i}{\lambda_j}+\frac{\lambda_j-\lambda_i}{\lambda_i}$ \\
\hline Gamma & $\left(\frac{\theta_i-\theta_j}{\theta_j}\right) k_i+\log \left(\frac{\Gamma\left(k_j\right) \theta_j^{k_j}}{\Gamma\left(k_i\right) \theta_i^{k_i}}\right)+\left(k_i-k_j\right)\left(\log \theta_i+\psi\left(k_i\right)\right)$ \\
\hline \begin{tabular}{l}
Multivariate \\
Gaussian
\end{tabular} & $\frac{1}{2}\left(\log \frac{\left|\Sigma_j\right|}{\left|\Sigma_i\right|}+\operatorname{tr}\left(\Sigma_j^{-1} \Sigma_i\right)\right)+\frac{1}{2}\left[\left(\boldsymbol{\mu}_i-\boldsymbol{\mu}_j\right)^{\prime} \Sigma_j^{-1}\left(\boldsymbol{\mu}_i-\boldsymbol{\mu}_j\right)-n\right]$ \\
\hline  Univariate Gaussian & $\frac{1}{2 \sigma_j^2}\left[\left(\mu_i-\mu_j\right)^2+\sigma_i^2-\sigma_j^2\right]+\log \frac{\sigma_j}{\sigma_i}$ \\
\hline
Special Bivariate Gaussian & $\frac{1}{2} \log \left(\frac{1-\rho_j^2}{1-\rho_i^2}\right) + \frac{\rho_j^2 - \rho_j \rho_i}{1-\rho_j^2}$ \\
\hline
General Gumbel & $\log \frac{\beta_j}{\beta_i} + \gamma\left(\frac{\beta_i}{\beta_j} - 1\right) + e^{\left(\mu_j - \mu_i\right) / \beta_j} \Gamma\left(\frac{\beta_i}{\beta_j} + 1\right) - 1$ \\
\hline
Half-Normal & $\log \left(\frac{\sigma_j}{\sigma_i}\right) + \frac{\sigma_i^2 - \sigma_j^2}{2 \sigma_j^2}$ \\
\hline
Inverse Gaussian & $\frac{1}{2}\left(\frac{\lambda_j}{\lambda_i} + \log \left(\frac{\lambda_i}{\lambda_j}\right) + \frac{\lambda_j\left(\mu_i - \mu_j\right)^2}{\mu_i \mu_j^2} - 1\right)$ \\
\hline
Laplace & $\log \frac{\lambda_j}{\lambda_i} + \frac{\left|\theta_i - \theta_j\right|}{\lambda_j} + \frac{\lambda_i}{\lambda_j} \exp \left(-\frac{\left|\theta_i - \theta_j\right|}{\lambda_i}\right) - 1$ \\
\hline
\begin{tabular}{l}
Lévy \\ Equal \\ Supports \\ ($\mu_i = \mu_j$)
\end{tabular} & $\frac{1}{2} \log \left(\frac{c_i}{c_j}\right) + \frac{c_j - c_i}{2 c_i}$ \\
\hline
Log-normal & $\frac{1}{2 \sigma_j^2}\left[\left(\mu_i - \mu_j\right)^2 + \sigma_i^2 - \sigma_j^2\right] + \log \frac{\sigma_j}{\sigma_i}$ \\
\hline
Maxwell Boltzmann & $3 \log \left(\frac{\sigma_j}{\sigma_i}\right) + \frac{3\left(\sigma_i^2 - \sigma_j^2\right)}{2 \sigma_j^2}$ \\
\hline
Pareto & $\log \left(\frac{m_i}{m_j}\right)^{a_j} + \log \frac{a_i}{a_j} + \frac{a_j - a_i}{a_i}$, for $m_i \geq m_j$ and $\infty$ otherwise. \\
\hline
Rayleigh & $2 \log \left(\frac{\sigma_j}{\sigma_i}\right) + \frac{\sigma_i^2 - \sigma_j^2}{\sigma_j^2}$ \\
\hline
Uniform & $\log \frac{b_j - a_j}{b_i - a_i}$ for $\left(a_i, b_i\right) \subseteq \left(a_j, b_j\right)$ and $\infty$ otherwise. \\
\hline
\begin{tabular}{l}
General \\ Weibull
\end{tabular} & $\log \left(\frac{k_i}{k_j}\left[\frac{\lambda_j}{\lambda_i}\right]^{k_j}\right) + \gamma \frac{k_j - k_i}{k_i} + \left(\frac{\lambda_i}{\lambda_j}\right)^{k_j} \Gamma\left(1 + \frac{k_j}{k_i}\right) - 1$ \\
\hline
\end{tabular}
    \caption{KL Divergences}
    \label{fig:divergenze}
\end{figure}
\end{restatable}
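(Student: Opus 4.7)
The plan is to establish the Bretagnolle-Huber inequality by chaining three classical ingredients: the identities expressing total variation through $\min$ and $\max$ of densities, a Cauchy-Schwarz bound on the Bhattacharyya coefficient $BC(P,Q) \coloneqq \int \sqrt{pq}\, d\mu$, and Jensen's inequality connecting $BC$ to $D_{\mathrm{KL}}$. The list of closed-form KL-divergences in Figure~\ref{fig:divergenze} is independent of the inequality: each entry follows from direct substitution of the relevant density into the definition of $D_{\mathrm{KL}}$, so these are standalone computations that do not share a unified proof.

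I would begin by reducing to the dominated case: if $P \not\ll Q$, then $D_{\mathrm{KL}}(P\|Q) = +\infty$ and both inequalities hold trivially, so it suffices to assume $P, Q$ admit densities $p, q$ with respect to a common dominating measure $\mu$. Then two facts are needed: the standard identities
\begin{equation*}
1 - \delta_{TV}(P,Q) = \int (p \wedge q)\, d\mu, \qquad 1 + \delta_{TV}(P,Q) = \int (p \vee q)\, d\mu,
\end{equation*}
together with the pointwise algebraic identity $pq = (p \wedge q)(p \vee q)$, which lets me decompose $\sqrt{pq} = \sqrt{p \wedge q}\cdot \sqrt{p \vee q}$.

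Applying Cauchy-Schwarz to this decomposition yields
\begin{equation*}
BC(P,Q)^2 \le \int (p \wedge q)\, d\mu \cdot \int (p \vee q)\, d\mu = (1 - \delta_{TV}(P,Q))(1 + \delta_{TV}(P,Q)) = 1 - \delta_{TV}(P,Q)^2.
\end{equation*}
On the other side, Jensen's inequality applied to the concave logarithm under the measure $P$ gives
\begin{equation*}
\log BC(P,Q) = \log \mathbb{E}_P\!\left[\sqrt{q/p}\right] \ge \mathbb{E}_P\!\left[\log \sqrt{q/p}\right] = -\tfrac{1}{2} D_{\mathrm{KL}}(P \| Q),
\end{equation*}
hence $BC(P,Q)^2 \ge \exp(-D_{\mathrm{KL}}(P\|Q))$. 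Chaining the two bounds produces $\delta_{TV}(P,Q)^2 \le 1 - \exp(-D_{\mathrm{KL}}(P\|Q))$, and taking square roots proves the first claimed inequality. The second inequality then reduces to the elementary estimate $\sqrt{1-x} \le 1 - x/2$ for $x \in [0,1]$, which is verified immediately by squaring both sides.

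The main subtlety is choosing the right Cauchy-Schwarz decomposition: a naive application of Cauchy-Schwarz to $BC$ produces bounds that do not directly collapse to $1 - \delta_{TV}^2$, and the correct move is to exploit $\sqrt{pq} = \sqrt{p \wedge q}\cdot\sqrt{p\vee q}$ so that the two factors pair exactly with the $\min$- and $\max$-integrals appearing in the total-variation identities. Once that decomposition is in place, the rest of the argument is routine.
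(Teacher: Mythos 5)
Your derivation of the Bretagnolle--Huber inequality is correct: the reduction to the dominated case, the identities $1-\delta_{TV}(P,Q)=\int (p\wedge q)\,d\mu$ and $1+\delta_{TV}(P,Q)=\int (p\vee q)\,d\mu$, the Cauchy--Schwarz step exploiting the pointwise factorization $pq=(p\wedge q)(p\vee q)$, and the Jensen bound $\log BC(P,Q)\ge -\tfrac12 D_{\mathrm{KL}}(P\|Q)$ chain exactly as you claim, and the elementary estimate $\sqrt{1-x}\le 1-\tfrac{x}{2}$ on $[0,1]$ delivers the second inequality. Be aware, though, that the paper offers no proof of this lemma to compare against: it appears in the auxiliary-lemmas appendix purely as a citation of known results (the inequality is classical and the divergence table is reproduced from the cited reference), so your contribution is a self-contained derivation where the paper simply defers to the literature. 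Your route is the standard Bhattacharyya-coefficient argument, and it has the mild advantage of producing the sharper $\sqrt{1-\exp(-D_{\mathrm{KL}}(P\|Q))}$ form directly rather than only the weaker $1-\tfrac12\exp(-D_{\mathrm{KL}}(P\|Q))$ bound. The one piece you do not establish is the table of closed-form KL divergences in Figure~\ref{fig:divergenze}; you are right that each entry is a routine substitution of densities into the definition, and the paper likewise only cites them, but strictly speaking your proposal proves the inequality and leaves the table to the cited source rather than verifying the full statement.
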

\begin{restatable}[Theorem $4.2.3$, Example $4.2.4$ \citet{roch2024modern}]{lemma}{bborder}\label{lem:bborder}
   Let $F_{n,p}$ be the CDF of a $Bin(n,p)$ distributed Random Variable, then holds for $m\leq n$ and $q\leq p$:
\begin{align}
   F_{n,p}(x)\leq F_{m,q}(x)
\end{align}
for all $x$.
\end{restatable}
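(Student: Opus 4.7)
\textbf{Proof Plan for Lemma~\ref{lem:bborder}.} The plan is to establish the stochastic domination $\mathrm{Bin}(n,p) \geq_{\mathrm{st}} \mathrm{Bin}(m,q)$ whenever $m \leq n$ and $q \leq p$, via an explicit coupling argument, and then convert this to the desired CDF inequality. Recall that $X \geq_{\mathrm{st}} Y$ is equivalent to $\mathbb{P}(X \leq x) \leq \mathbb{P}(Y \leq x)$ for all $x$, which is exactly the statement we need.

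First, I would construct the coupling. Let $U_1, U_2, \ldots, U_n$ be i.i.d.\ random variables uniformly distributed on $[0,1]$, all defined on a common probability space. For each $i \in \dsb{n}$ set
\begin{align*}
X_i \coloneqq \mathds{1}\{U_i \leq p\}, \qquad Y_i \coloneqq \mathds{1}\{U_i \leq q\}.
\end{align*}
By construction, $X_i \sim \mathrm{Bernoulli}(p)$ and $Y_i \sim \mathrm{Bernoulli}(q)$, and since $q \leq p$ we have $\{U_i \leq q\} \subseteq \{U_i \leq p\}$, hence $Y_i \leq X_i$ almost surely.

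Second, I would combine this pointwise dominance with the condition $m \leq n$. Define $X \coloneqq \sum_{i=1}^n X_i \sim \mathrm{Bin}(n,p)$ and $Y \coloneqq \sum_{i=1}^m Y_i \sim \mathrm{Bin}(m,q)$. Since the $X_i$ are non-negative and $m \leq n$, almost surely
\begin{align*}
Y = \sum_{i=1}^m Y_i \;\leq\; \sum_{i=1}^m X_i \;\leq\; \sum_{i=1}^n X_i = X.
\end{align*}
Therefore, for every $x \in \mathbb{R}$, the event $\{X \leq x\}$ implies $\{Y \leq x\}$ under the coupling, giving $\mathbb{P}(X \leq x) \leq \mathbb{P}(Y \leq x)$. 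Translating into CDFs yields $F_{n,p}(x) \leq F_{m,q}(x)$, which is the claim.

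There is essentially no obstacle: the argument is a textbook coupling and the only subtlety is that we must use \emph{the same} uniform random variables $U_i$ to define both $X_i$ and $Y_i$, so that $Y_i \leq X_i$ holds pathwise rather than merely in distribution. An equivalent, perhaps more elementary, route would be to decompose the result into two separate monotonicities—monotonicity of $F_{n,\cdot}(x)$ in $p$ (with $n$ fixed) and monotonicity of $F_{\cdot,p}(x)$ in $n$ (with $p$ fixed)—and chain them through the intermediate distribution $\mathrm{Bin}(n,q)$, but the single coupling above handles both at once more compactly.
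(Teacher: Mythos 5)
Your coupling proof is correct, and it is essentially the same argument as the source the paper cites for this lemma (\citet{roch2024modern}, where Theorem 4.2.3 links stochastic domination to monotone couplings and Example 4.2.4 constructs exactly this uniform-variable coupling for binomials); the paper itself gives no independent proof and simply defers to that reference. No gaps: the pathwise dominance $Y \leq X$ immediately yields $\{X \leq x\} \subseteq \{Y \leq x\}$ and hence the CDF inequality for all $x$.
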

\begin{restatable}[Beta and Normal Ordering]{lemma}{nbord}\label{lem:nbord}
\phantom{A}
\begin{enumerate}[leftmargin=*]
    \item A $\mathcal{N}\left(m, \sigma^2\right)$ distributed r.v. ~(i.e., a Gaussian random variable with mean $m$ and variance $\sigma^2$ ) is stochastically dominated by $\mathcal{N}\left(m^{\prime}, \sigma^2\right)$ distributed r.v.~if $m^{\prime} \geq m$.
    \item A $Beta(\alpha, \beta)$ random variable
is stochastically dominated by $Beta(\alpha', \beta')$ if $\alpha'\ge \alpha$ and $\beta'\leq \beta$.
\end{enumerate} 
\end{restatable}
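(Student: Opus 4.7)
\textbf{Proof proposal for Lemma~\ref{lem:nbord}.} The plan is to treat the two cases separately. The Gaussian part will follow from a one-line coupling argument, while the Beta part will be proved by verifying the (stronger) monotone likelihood ratio order and then invoking the standard implication MLR $\Rightarrow$ stochastic dominance.

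For part (1), given $X \sim \mathcal{N}(m,\sigma^2)$, set $Y \coloneqq X + (m'-m)$. Then $Y \sim \mathcal{N}(m',\sigma^2)$ and, since $m' \geq m$, we have $Y \geq X$ pointwise, which immediately yields $\mathbb{P}(Y > t) \geq \mathbb{P}(X > t)$ for every $t \in \mathbb{R}$. Equivalently, one can write the CDFs as $\Phi((t-m)/\sigma)$ and $\Phi((t-m')/\sigma)$ and use monotonicity of $\Phi$. Either argument is a couple of lines.

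For part (2), let $f_{\alpha,\beta}(x) = \frac{1}{B(\alpha,\beta)} x^{\alpha-1}(1-x)^{\beta-1}$ denote the Beta density on $(0,1)$. I would compute the likelihood ratio
\begin{align*}
    L(x) \coloneqq \frac{f_{\alpha',\beta'}(x)}{f_{\alpha,\beta}(x)} = \frac{B(\alpha,\beta)}{B(\alpha',\beta')}\, x^{\alpha'-\alpha}(1-x)^{\beta'-\beta},
\end{align*}
and show it is non-decreasing in $x$ on $(0,1)$ under the hypotheses $\alpha' \geq \alpha$ and $\beta' \leq \beta$. This reduces to differentiating the logarithm: $\frac{d}{dx}\log L(x) = \frac{\alpha'-\alpha}{x} - \frac{\beta'-\beta}{1-x} = \frac{\alpha'-\alpha}{x} + \frac{\beta-\beta'}{1-x} \geq 0$, since both summands are non-negative by assumption. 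Hence $X \sim \text{Beta}(\alpha,\beta)$ and $Y \sim \text{Beta}(\alpha',\beta')$ are ordered in the monotone likelihood ratio sense, written $X \leq_{\mathrm{lr}} Y$.

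The final step is the classical implication $\leq_{\mathrm{lr}} \Rightarrow \leq_{\mathrm{st}}$: if $L$ is non-decreasing, then for any threshold $t$ one shows $\mathbb{P}(Y > t)\,\mathbb{P}(X \leq t) \geq \mathbb{P}(X > t)\,\mathbb{P}(Y \leq t)$ by splitting the double integral $\int_{x \leq t < y} (f_{\alpha',\beta'}(y)f_{\alpha,\beta}(x) - f_{\alpha',\beta'}(x)f_{\alpha,\beta}(y))\,\mathrm{d}x\,\mathrm{d}y \geq 0$, which rearranges to $\mathbb{P}(Y > t) \geq \mathbb{P}(X > t)$. I do not expect any real obstacle here: the only point requiring minor care is handling the endpoints $x \in \{0,1\}$ (on a null set) and the case $\alpha' = \alpha,\ \beta' = \beta$ (trivial). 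The proof is essentially a two-line coupling for the Gaussian case plus a textbook MLR check for the Beta case.
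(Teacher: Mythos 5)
Your proof is correct, and it is worth noting where it coincides with and where it departs from the paper's own argument. The paper handles both cases with a single template: it shows the likelihood ratio $f_{m',\sigma}/f_{m,\sigma}$ (resp.\ $f_{\alpha',\beta'}/f_{\alpha,\beta}$) is non-decreasing, and then integrates the resulting inequality $f_{m',\sigma}(x_0)f_{m,\sigma}(x_1)\le f_{m,\sigma}(x_0)f_{m',\sigma}(x_1)$ for $x_0\le x_1$ twice — once over $x_0$ below $x_1$ and once over $x_1$ above $x_0$ — so as to sandwich the density ratio between the CDF ratio and the survival-function ratio, from which $F_{m,\sigma}(x)\ge F_{m',\sigma}(x)$ for all $x$ follows. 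Your Beta argument rests on the same MLR observation (your log-derivative computation matches the paper's monotone ratio $c\,x^{\alpha'-\alpha}(1-x)^{\beta'-\beta}$) but closes with a different, equally standard step: the double-integral splitting over $\{x\le t<y\}$, which yields $\mathbb{P}(Y>t)\,\mathbb{P}(X\le t)\ge \mathbb{P}(X>t)\,\mathbb{P}(Y\le t)$ and hence $\mathbb{P}(Y>t)\ge \mathbb{P}(X>t)$ directly. Where you genuinely depart is part (1): the coupling $Y=X+(m'-m)$ (equivalently, monotonicity of $\Phi$) exploits the location-family structure of $\{\mathcal{N}(m,\sigma^2)\}_{m}$ and dispatches the Gaussian case in two lines with no ratio computation at all, whereas the paper runs the full MLR machinery there too. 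Your route buys brevity and isolates exactly what the Gaussian claim needs (a shift); the paper's buys uniformity, one argument covering both families. The minor caveats you flag — the endpoints of $(0,1)$ being a null set and the trivial case $\alpha'=\alpha$, $\beta'=\beta$ — are indeed the only loose ends, and they are handled exactly as you indicate.
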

These lemmas are informally stated in \citet{agrawal2017near}, and though intuitive we had difficulties to find formal proofs, so we provide one here.
\begin{proof}
    Let us consider the ratio of the pdfs of the normal distributed Random Variables:
    \begin{align}
        \frac{f_{m',\sigma}(x)}{f_{m,\sigma}(x)}=\exp{\left(\frac{-(x-m')^2+(x-m)^2}{2\sigma^2}\right)}=\exp{\left(\frac{2x(m'-m)-m'^2+m^2}{2\sigma^2}\right)},
    \end{align}
   that is increasing in $x\in\mathbb{R}$. Similarly let us consider the ratio of the pdfs of two beta distributed random variables:
   \begin{align}
       \frac{f_{\alpha',\beta'}(x)}{f_{\alpha,\beta}(x)}=c_{\alpha,\alpha',\beta,\beta'}x^{(\alpha'-\alpha)}(1-x)^{\beta'-\beta},
   \end{align}
   that is increasing in $x\in (0,1)$, as $c_{\alpha,\alpha',\beta,\beta'}$ is a constant independent from $x$. So for $x_0\leq x_1$ for the ratios hold:
   \begin{align}
   \begin{cases}
       \frac{f_{m',\sigma}(x_0)}{f_{m,\sigma}(x_0)}\leq\frac{f_{m',\sigma}(x_1)}{f_{m,\sigma}(x_1)}\\
       \\
      f_{m',\sigma}(x_0) f_{m,\sigma}(x_1)\leq f_{m,\sigma}(x_0)f_{m',\sigma}(x_1)
   \end{cases}  ,
   \end{align}
   and
   \begin{align}
   \begin{cases}
      \frac{f_{\alpha',\beta'}(x_0)}{f_{\alpha,\beta}(x_0)}\leq\frac{f_{\alpha',\beta'}(x_1)}{f_{\alpha,\beta}(x_1)}.\\
      \\
      f_{\alpha',\beta'}(x_0)f_{\alpha,\beta}(x_1)\leq f_{\alpha,\beta}(x_0)f_{\alpha',\beta'}(x_1)
   \end{cases}   .
   \end{align}
   Let us first tackle the normal distribute random variables.
   Integrating with respect to $x_0$ up to $x_1$ we obtain:
   \begin{align}
   f_{m,\sigma}(x_1)\int_{-\infty}^{x_1}f_{m',\sigma}(x_0) dx_{0} &\leq f_{m',\sigma}(x_1)\int_{-\infty}^{x_1}f_{m,\sigma}(x_0)dx_0\\
   f_{m,\sigma}(x_1)F_{m',\sigma}(x_1)&\leq f_{m',\sigma}(x_1)F_{m,\sigma}(x_1),
   \end{align}
   so for any $x\in \mathbb{R}$ it holds:
   \begin{align}
       \frac{F_{m',\sigma}(x)}{F_{m,\sigma}(x)}\leq \frac{f_{m',\sigma}(x)}{f_{m,\sigma}(x)}.
   \end{align}
   Integrating with respect to $x_1$ up to $x_0$ we obtain:
   \begin{align}
       f_{m',\sigma}(x_0) \int_{x_0}^{+\infty} f_{m,\sigma}(x_1)dx_1&\leq f_{m,\sigma}(x_0) \int_{x_0}^{+\infty}f_{m',\sigma}(x_1) dx_1\\
        f_{m',\sigma}(x_0)(1-F_{m,\sigma}(x_0))&\leq f_{m,\sigma}(x_0)(1-F_{m',\sigma}(x_0))
   \end{align}
    so for any $x\in \mathbb{R}$ it holds:
    \begin{align}
        \frac{ f_{m',\sigma}(x)}{f_{m,\sigma}(x)}\leq \frac{1-F_{m',\sigma}(x)}{1-F_{m,\sigma}(x)}.
    \end{align}
    Putting together the two inequalities yields to:
    \begin{align}
        \frac{F_{m',\sigma}(x)}{F_{m,\sigma}(x)}\leq \frac{1-F_{m',\sigma}(x)}{1-F_{m,\sigma}(x)} \Rightarrow F_{m,\sigma}(x)\ge F_{m',\sigma}(x),\hspace{0.2 cm}\forall x
    \end{align}
    Now we tackle Beta distributd rvs. Integrating with respect to $x_0$ up to $x_1$ we obtain:
   \begin{align}
   f_{\alpha,\beta}(x_1)\int_{0}^{x_1}f_{\alpha',\beta'}(x_0) dx_{0} &\leq f_{\alpha',\beta'}(x_1)\int_{0}^{x_1}f_{m,\sigma}(x_0)dx_0\\
   f_{\alpha,\beta}(x_1)F_{\alpha',\beta'}(x_1)&\leq f_{\alpha',\beta'}(x_1)F_{\alpha,\beta}(x_1),
   \end{align}
   so for any $x\in \mathbb{R}$ it holds:
   \begin{align}
       \frac{F_{\alpha',\beta'}(x)}{F_{\alpha,\beta}(x)}\leq \frac{f_{\alpha',\beta'}(x)}{f_{\alpha,\beta}(x)}.
   \end{align}
   Integrating with respect to $x_1$ up to $x_0$ we obtain:
   \begin{align}
       f_{\alpha',\beta'}(x_0) \int_{x_0}^{1} f_{\alpha,\beta}(x_1)dx_1&\leq f_{\alpha,\beta}(x_0) \int_{x_0}^{1}f_{\alpha',\beta'}(x_1) dx_1\\
        f_{\alpha',\beta'}(x_0)(1-F_{\alpha,\beta}(x_0))&\leq f_{\alpha,\beta}(x_0)(1-F_{\alpha',\beta'}(x_0))
   \end{align}
    so for any $x\in \mathbb{R}$ it holds:
    \begin{align}
        \frac{ f_{\alpha',\beta'}(x)}{f_{\alpha,\beta}(x)}\leq \frac{1-F_{\alpha',\beta'}(x)}{1-F_{\alpha,\beta}(x)}.
    \end{align}
    Putting together the two inequalities yields to:
    \begin{align}
        \frac{F_{\alpha',\beta'}(x)}{F_{\alpha,\beta}(x)}\leq \frac{1-F_{\alpha',\beta'}(x)}{1-F_{\alpha,\beta}(x)} \Rightarrow F_{\alpha,\beta}(x)\ge F_{\alpha',\beta'}(x),\hspace{0.2 cm}\forall x.
    \end{align}
    Thus obtaining the statements.
\end{proof}

\clearpage
\section{Why Enforcing Exploration is Generally Beneficial?}\label{apx: explore}
Although the optimal value for the exploration parameter for each instance (i.e., $\sigma_{\bm{\mu}}(T)$) depends on the instance itself\footnote{We highlight that this is common in the forced exploration literature (see, \citet{garivier2016etc}, Chapter 6 of \citet{lattimore2020bandit}), and is not specific of our algorithms.} and the quantities needed to compute it might not be accessible to the learner, the result from Theorem \ref{thm:lb} and Corollary \ref{thm:noregret} , makes clear why, even without the knowledge of these quantities, enforcing exploration is beneficial. In fact, although forced exploration may add a source of regret for some instances, it will ensure enough exploration to be able to have tighter guarantees for a wider range of problems. For example, setting an arbitrary forced exploration parameter $\Gamma=\Lambda$ would guarantee enough exploration for the algorithms to properly face all the instances from $\mathcal{M}_{\Lambda}$. Furthermore, these results also provide a useful criterion for setting the exploration parameter $\Gamma$, i.e., whenever the policymaker thinks the class of problems is hard to learn should set $\Gamma$ high, to ensure that the algorithms can learn a wider range of instances with guarantees on the regret, viceversa  whenever the policymaker expects to face simple problems should set $\Gamma$ low. In Appendix \ref{sec:SWapproach}, we argue that whenever the policy-maker is worried about the underestimation of the exploration parameter $\Gamma$ can employ a sliding-window approach. 
\clearpage
\section{Numerical Simulations Parameters and Reproducibility Details}\label{app:ex}
\subsection{Parameters}
The choices of the parameters are those suggested by the authors:
\begin{itemize}
    \item Rexp3: $V_T=K$ as we've considered bounded rewards within zero and the maximum global variation possible is equal to the number of arms of the bandit; $\gamma=\min \left\{1, \sqrt{\frac{K \log K}{(e-1) \Delta_T}}\right\}, \Delta_T=$ $\left\lceil(K \log K)^{1 / 3}\left(T / V_T\right)^{2 / 3}\right\rceil$  \cite{besbes2014stochastic};
    \item KL-UCB: $c=3$ as required by the theoretical results on the regret provided by \cite{garivier2011kl};
    \item Ser4: according to what suggested by \cite{allesiardo2017nonstationary} we selected $\delta=1 / T, \epsilon=\frac{1}{K T}$, and $\phi=\sqrt{\frac{N}{T K \log (K T)}}$;
    \item SW-UCB: as suggested by \cite{garivier2011kl} we selected the sliding-window $\tau=4 \sqrt{T \log T}$ and the constant $\xi=0.6$;
    \item SW-KL-UCB as suggested by \citet{garivier2011upper} we selected the sliding-window $\tau=\sigma^{-4 / 5}$;
    \item SW-TS: as suggested by \cite{trovo2020sliding} for the smoothly changing environment we set $\beta=1 / 2$ and sliding-window $\tau=T^{1-\beta}=\sqrt{T}$. Even though the paper handle just Bernoulli rewards, we will choose the same order for the window length also for \texttt{$\gamma$-SWGTS}.
    \item R-ed-UCB: the window is set as $h_{i, t}=\left\lfloor\epsilon N_{i, t-1}\right\rfloor$ as suggested by the authors \cite{metelli2022stochastic}, $\epsilon \in (0,\frac{1}{2})$, being $N_{i, t-1}$ the numbers of plays of the $i-th$ arm up to time $t$.
\end{itemize}

\subsection{Environment}
To evaluate the algorithms in the rested setting with $K=15$ arms over a time horizon of $T=100,000$ rounds. The payoff functions $\mu_i(\cdot)$ have been chosen in these families:
\begin{align}
 & F_{\exp } = \left\{f(n) \ | \ f(n)=c\left(1-e^{-a n}\right)\right\},\\
 & F_{\text {poly }} = \left\{f(n) \ | \ f(n)=c\left(1-b\left(n+b^{1 / \rho}\right)^{-\rho}\right)\right\},
\end{align}
where $a, c, \rho \in(0,1]$ and $b \in$ $\mathbb{R}_{\geqslant 0}$ are parameters, whose values have been selected randomly. The complete settings and function selection method, in compliance with what has been presented by~\cite{metelli2022stochastic}, have been provided in the attached code.

\subsection{Experimental Infrastructure} \label{app:reproducibility}

In this section, we provide additional information for the full reproducibility of the experiments provided in the main paper.

The code has been run on an AMD Ryzen $7$ $4800$H CPU with $16$ GiB of system memory.
The operating system was Windows $11$, and the experiments have been run on Python $3.8$.
The libraries used in the experiments, with the corresponding versions, were:
\begin{itemize}
	\item \texttt{matplotlib==3.3.4}
	\item \texttt{tikzplotlib==0.10.1}
	\item \texttt{numpy==1.20.1}
\end{itemize}

On this architecture, the average execution time of each algorithm takes an average of $\approx 20-40$ sec both in the synthetic environment for a time horizon of $T = 100,000$ and int the IMDB environment for $T=50000$. 

\subsection{15-arms Numerical Simulation Results}\label{apx:comparison222}
The results of the numerical simulation presented in Section~\ref{sec:Experiments} are reported in Figure~\ref{fig:15armsfull}. In this case we have performed $20$ runs, in the semi-transparent areas are reported the standard deviations. The results show how the methods that have been designed for the restless case perform worse than the one we presented in our paper. The only exception is the \texttt{Beta-SWTS} that we showed to also have nice theoretical properties in the SRRB setting. Overall, the comparison with such methods do not invalidate the conclusions we drew in the main paper.

\begin{figure}[th!]
    \centering
    \includegraphics{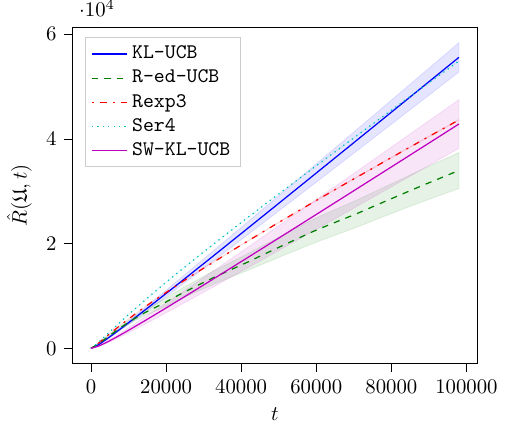}
    \caption{Regret in the $15$-arm environment.}
    \label{fig:15armsfull}
\end{figure}

\subsection{IMDB Experiment}\label{apx:imdb}
SRRBs are a powerful tool to model a lot of real-world challenges. In particular, we focus on concave rising rested bandits that are suitable to describe \emph{recommendation systems}, in which you should be able to model satiation effects \cite{clerici2023linear,xu2023online}. In particular, in this scenario, bandit algorithms serve as meta-algorithms. In fact, we will consider the learning algorithms (such as, logistic regression, neural networks and OGD) as different arms, and the expected reward will be based on the score given by IMDB. We refer the reader to \cite[][Appendix E.2]{metelli2022stochastic} for the details of the setting.
\begin{figure}[t!]
\centering
\includegraphics{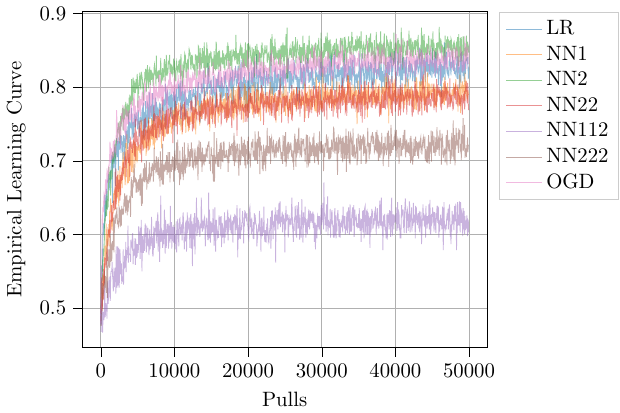}
  \captionof{figure}{Empirical Learning Curve for each algorithm.}
  \label{fig:lc}
\end{figure}
Specifically, we consider a constant learning rate for Logistic Regression ($\lambda_t=1$). Moreover, the NNs use as activation functions the rectified linear unit, i.e., $\text{relu}(x)=\max\{0,x\}$, a constant learning rate $\alpha=0.001$ and the Adam stochastic gradient optimizer for fitting. Two of the chosen nets have only one hidden layer, with 1 and 2 neurons, respectively, the third net has 2 hidden layer, with 2 neurons each, and two nets have 3 layers with 2,2,2 and 1,1,2 neurons, respectively. We refer to a specific NN denoting  the cardinalities of the layers, e.g., the one having 2 layer with 2 neurons each is denoted by NN22. We analyzed their global performance on the IMDB dataset by averaging 1,000 independent runs in which each strategy is
sequentially fed with all the available 50,000 samples.
The learning curves are depicted in Figure \ref{fig:lc}. The experiments will be organized along this line:
\begin{figure}[t!]
\centering
\begin{minipage}{1\textwidth}
  \centering
  \includegraphics{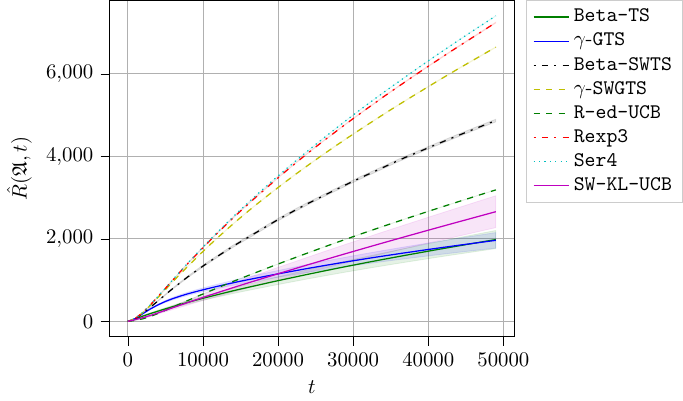}
  \captionof{figure}{Average Cumulative Regret for the first environment (original IMDB experiment).}
  \label{fig:imdb1}
\end{minipage}%
\\
\begin{minipage}{1\textwidth}
  \centering
  \includegraphics{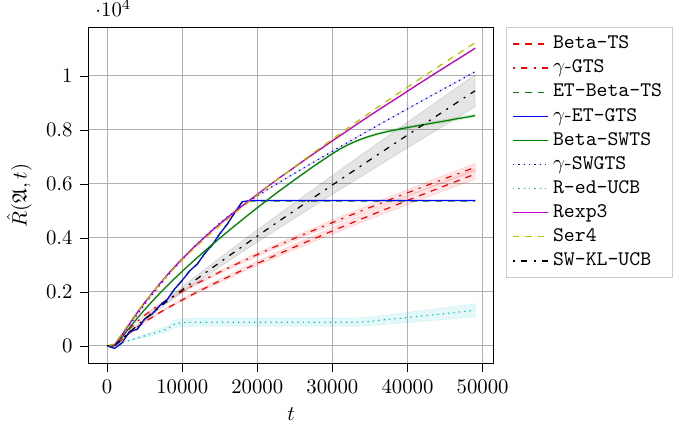}
  \captionof{figure}{Average Cumulative Regret for the second environment (IMDB experiment with additional arms).}
  \label{fig:imdb2}
\end{minipage}
\end{figure}
first, we run all the algorithms against we are competing comparing them with the results of conventional TS algorihms, with $T$ time horizon set to 50000. The experiment shows that the algorithms devised for the standard bandit are competitive, meaning that the environment is not too challenging (coherent with our analysis, i.e., $\sigma$ is low). For the second experiment, we add two artificial arms (the details on the artificial arm are provided in the code) to make the environment more challenging for the algorithms. We provide the average cumulative regret and the standard deviation in the semi-transparent areas.
\paragraph{Experiment 1, Figure \ref{fig:imdb1}}
The experiment shows that, as we have discussed in the paper, conventional TS algorithms can be still effective whenever the complexity induced by the term 
$\sigma$ is low enough. In fact \texttt{Beta-TS} and standard $\gamma$-\texttt{GTS} outperform all the algorithms, even those designed explicitly for Rising Rested bandits, i.e., \texttt{R-ed-UCB}. Compared to the results presented in the main paper, here we report additional baselines.

\paragraph{Experiment 2, Figure \ref{fig:imdb2}}
We add two arms whose evolution challenges the effectiveness of standard bandit algorithms (details are provided in the attached code), i.e., such that a large number of pulls is needed to inferentially estimate the best arm from the other. One of these new two arms is the optimal one. In this experiment we set the forced exploration again equal to
$2000$, and the sliding window set as suggested in Appendix \ref{app:ex}. This experiment shows that the conventional algorithms start to fail, as well as the algorithms devised to face restless settings. Conversely, \texttt{R-ed-UCB} and the forced exploration TS-like algorithms are able to consistently select the best arm. Differently to what happens for the synthetic setting, \texttt{R-ed-UCB} is the best performing algorithm (note that  the presence of the additional arms makes the learning problem different compared to the original IMDB setting), however, the TS-like algorithms with forced exploration, shows a more pronounced flattening of the curve, so that for large enough time horizons $T$ we expect them to cath up.

\clearpage
\section{About the Computational Complexity}\label{apx:cc}
There are three main approaches to handle dynamic environments.

\paragraph{Discounting} (\citet{qi2023discounted,garivier2011upper}): giving different weights to the gathered examples, so that some samples can be more important for the learner. This class of methods need to re-compute the average values at every turn for every arm, by updating the weight of all the samples collected during the learning, so that the complexity would scale as $O(t)$. However, the most common and exploited version of the discounted methods can reach $O(1)$ complexity per round per arm in the update phase, as it updates the mean value by a multiplication, so that the overall complexity per round is $O(K)$, being $K$ the number of arms. 

\paragraph{Change detection methods} (\citet{liu2018change,besson2019generalized}): that at every turn $t$ run a ratio test routine to infer if the distribution is changed, operation that is computationally expensive as it is requested most of the times to compute threshold values that need to be evaluated for every possible subset of samples gathered within $[s,t]$ with $s\in[1,t]$ (see for example Definition 1 of \citet{besson2019generalized}), so that a time complexity of $O(t)$ per round is unavoidable.

\paragraph{Sliding Window approaches} (\citet{trovo2020sliding}): are those approaches in which we need to remove the effect of the oldest observation. This requires mantaining a deque to store rewards per arm, leading to efficient removal in $O(1)$ per arm, and therefore to an overall time complexity per round of order $O(K)$, as the sampling itself for the Thompson sample is also $O(1)$ per arm and finding the maximum is $O(K)$.  We highlight that is the same time complexity per round of the standard Thompson Sampling used for stationary setting.

Thus, Sliding-Window approaches for Thompson Sampling are one of the most competitive approaches in literature even for what regard time complexity. 




\end{document}